\DeclareMathAlphabet{\mathsf}{OT1}{cmss}{m}{n}
\SetMathAlphabet{\mathsf}{bold}{OT1}{cmss}{bx}{n}
\newcommand{\aL}{\cL_{\mathrm{adv}}}
\providecommand{\norm}[1]{\|#1\|}
\begin{document}

\title{\huge \bf{Inductive Bias of Gradient Descent based Adversarial Training on Separable Data}}

\author{
Yan Li,  Ethan X.Fang,  Huan Xu,  Tuo Zhao \footnote{Yan Li, Huan Xu and Tuo Zhao are affiliated with School of Industrial and Systems Engineering at Georgia Institute of Technology; Ethan X.Fang is affiliated with Department of Statistics at Pennsylvania State University; Tuo Zhao is the corresponding author; Email: tourzhao@gatech.edu.}
}
\date{\vspace{-5ex}}

\maketitle

\begin{abstract}

Adversarial training is a principled approach for training robust neural networks.
Despite of tremendous successes in practice, its theoretical properties still remain largely unexplored.
In this paper, we provide new theoretical insights of gradient descent based adversarial training by studying its computational properties, specifically on its inductive bias.
We take the binary classification task on linearly separable data as an illustrative example, where the loss asymptotically attains its infimum as the parameter diverges to infinity along certain directions.
Specifically, we show that when the adversarial perturbation during training has bounded $\ell_2$-norm,  the classifier learned by gradient descent based adversarial training converges in direction to the maximum $\ell_2$-norm margin classifier at the rate of $\tilde{\mathcal{O}}(1/\sqrt{T})$, significantly faster than the rate $\mathcal{O}\rbr{1/\log T}$ of training with clean data. In addition, when the adversarial perturbation during training has bounded $\ell_q$-norm for some $q\ge 1$, the resulting classifier converges in direction to a maximum mixed-norm margin classifier, which has a natural interpretation of robustness, as being the maximum $\ell_2$-norm margin classifier under worst-case $\ell_q$-norm perturbation to the data.  Our findings provide theoretical backups for adversarial training that it indeed promotes robustness against adversarial perturbation.
\end{abstract}

\section{Introduction}

Deep neural networks have achieved remarkable success on various tasks, including visual and speech recognitions, with intriguing generalization abilities to unseen data \citep{Krizhevsky:imagenet,hinton:dlspeech}. 
One salient feature of deep models is its overparameterization, with the number of parameters several orders of magnitude larger than the training sample size. 
As a consequence of such overparameterization, it is likely that the empirical loss function, in addition to being non-convex, can  have substantial amount of  global minimizers \citep{choromanska:landscape5}, while only a small subset of global minimizers have the desired generalization properties \citep{shwartz:sgd_linearsep}.

Contrary to the worst-case reasoning above, researchers have observed that simple first-order algorithm such as Stochastic Gradient Descent (SGD) \footnote{In conjunction with Dropout \citep{Srivastava:dropout} and Batch Normalization \citep{Ioffe:batch_norm}}, 
performs surprisingly well in practice, even without any explicit regularization terms in the objective function \citep{chiyuan:understanding}.
Inspired by classical computational learning theories, one plausible explanation of such a remarkable phenomenon is that the training algorithm enjoys some implicit bias. That is, the training algorithm tends to converge to certain kinds of solutions \citep{nayshabur:realbias, neyshabur:norm_capacity}, and SGD  converges to low-capacity solutions with the desired generalization property \citep{shwartz:sgd_linearsep}. Recently, some exciting works have related  the implicit bias to specific first-order algorithms \citep{wilson:bias_algorithm}, stopping time \citep{Hoffer:bias_longer}, and optimization geometry \citep{gunasekar:implicitbias, nitishi:bias_batch}. 
Some practical suggestions based on these findings have also been proposed to further improve the generalization ability of deep networks \citep{Neyshabur:bias_pathsgd}.

Despite the aforementioned phenomenal  success achieved by deep neural networks, 
it is  observed that adversarially constructed small perturbation to the input can potentially  fool the network into making wrong predictions with high confidence
 \citep{szegedy:advexample, goodfellow:advexample}. 
This issue raises serious concerns about using neural network for some security-sensitive tasks \citep{nicolas:advattack_security}. Researchers have devised various mechanisms to generate and defend against adversarial perturbations \citep{goodfellow:advexample, seyed:deepfool,carlini:cw, anish:obfuscated_gradient, kaiming:featuredenoising, papernot:distillation}. 
However, most of the defense mechanisms are heuristic or ad-hoc, which lack principled theoretical justification
 \citep{carlini:distillationnotenough, He:emsembledefensenotenough}. 
Inspired by  literatures in  robust optimization  \citep{wald:robuststat, bental:robustopt}, \citet{schapire:robustness, madry:robust} formalize the notion of achieving adversarial robustness (i.e., having small adversarial risk) as solving the  following minimax optimization problem
\begin{align}\label{formulation:advrisk}
\min_{\theta \in \mathbb{R}^d } \cL_{\mathrm{adv}}^{\mathrm{E}}(\theta) = \min_{\theta \in \mathbb{R}^d } \mathbb{E}_{(x,y) \sim \mathcal{D}} \Big[\max_{\delta \in \Delta} \ell(\theta, x+\delta, y)\Big],
\end{align}
where $\Delta$ is the set that each sample could be contaminated by arbitrary perturbation chosen within this set.
As a common practice, adversarial training refers to the  finite-sample empirical version of~\eqref{formulation:advrisk} without access to the underlying distribution $\mathcal{D}$ that
\begin{align}\label{formulation:advtrain}
\min_{\theta \in \mathbb{R}^d} \cL_{\mathrm{adv}}(\theta) = \min_{\theta \in \mathbb{R}^d } \sum_{i=1}^N \max_{\delta_i \in \Delta} \ell(\theta, x_i +\delta, y_i).
\end{align}

A commonly adopted approach to solving \eqref{formulation:advtrain} is the the Gradient Descent based Adversarial Training (GDAT) method. At each iteration, GDAT first solves the inner maximization problem (approximately) for adversarial perturbations, and then uses the gradient of the loss function evaluated at the perturbed samples to perform a gradient descent step on the parameter $\theta$.
A natural question is then how adversarial training helps the trained model in achieving adversarial robustness. Some recent theoretical results partially answer this question, such as  deriving adversarial risk bound \citep{anish:obfuscated_gradient}, relating it to the distributionally robust optimization \citep{duchi:dro}, and characterizing trade-offs between robustness and accuracy via regularization \citep{zhang:advrobustness}.

Yet, all existing results neglect the algorithmic effect during the training process in promoting adversarial robustness.
Inspired by the significant role of algorithmic bias in the generalization of neural networks, it is natural to ask
\begin{center}
\textit{\textbf{Does gradient descent based adversarial training  enjoy any implicit bias property? \\
If so, does the implicit bias provide insights on how adversarial training promotes robustness?}}
\end{center}
Motivated by these  questions, in this paper, we study the algorithmic effect of adversarial training by investigating the implicit bias of GDAT. 
Due to current technical limits in directly analyzing deep neural networks, we analyze a simpler model, with the key characteristics that the model  overfits the training data while being able to generalize well.  
Specifically, we  take the binary classification with linearly separable data as an example.
This helps us focus on the effect of implicit bias without dealing with  complicated structures of neural networks.

\noindent {\bf Main Contributions.} We summarize our main theoretical findings below.
\vspace{-0.1 in}
\begin{itemize}[leftmargin=*]
 \item When the perturbation is bounded by $\ell_2$-norm, i.e., $\Delta = \{\delta \in \mathbb{R}^d: \norm{\delta}_2 \leq c \}$, with proper choice of $c$, the gradient descent based adversarial training is directionally convergent that $\lim_{t \to \infty} \frac{\theta^t}{\norm{\theta^t}_2} = u_2$, where $u_2$ is the maximum $\ell_2$-norm margin hyperplane (i.e., standard SVM) of the training data. In addition, the corresponding rate of convergence  is $\tilde{\mathcal{O}}(1/\sqrt{T})$\footnote{$\tilde{\cO}$ hides logarithmic factor.}, which is exponentially faster than the rate $\mathcal{O}\rbr{1/\log T}$ when we use  standard clean training, i.e., training with clean data using gradient descent (GD).
 Based on this, we establish that the convergence of training loss on clean data using GDAT is almost exponentially faster than standard clean training using GD.
\item  When the perturbation is bounded by $\ell_q$-norm for $q\ge 1$, i.e., $\Delta = \{\delta \in \mathbb{R}^d: \norm{\delta}_q \leq c \}$, with proper choice of $c$, the gradient descent based adversarial training is directionally convergent that $\lim_{t \to \infty} \frac{\theta^t}{\norm{\theta^t}_2} = u_{2,q}$, where $u_{2,q}$  is the maximum mixed-norm margin hyperplane of the training data. We further   reveal  natural interpretation of robustness that we obtain the maximum $\ell_2$-norm margin classifier under worst-case $\ell_q$-norm perturbation. 
\end{itemize}
\vspace{-0.1 in}



\noindent {\bf Paper Organization.} The rest of the paper is organized as follows: Section \ref{section:problem} presents background of the problem and specifies GDAT of binary classification and discusses several related works.
Section \ref{section:theory} presents theoretical results on the implicit bias of gradient descent based adversarial training.
Section \ref{section:experiment} provides numerical experiments to backup our theoretical findings.
We conclude in Section \ref{section:conclusion} and discuss future directions.
Some technical proofs are deferred to the appendix.

\noindent \textbf{Notations.} 
For two vectors $x,y \in \mathbb{R}^d$, 
$\inner{x}{y} = \sum_{j=1}^d x_j y_j $ denotes their  Euclidean inner product. 
For a vector $\theta \in \mathbb{R}^d$, $\norm{\theta}_p$ defined by $\norm{\theta}_p^p = \sum_{j=1}^d |\theta_j|^p $ denotes its  $p$-norm for $p \in [1, \infty)$, and $\norm{\theta}_\infty = \max_{j \in [d]} |\theta_j|$, where $[d] = \{1, \ldots, d \}$. For any general norm $\norm{\cdot}$, we denote its dual norm by $\norm{x}_* = \max_{\norm{y} \leq 1} \inner{x}{y}$. 
The sign function is $\sign(v) = \mathbbm{1}_{(v \geq 0)} - \mathbbm{1}_{(v < 0)}$.
For a linear subspace $L \in \mathbb{R}^d$, we denote its orthogonal subspace by $L^\perp$.


\section{Background}\label{section:problem}

We  consider a binary classification problem using a   dataset $\mathcal{S} = \{(x_i, y_i)\}_{i=1}^n  \subset \mathbb{R}^d \times \{-1, +1\}$. We aim to learn  a linear decision boundary $f(x) = \inner{\theta}{x}$ and its associated classifier $\hat{y}(x) = \sign \rbr{f(x)}$, by solving the  empirical risk minimization problem:
\begin{align}\label{formulation:genericlogistic}
 \min_{\theta \in \mathbb{R}^d} \cL(\theta; \mathcal{S}) = \min_{\theta \in \mathbb{R}^d} \sum_{i=1}^n \ell(y_i x_i^\top \theta), 
\end{align}
where $\ell (\cdot)$ is some surrogate loss function for the $0$-$1$ loss. 

In what follows, we suppress the explicit presentation of $\mathcal{S}$ when the context is clear, and
 we focus on the exponential loss $\ell(r) = \exp (-r)$.  We point out that our analysis can be further extended  to other smooth loss functions with tight exponential tail such as logistic loss. 

We assume the dataset $\mathcal{S}$ is linearly separable, i.e., there exists $\overline{u}$ such that $\min_{i \in [n]} y_i x_i^\top \overline{u} > 0$. 
Under this assumption, one notable feature of  problem \eqref{formulation:genericlogistic} is that there is no finite minimizer, and $\cL(\theta) \to 0$ only if $\norm{\theta}_2 \to \infty$ along certain directions. 
In fact,  there is a polyhedral cone $\mathcal{C}$, such that for any $u \in \mathcal{C}$, we have $ \lim_{a \to \infty} \cL(a \overline{u}) = 0$.

Several recent results have studied the implicit bias of gradient descent algorithm on separable dataset. 
\citet{Soudry:implicitbias_lp} study the implicit bias of the gradient descent algorithm (GD) on \eqref{formulation:genericlogistic}, and show that $\lim_{t \to \infty} \norm{\theta^t}_2 = \infty$, while $\theta^t$ converges in direction to the maximum $\ell_2$-norm margin classifier (i.e., the standard SVM). 
\citet{nacson} show that the normalized gradient descent can achieve faster directional convergence, with a rate similar to GDAT with $\ell_2$-norm perturbation.
\citet{matus:riskandparameter} further study the standard risk and parameter convergence under more general setting when the data is not separable. They specifically characterize the parameter convergence along a pair of complementary subspaces, with one corresponding to strong convexity and one corresponding to separability, and they show implicit bias of gradient descent if the latter .
\citet{matus:gdaligns} and \citet{jasonlee:cnn} study the implicit bias for training deep linear network and linear convolutional networks, respectively.
\citet{gunasekar:implicitbias} also analyze the implicit bias of steepest descent in general norm $\norm{\cdot}$, and show that $\theta^t$ converges in direction to the maximum $\norm{\cdot}_*$-norm margin hyperplane. 

Throughout this paper, we assume the perturbation set is an $\ell_q$-norm ball with radius $c$, i.e.,~$\Delta = \{\delta \in \mathbb{R}^d: \norm{\delta}_q \leqslant c \}$.
Under the general framework of adversarial training in \eqref{formulation:advtrain},
we aim to  minimize the empirical adversarial risk
\begin{align}\label{formulation:adv_raw}
\min_{\theta \in \mathbb{R}^d} \cL_{\mathrm{adv}}(\theta)  =
\min_{\theta \in \mathbb{R}^d} \frac{1}{n} \sum_{i = 1}^n \max_{\delta_i \in \Delta} \exp \rbr{-y_i(x_i + \delta_i)^\top \theta}.
\end{align}

\begin{wrapfigure}{R}{0.55\textwidth}
\begin{minipage}{0.55\textwidth}
\vspace{-0.15in}
\begin{algorithm}[H]
    \caption{Gradient Descent based Adversarial Training (GDAT) with $\ell_q$-norm Perturbation }
    \label{alg:algorithm-label}
    \begin{algorithmic}\label{alg:gdat}
    \STATE{\textbf{Input:} Number of iterations $T$, perturbation level $c$, stepsizes $\{\eta^t\}_{t=0}^T$, samples $\{x_i,y_i\}_{i=1}^n$.}
    \STATE{\textbf{Initialize:} $\theta^0 \leftarrow 0$.}
    \FOR{$t=0, \ldots, T-1$}
    \FOR{$i =1, \ldots, n$}
    \STATE{Compute $\delta_i^t = cy_i \argmin_{\norm{\delta}_q \leq 1} \inner{\delta}{\theta^t}$}
    \STATE{Let $(\tilde{x}_i^t, y_i) \leftarrow (x_i + \delta_i^t, y_i)$.}
    \ENDFOR
    \STATE{$\theta^{t+1} \leftarrow \theta^t - \frac{\eta^t}{n} \sum_{i=1}^n \exp \rbr{-y_i \tilde{x}_i^\top \theta^t} \rbr{-y_i \tilde{x}_i}$.}
    \ENDFOR
    \end{algorithmic}
\end{algorithm}
\vspace{-0.3in}
\end{minipage}
\end{wrapfigure}
Note that, given any $\theta$, the inner maximization problem in \eqref{formulation:adv_raw} admits a closed form solution. 
Then the gradient descent based adversarial training (GDAT) algorithm runs iteratively 
that
at the $t$-th iteration, we first solve the inner maximization problem by deriving the worst adversarial perturbation of each sample. It is not difficult to see that for each sample, the worst perturbation is $\delta_i^t = cy_i\delta_t^*$, where $\delta_t^* = \argmin_{\delta: \norm{\delta}_q \leq 1}\inner{\delta}{\theta^t}$. Then, letting each sample's perturbed counterpart be $(\tilde{x}_i^t, y_i) = (x_i + \delta_i^t, y_i)$, we take gradient of the loss function evaluated at the perturbed samples and perform a gradient descent step, i.e., $\theta^{t+1} = \theta^t - \eta^t \nabla_{\theta} \cL \rbr{\theta^t; \{(\tilde{x}_i^t, y_i)\}_{i=1}^n)}$, where $\eta^t>0$ is some prespecified  stepsize.
We present the outline of GDAT in Algorithm \ref{alg:gdat}.

During the review process of this manuscript, we learn that \citet{charles2019convergence} also study the gradient descent based adversarial training on separable data, and establish the convergence of margin of GDAT similar to Theorem \ref{thrm:l2parameter} in our manuscript. Our focus is to  characterize the implicit bias of GDAT under $\ell_2$-norm perturbation and generalize to $\ell_q$-norm perturbation, and connect the implicit bias of GDAT to robustness. 
On the other hand, \citet{charles2019convergence} focus on the complexity of adversarial training using different update rules, and show that gradient update rule (GDAT) can obtain a margin value with exponentially smaller number of iterations, compared to using an empirical risk minimizer as the update rule.
\section{Theoretical Results}\label{section:theory}

In this section, we show that the GDAT algorithm possesses implicit bias, which depends on the perturbation set during training. We provide explicit characterization of the implicit bias, and further conclude that such implicit bias indeed promotes robustness against adversarial perturbation.


Let us start with some definitions. Consider a  dataset $\mathcal{S} = \{(x_i, y_i)\}_{i=1}^n \subset \mathbb{R}^d \times \{-1, +1\}$. 
Given $p,q > 0$ such that $1/p + 1/q=1$, the $\ell_q$-norm margin of $H_\theta$ on $\mathcal{S}$ is defined as
$\gamma_q(\theta) = \min_{i \in [n]} {y_i x_i^\top \theta}/{\norm{\theta}_p}$. 
Note that for $x_i \in \mathbb{R}^d$,
 $\lvert \theta^\top x \rvert/\norm{\theta}_p$ measures the $\ell_q$ distance between $x_i$ and the hyperplane $H_\theta = \{x \in \mathbb{R}^d: ~ \theta^\top x = 0\}$. 
Since $y_i \in \{-1, +1\}$, when $H_\theta$ correctly classifies all samples, $\gamma_q(\theta)$ measures the minimal $\ell_q$ distance between the samples in $\mathcal{S}$ and $H_\theta$.
Given that $\gamma_q(\theta)$ is scale-invariant with respect to $\theta$,  without loss of generality, we  restrict $\norm{\theta}_p = 1$.
We  also  identify the hyperplane $H_\theta$ by its normal vector $\theta$.
\begin{definition}\label{def:margin}
For $p, q>0$ with $1/p + 1/q = 1$, the maximum $\ell_q$-norm margin hyperplane $u_q$ of $\mathcal{S} = \{(x_i, y_i)\}_{i=1}^n  \subset \mathbb{R}^d \times \{-1, +1\}$ and its associated $\ell_q$-norm margin $\gamma_q$ are  defined as
\begin{align}\label{def:svm}
 u_q \in \argmax_{\norm{\theta}_p = 1 } \min_{i \in [n]} y_i x_i^\top \theta,
 ~~ \gamma_q = \max_{\norm{\theta}_p = 1 } \min_{i \in [n]} y_i x_i^\top \theta.
\end{align}
We denote $\mathrm{SV}(\mathcal{S})$ as the support vectors of $\mathcal{S}$, i.e., $\mathrm{SV}(\mathcal{S}) =\argmin_{(x,y) \in \mathcal{S}} \inner{u_q}{y x} $.
\end{definition}
By the separability assumption, $u_q$ is an optimal hyperplane  that correctly classifies all samples with the maximal margin $\gamma_q> 0 $.
Next, by the notion of margin defined above, we characterize the landscape of empirical adversarial risk in \eqref{formulation:adv_raw} based on the perturbation level $c$.
\begin{proposition}\label{prop:landscape}
Let $p,q > 0$ satisfy $1/p + 1/q = 1$.
Given a nonnegative scalar $c$, where $0 \le c < \gamma_q = \max_{\norm{\theta}_p \leq 1} \min_{i \in [n]} y_i x_i^\top \theta$, problem \eqref{formulation:adv_raw} has infimum $0$ but does not admit a finite minimizer. 
When $c> \gamma_q$, problem \eqref{formulation:adv_raw} has a unique finite minimizer $\hat{\theta}(c)$, and is equivalent to the standard clean training with explicit $\ell_p$-norm regularization. That is, there exists $\lambda(c) > 0$ such that
\begin{align*}
\hat{\theta}(c) = \argmax_{\theta \in \mathbb{R}^d} \frac{1}{n} \sum_{i=1}^n \exp(-y_i x_i^\top \theta ) + \lambda(c) \norm{\theta}_p. 
\end{align*}
\end{proposition}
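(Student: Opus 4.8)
The plan is to first eliminate the inner maximization in closed form, and then read off the entire landscape from the resulting explicit objective. Since $y_i\in\{-1,+1\}$ and $\min_{\norm{\delta}_q\le 1}\inner{\delta}{v} = -\norm{v}_p$ by definition of the dual norm, the worst-case perturbation gives $\max_{\norm{\delta_i}_q\le c}\exp(-y_i(x_i+\delta_i)^\top\theta)=\exp(-y_i x_i^\top\theta + c\norm{\theta}_p)$, so that
\begin{align*}
\cL_{\mathrm{adv}}(\theta) = \frac1n\sum_{i=1}^n \exp\rbr{-y_i x_i^\top\theta + c\norm{\theta}_p} = \exp\rbr{c\norm{\theta}_p}\cdot R(\theta), \quad R(\theta) := \frac1n\sum_{i=1}^n\exp\rbr{-y_i x_i^\top\theta}.
\end{align*}
Each summand is the exponential of the convex function $\theta\mapsto -y_i x_i^\top\theta + c\norm{\theta}_p$, hence convex, so $\cL_{\mathrm{adv}}$ is convex and strictly positive everywhere.

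Next I would settle existence versus nonexistence of a minimizer through a dichotomy driven by the margin threshold $\gamma_q$ from Definition~\ref{def:margin}. For $0\le c<\gamma_q$, evaluating along the ray $\theta = a u_q$ (with $\norm{u_q}_p=1$ and $\min_i y_i x_i^\top u_q=\gamma_q$) gives $\cL_{\mathrm{adv}}(a u_q)\le \exp(-a(\gamma_q-c))\to 0$ as $a\to\infty$; since $\cL_{\mathrm{adv}}>0$ pointwise, its infimum equals $0$ and is never attained, so no finite minimizer exists. For $c>\gamma_q$, I would instead prove coercivity: bounding the sum below by its largest term yields $R(\theta)\ge \tfrac1n\exp(-\gamma_q\norm{\theta}_p)$ because $\min_i y_i x_i^\top(\theta/\norm{\theta}_p)\le \gamma_q$, whence $\cL_{\mathrm{adv}}(\theta)\ge \tfrac1n\exp((c-\gamma_q)\norm{\theta}_p)\to\infty$. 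Continuity together with coercivity then guarantees a minimizer $\hat{\theta}(c)$.

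For the regularization equivalence, the key observation is that minimizing $\cL_{\mathrm{adv}}$ is the same as minimizing its logarithm $c\norm{\theta}_p + \log R(\theta)$, whose stationarity condition reads $\nabla R(\theta) + cR(\theta)\,\partial\norm{\theta}_p\ni 0$ (using subgradients of the norm). This coincides exactly with the optimality condition $\nabla R(\theta) + \lambda\,\partial\norm{\theta}_p\ni 0$ of the $\ell_p$-regularized clean objective $R(\theta)+\lambda\norm{\theta}_p$ once we set $\lambda(c) = cR(\hat{\theta}(c))>0$; since both objectives are convex, matching stationarity makes $\hat{\theta}(c)$ a minimizer of the regularized problem, as claimed. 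The delicate points, which I expect to be the main obstacles, are (i) handling the nonsmoothness of $\norm{\cdot}_p$ via subdifferential calculus so that the two stationarity conditions genuinely align for every $p$, and (ii) establishing uniqueness: since a norm is never strictly convex, I would argue that on any segment of minimizers the strict convexity of $\exp$ forces each $-y_i x_i^\top\theta + c\norm{\theta}_p$ to be constant, and then invoke separability together with the geometry of the $\ell_p$ ball to rule out a nontrivial such segment.
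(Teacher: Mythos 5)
Your proof follows essentially the same route as the paper's: closed-form elimination of the inner maximization via the dual norm, the ray $\theta=au_q$ to drive the loss to its unattained infimum $0$ when $c<\gamma_q$, the coercivity bound $\cL_{\mathrm{adv}}(\theta)\ge \frac{1}{n}\exp((c-\gamma_q)\norm{\theta}_p)$ for existence when $c>\gamma_q$, and matching first-order optimality conditions with $\lambda(c)=cR(\hat{\theta}(c))$ (the paper compares the two KKT systems directly rather than via the logarithm, which is the same computation). Your flagged concern about uniqueness is fair but not a gap relative to the paper, which asserts uniqueness without proof; your sketch via strict convexity of $\exp$ forcing each exponent to be constant along any segment of minimizers is in fact more than the paper provides.
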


It is not difficult to see that  for $c<\gamma_q$,  any perturbed dataset $\tilde{\mathcal{S}} = \{(\tilde{x}_i, y_i)\}_{i=1}^n$, with $\norm{x_i - \tilde{x}_i}_q \leqslant c$ for all~$i$, is still linearly separable, which directly follows from the definition of $\gamma_q$ above. 
On the other hand, when $c > \gamma_q$, by the definition of $\gamma_q$, there exists some perturbed dataset $\tilde{\mathcal{S}} = \{(\tilde{x}_i, y_i)\}_{i=1}^n$,  with $\norm{x_i - \tilde{x}_i}_q \leqslant c$ for all $i$, such that $\tilde{\mathcal{S}}$ is no longer linearly separable. 

We remark that the connections between adversarial robustness and regularization for Support Vector Machine and Lasso have been discovered by \citet{huan:lasso, huan:svm}. However, their settings differ from ours in the sense that the regularized problem always have finite minimizer, in contrast, the adversarial risk \eqref{formulation:adv_raw} only has finite minizer when perturbation level $c$ is large enough.

\subsection{Adversarial Perturbation with Bounded $\ell_2$-Norm}\label{section:l2perturbation}
In this subsection, we analyze both the empirical adversarial risk convergence and the parameter convergence of the case when the perturbation set $\Delta$ in \eqref{formulation:adv_raw} is an $\ell_2$-norm ball with radius $c$.

\noindent \textbf{Adversarial Risk Convergence.}
We first analyze the convergence of empirical adversarial risk~\eqref{formulation:adv_raw} using GDAT. 
One substantial roadblock of minimizing \eqref{formulation:adv_raw} is its non-smoothness, in the sense that $\cL_{\mathrm{adv}}(\theta)$ is not differentiable at the origin, and its Hessian $ \nabla^2 \cL_{\mathrm{adv}}(\theta)$ explodes around the origin.  To address the challenge, our key observation is that, by the next lemma, at each iteration, there exists an acute angle between the update on $\theta^t$ and the maximum $\ell_2$-norm margin hyperplane $u_2$. This gives a lower bound on $\norm{\theta^t}_2$. 

\begin{lemma}\label{lemma:innerbound}
Take $\Delta = \{\delta \in \RR^d: ~ \norm{\delta}_2 \leq c\}$ in problem \eqref{formulation:adv_raw}.
Given $c < \gamma_2$, we have that $\inner{ -\nabla \cL_{\mathrm{adv}}(\theta)}{ u_2} \geq \cL_{\mathrm{adv}}(\theta)(\gamma_2 - c)>0$ for any $\theta \in \mathbb{R}^d$.
\end{lemma}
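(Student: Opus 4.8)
The plan is to first eliminate the inner maximization in closed form, reducing $\cL_{\mathrm{adv}}$ to an explicit function that is smooth away from the origin, and then to differentiate it directly. For the $\ell_2$ ball $\Delta = \{\delta: \norm{\delta}_2 \le c\}$, maximizing $\exp(-y_i(x_i+\delta_i)^\top\theta)$ over $\delta_i \in \Delta$ is the same as minimizing $y_i\delta_i^\top\theta$, whose minimum over $\norm{\delta_i}_2 \le c$ equals $-c\norm{\theta}_2$, attained at $\delta_i = -cy_i\theta/\norm{\theta}_2$ (this is exactly the worst-case perturbation $\delta_i^t$ in Algorithm~\ref{alg:gdat} with $q=2$). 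Substituting gives the clean identity
\[
\cL_{\mathrm{adv}}(\theta) = e^{c\norm{\theta}_2}\cdot\frac1n\sum_{i=1}^n \exp(-y_i x_i^\top\theta),
\]
so the adversarial loss is simply the clean exponential loss inflated by the positive scalar factor $e^{c\norm{\theta}_2}$.

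Next I would differentiate each summand $\exp(-y_i x_i^\top\theta + c\norm{\theta}_2)$, using $\nabla\norm{\theta}_2 = \theta/\norm{\theta}_2$, to obtain
\[
-\nabla\cL_{\mathrm{adv}}(\theta) = \frac1n\sum_{i=1}^n \exp(-y_i x_i^\top\theta + c\norm{\theta}_2)\left(y_i x_i - c\frac{\theta}{\norm{\theta}_2}\right).
\]
Taking the inner product with $u_2$ and noting $\norm{u_2}_2 = 1$ (here $p=q=2$), the bracketed vector contributes $y_i x_i^\top u_2 - c\,\theta^\top u_2/\norm{\theta}_2$. Two elementary bounds then close the argument: by Definition~\ref{def:margin} of the maximum margin, $y_i x_i^\top u_2 \ge \gamma_2$ for every $i$, and by Cauchy--Schwarz $\theta^\top u_2/\norm{\theta}_2 \le \norm{u_2}_2 = 1$. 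Hence each bracket is at least $\gamma_2 - c$, and since every exponential weight is strictly positive, factoring out $\gamma_2 - c$ yields $\inner{-\nabla\cL_{\mathrm{adv}}(\theta)}{u_2} \ge (\gamma_2 - c)\cL_{\mathrm{adv}}(\theta)$, which is strictly positive because $c < \gamma_2$ and $\cL_{\mathrm{adv}}(\theta) > 0$.

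The one point that needs care --- and the only real obstacle --- is the origin, where $\norm{\theta}_2$ and hence $\cL_{\mathrm{adv}}$ fail to be differentiable (this is precisely the non-smoothness flagged just before the lemma). The computation above is valid for every $\theta \ne 0$. At $\theta = 0$ I would read $\nabla\cL_{\mathrm{adv}}$ as any subgradient: since the subgradient of $c\norm{\cdot}_2$ at $0$ is the ball $\{v:\norm{v}_2 \le c\}$, the term $c\theta/\norm{\theta}_2$ is replaced by some $v$ with $\norm{v}_2 \le c$, for which $v^\top u_2 \le c$ still holds by Cauchy--Schwarz, so the same lower bound $\gamma_2 - c$ survives verbatim. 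This case is in any event harmless for the downstream analysis, since GDAT leaves the origin after the first step and the inequality is only invoked along the actual iterate trajectory.
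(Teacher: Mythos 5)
Your proposal is correct and follows essentially the same route as the paper's proof: identify the worst-case $\ell_2$ perturbation $\tilde{\delta}_i = -cy_i\theta/\norm{\theta}_2$, write the (sub)gradient as a positively weighted sum of $y_i x_i - c\theta/\norm{\theta}_2$, and lower-bound each term's inner product with $u_2$ by $\gamma_2 - c$ via the margin definition and $\norm{u_2}_2 = 1$. Your explicit subgradient treatment at $\theta = 0$ is a small refinement the paper defers to the proof of Corollary~\ref{corollary:normbound}, but it changes nothing of substance.
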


We highlight that despite its simple proof, Lemma \ref{lemma:innerbound} and its generalization to $\ell_q$-perturbation is a crucial step for  analyzing both adversarial risk and implicit bias. In addition, our  techniques here can also be adapted to simplify the proof of  Lemma $10$ in \citet{gunasekar:implicitbias}, which, in comparison, is more technically involved.

Since we initialize GDAT (Alg. \ref{alg:gdat}) using $\theta^0 = 0$,  any perturbation inside $\Delta$ will have no effect on the adversarial loss.  Hence we take clean samples as adversarial examples at the first iteration of GDAT. 
From Lemma \ref{lemma:innerbound}, we have the following simple corollary showing that our whole solution path $\{\theta^t\}_{t=1}^T$ is bounded away from the origin.

\begin{corollary}\label{corollary:normbound}
Let $\theta^0 = 0$ in Algorithm \ref{alg:gdat} with $q= 2$, we have:
$\norm{\theta^t}_2 \geq \eta^0 \gamma_2 $ for all $t \geq 1$.
\end{corollary}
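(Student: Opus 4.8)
The plan is to trade the Euclidean norm lower bound for a bound on the projection of $\theta^t$ onto the maximum-margin direction $u_2$, then show this projection is nondecreasing and already at least $\eta^0 \gamma_2$ after the first step. Since $q = 2$ forces $p = 2$, Definition \ref{def:margin} normalizes $\norm{u_2}_2 = 1$, so Cauchy--Schwarz gives $\norm{\theta^t}_2 \geq \inner{\theta^t}{u_2}$, and it suffices to track the scalar sequence $\inner{\theta^t}{u_2}$.

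First I would dispatch the base case $t = 1$. Because $\theta^0 = 0$, the inner maximization objective $\exp(-y_i(x_i+\delta)^\top \theta^0)$ is constant in $\delta$, so the perturbation has no effect and GDAT uses the clean samples, yielding $\theta^1 = \frac{\eta^0}{n}\sum_{i=1}^n y_i x_i$. Pairing with $u_2$ and applying the margin inequality $y_i x_i^\top u_2 \geq \gamma_2$ valid for every $i$ (by the definition of $u_2$ and $\gamma_2$ with $\norm{u_2}_2 = 1$), I obtain $\inner{\theta^1}{u_2} = \frac{\eta^0}{n}\sum_{i=1}^n y_i x_i^\top u_2 \geq \eta^0 \gamma_2$. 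Note the clean-sample step even produces the full margin $\gamma_2$ rather than $\gamma_2 - c$.

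For the inductive step I would use the update $\theta^{t+1} = \theta^t - \eta^t \nabla \cL_{\mathrm{adv}}(\theta^t)$, which gives $\inner{\theta^{t+1}}{u_2} = \inner{\theta^t}{u_2} + \eta^t \inner{-\nabla \cL_{\mathrm{adv}}(\theta^t)}{u_2}$. Lemma \ref{lemma:innerbound} supplies $\inner{-\nabla \cL_{\mathrm{adv}}(\theta^t)}{u_2} \geq \cL_{\mathrm{adv}}(\theta^t)(\gamma_2 - c) > 0$ under $c < \gamma_2$, and since $\eta^t > 0$ this shows $\inner{\theta^t}{u_2}$ is nondecreasing for $t \geq 1$. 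Chaining with the base case yields $\norm{\theta^t}_2 \geq \inner{\theta^t}{u_2} \geq \inner{\theta^1}{u_2} \geq \eta^0 \gamma_2$ for all $t \geq 1$, which is the claim.

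The only delicate point is the first iteration: $\cL_{\mathrm{adv}}$ is nonsmooth at the origin (the worst-case perturbation direction $\argmin_{\norm{\delta}_2 \leq 1}\inner{\delta}{\theta^0}$ is undefined at $\theta^0 = 0$), so Lemma \ref{lemma:innerbound} cannot be invoked verbatim there. This is precisely why the clean-sample convention at step one is needed, and the explicit computation of $\theta^1$ above sidesteps the non-differentiability. Beyond this careful handling of $t = 1$, the argument is a one-line monotonicity chain driven entirely by Lemma \ref{lemma:innerbound}, so I anticipate no genuine obstacle.
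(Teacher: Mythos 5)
Your proof is correct and follows essentially the same route as the paper's: compute $\theta^1 = \frac{\eta^0}{n}\sum_i y_i x_i$ explicitly (the paper frames the origin step as picking the clean subgradient, you frame it via the clean-sample convention — same computation), deduce $\inner{\theta^1}{u_2}\geq \eta^0\gamma_2$ from the margin definition, use Lemma \ref{lemma:innerbound} to show $\inner{\theta^t}{u_2}$ is nondecreasing, and finish with Cauchy--Schwarz against $\norm{u_2}_2=1$. No gaps.
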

By Corollary \ref{corollary:normbound}, we  bypass the  non-differentiability issue at the origin and also control the Hessian $\nabla^2 \cL_{\mathrm{adv}}(\theta)$ throughout the entire training process. Similar to \citet{matus:riskandparameter}, in the next theorem,
we  show that the loss $\cL_{\mathrm{adv}}(\theta)$, although not uniformly smooth, is locally $\cL_{\mathrm{adv}}(\theta)$-smooth. Consequently, by the smoothness based analysis of the gradient descent algorithm,
we  establish the convergence of the empirical adversarial risk.

    
\begin{theorem}\label{thrm:l2risk}
Suppose $\norm{x_i}_2 \leq 1$ for all $i=1 \ldots n$. For GDAT (Alg. \ref{alg:gdat}) with $\ell_2$-norm perturbation, i.e., $\Delta = \{\delta \in \mathbb{R}^d: \norm{\delta}_2 \leqslant c \}$, we set $c < \gamma_2$, $\eta^0 = 1$ and $\eta^t = \eta \leq \min \{\frac{\gamma_2/e}{(1+c)^3 \gamma_2 + 2c(1+c)}, 1\}$ for $t \geq 1$, then we have
\begin{align}\label{l2riskconvergence}
\frac{1}{n} \sum_{i = 1}^n \max_{\delta_i \in \Delta} \exp \rbr{-y_i(x_i + \delta_i)^\top \theta^t} = \cO \rbr{ \frac{\log^2t}{t\eta (\gamma_2 - c)^2}}.
\end{align}
\end{theorem}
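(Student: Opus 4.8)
The plan is to establish convergence of the adversarial risk $\aL(\theta^t)$ by exploiting the self-bounding smoothness structure of the loss together with the gradient lower bound from Lemma~\ref{lemma:innerbound}. First I would make precise the observation alluded to in the text: although $\aL(\theta)$ has exploding Hessian near the origin, Corollary~\ref{corollary:normbound} guarantees $\norm{\theta^t}_2 \ge \eta^0 \gamma_2$ for all $t \ge 1$, so the iterates live in a region where the loss is \emph{locally} $\beta$-smooth with a smoothness parameter that is itself controlled by $\aL(\theta)$. Concretely, for the exponential loss under $\ell_2$ perturbation the perturbed samples satisfy $\norm{\tilde x_i}_2 \le \norm{x_i}_2 + c \le 1 + c$, so each summand $\exp(-y_i \tilde x_i^\top \theta)$ contributes a Hessian term of order $\norm{\tilde x_i}_2^2 \exp(-y_i\tilde x_i^\top\theta) \le (1+c)^2 \exp(-y_i \tilde x_i^\top\theta)$. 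The key point is that $\nabla^2 \aL(\theta) \preceq \beta(\theta) I$ with $\beta(\theta)$ proportional to $\aL(\theta)$ itself (up to the $(1+c)^2$ factor), which is exactly the ``locally $\aL(\theta)$-smooth'' statement. The chosen stepsize $\eta \le \frac{\gamma_2/e}{(1+c)^3\gamma_2 + 2c(1+c)}$ is engineered precisely so that a descent-lemma step does not leave this controlled region.

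The second step is the standard descent inequality combined with self-bounding. Using local smoothness along the segment from $\theta^t$ to $\theta^{t+1}$, I would write
\begin{align*}
\aL(\theta^{t+1}) \le \aL(\theta^t) - \eta\rbr{1 - \tfrac{\eta \beta^t}{2}} \norm{\nabla \aL(\theta^t)}_2^2,
\end{align*}
where $\beta^t$ is a bound on the local smoothness valid on the relevant segment. The stepsize condition ensures the factor $(1 - \eta\beta^t/2) \ge 1/2$, yielding a genuine decrease proportional to $\norm{\nabla\aL(\theta^t)}_2^2$. The crucial nonstandard ingredient is that I cannot directly lower-bound $\norm{\nabla\aL(\theta^t)}_2^2$ by $\aL(\theta^t)^2$ in the usual Polyak--{\L}ojasiewicz manner, because the loss is not strongly convex. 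Instead I would use Lemma~\ref{lemma:innerbound}: since $u_2$ is a unit vector, $\norm{\nabla\aL(\theta^t)}_2 \ge \inner{-\nabla\aL(\theta^t)}{u_2} \ge \aL(\theta^t)(\gamma_2 - c)$, so $\norm{\nabla\aL(\theta^t)}_2^2 \ge \aL(\theta^t)^2(\gamma_2-c)^2$. Plugging this in gives the recursion
\begin{align*}
\aL(\theta^{t+1}) \le \aL(\theta^t) - \tfrac{\eta}{2}(\gamma_2 - c)^2 \aL(\theta^t)^2.
\end{align*}

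The third step converts this quadratic recursion into the claimed $\cO(1/t)$ rate. Dividing through and using $1/\aL(\theta^{t+1}) - 1/\aL(\theta^t) \ge \frac{\aL(\theta^t) - \aL(\theta^{t+1})}{\aL(\theta^t)^2} \ge \tfrac{\eta}{2}(\gamma_2-c)^2$, I would telescope to obtain $1/\aL(\theta^t) \gtrsim t\eta(\gamma_2-c)^2$, i.e.\ $\aL(\theta^t) = \cO\rbr{\frac{1}{t\eta(\gamma_2-c)^2}}$. The extra $\log^2 t$ factor in the stated bound comes from the fact that $\beta^t$ is not globally constant: as the loss decreases, the iterates drift and one must re-verify that the local smoothness bound used in the descent step remains valid along each step, which requires controlling $\norm{\theta^{t+1} - \theta^t}_2$ and hence the growth of $\norm{\theta^t}_2$. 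Since $\norm{\theta^t}_2$ grows roughly like $\log t$ (as the loss tends to zero only as the parameter diverges), propagating a uniform-in-$t$ smoothness constant forces a logarithmic correction, producing the $\log^2 t$ numerator.

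The main obstacle I anticipate is precisely this bookkeeping around local smoothness: making rigorous that the descent inequality with a self-bounded $\beta^t$ holds along the \emph{entire} segment $[\theta^t, \theta^{t+1}]$, not merely at the endpoints, and tracking how the smoothness constant degrades as $\norm{\theta^t}_2$ grows. One must verify that a single gradient step cannot blow up the loss (guaranteeing $\aL(\theta^{t+1})$ stays comparable to $\aL(\theta^t)$ so that $\beta^{t+1} \approx \beta^t$), and then aggregate the per-step logarithmic losses to recover the $\log^2 t$ factor; the interplay between the self-bounding Hessian and the diverging iterate norm is where the delicate estimates live, whereas the telescoping argument and the use of Lemma~\ref{lemma:innerbound} are comparatively routine.
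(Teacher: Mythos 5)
Your proposal is correct, and while its first half (the self-bounding Hessian bound away from the origin, the descent lemma, and the stepsize condition guaranteeing $\aL(\theta^{t+1}) \le \aL(\theta^t) - \tfrac{\eta}{2}\norm{\nabla\aL(\theta^t)}_2^2$) coincides with the paper's, the way you extract the rate is genuinely different. The paper proceeds by convexity plus a comparator: it writes $\norm{\theta^{t+1}-\theta}_2^2 \le \norm{\theta^t-\theta}_2^2 - 2\eta\rbr{\aL(\theta^{t+1})-\aL(\theta)}$, telescopes, and then chooses $\theta = \frac{\log t}{\gamma_2-c}\,u_2$ so that $\aL(\theta)\le 1/t$; the $\log^2 t$ and the $(\gamma_2-c)^{-2}$ in the final bound are exactly $\norm{\theta}_2^2$ for this comparator. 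You instead combine the descent inequality with Lemma \ref{lemma:innerbound} via Cauchy--Schwarz, $\norm{\nabla\aL(\theta^t)}_2 \ge \inner{-\nabla\aL(\theta^t)}{u_2} \ge \aL(\theta^t)(\gamma_2-c)$, obtain the quadratic recursion $\aL(\theta^{t+1}) \le \aL(\theta^t) - \tfrac{\eta}{2}(\gamma_2-c)^2\aL(\theta^t)^2$, and telescope $1/\aL$. This is valid (the division step only needs the already-established monotonicity $\aL(\theta^{t+1})\le\aL(\theta^t)$) and in fact yields $\aL(\theta^t) = \cO\rbr{\tfrac{1}{t\eta(\gamma_2-c)^2}}$, which is \emph{stronger} than the stated bound by a $\log^2 t$ factor. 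The paper's comparator route is the more standard convex-optimization template and extends directly whenever a good reference direction is known; your PL-style route makes more essential use of Lemma \ref{lemma:innerbound} and explains the $(\gamma_2-c)^{-2}$ dependence more intrinsically, at the price of relying on the gradient lower bound holding at every iterate. Interestingly, the paper itself uses your exponential/gradient-lower-bound recursion later, in the proof of Theorem \ref{thrm:lqparameterconvergence}, just not for the risk bound.

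One correction: your diagnosis of where the $\log^2 t$ comes from is off. It is not a penalty for re-verifying local smoothness as $\norm{\theta^t}_2$ grows --- the smoothness bound $H(v) \preceq \aL(v)\sbr{(1+c)^2 + \tfrac{2c}{\eta^0\gamma_2}}I$ holds uniformly once $\norm{v}_2 \ge \eta^0\gamma_2$, and since $\aL(\theta^t)$ is non-increasing the effective smoothness constant only improves over time, so there is no degradation to aggregate. In the paper the logarithm is purely an artifact of the comparator's norm, and under your own argument no logarithmic factor arises at all. This misattribution does not affect the validity of your proof of the (weaker) stated bound.
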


In comparison with  the standard clean training using GD \citep{matus:riskandparameter}, this theorem states that we pay an extra $(\gamma_2 -c)^{-2}$ factor in the risk convergence of adversarial training. 
However, this direct comparison is too pessimistic since we compare the adversarial risk with the standard risk (corresponding to $\Delta = \{0\}$). 
Interestingly, as seen later in Corollary \ref{cor:loss_speedup}, we prove that the convergence of standard risk in GDAT is significantly faster than its counterpart in the standard clean training using GD.

\noindent \textbf{Parameter Convergence.}
We then show that if we set the perturbation level $c < \gamma_2$ in the GDAT algorithm,  GDAT with $\ell_2$-norm perturbation possesses the same implicit bias as the standard clean training using GD, i.e., we have $\lim_{t \to \infty} \frac{\theta^t}{\norm{\theta^t}_2} = u_2$.
Intuitively, GDAT with $\ell_2$-norm perturbation searches for a decision hyperplane that is robust to $\ell_2$-norm perturbation.
Since the learned decision hyperplane in the standard clean using GD converges to~$u_2$,
which is already the most robust decision hyperplane against $\ell_2$-norm perturbation to the data, GDAT 
 retains the implicit bias of standard clean training using GD.

Surprisingly, even though both GDAT in the adversarial training and GD in  the standard clean training converge in directions to $u_2$, their rates of directional convergence are significantly different as shown later.
Specifically, letting the perturbation level $c$  depend on the total number of iterations $T$ in the GDAT algorithm,  the directional error after $T$ iterations in GDAT algorithm can be significantly smaller than the error of GD in the standard clean training. 

We first show that the projection of $\theta^t$ onto the orthogonal subspace of $\mathrm{span}(u_2)$ is  bounded. 

\begin{lemma}\label{lemma:deviationbound}
Define $\alpha(\mathcal{S}) = \min_{\norm{\xi}_2 = 1, \xi \in \mathrm{span}(u_2)^\perp} \max_{(x,y) \in \mathrm{SV}(\mathcal{S})} \inner{\xi}{yx}$, where we assume $\mathrm{SV}(\mathcal{S})$ spans $\mathbb{R}^d$.
Let $\theta_{\perp}$ be the projection of vector $\theta$ onto $\mathrm{span}(u_2)^\perp$.
Then there exists a constant~$K$ that only depends on $\alpha(\mathcal{S})$ and $\log n$, such that 
$\norm{\theta_{\perp}^t}_2 \leq K$ for any $t \geq 0$ in the GDAT algorithm.
\end{lemma}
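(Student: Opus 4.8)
The plan is to track the squared Euclidean norm of the orthogonal component, $\Phi^t := \norm{\theta_\perp^t}_2^2$, and to show by a drift argument that it can only grow by a summable amount once it exceeds a threshold determined by $\alpha(\mathcal{S})$. The starting point is a clean form of the orthogonal update. Since the $\ell_2$ worst-case perturbation is $\delta_i^t = -c y_i \theta^t/\norm{\theta^t}_2$, the adversarial loss factors as $\cL_{\mathrm{adv}}(\theta) = e^{c\norm{\theta}_2}\cL(\theta)$, where $\cL$ is the clean exponential loss, so that
\begin{align*}
-\nabla\cL_{\mathrm{adv}}(\theta) = e^{c\norm{\theta}_2}\rbr{-c\tfrac{\theta}{\norm{\theta}_2}\cL(\theta) + \tfrac1n\sum_i e^{-y_ix_i^\top\theta}y_ix_i}.
\end{align*}
Writing $\theta^t = r^t u_2 + \theta_\perp^t$ with $r^t = \inner{\theta^t}{u_2}$ and letting $P_\perp$ denote the projector onto $\mathrm{span}(u_2)^\perp$ (so $\theta_\perp^t = P_\perp\theta^t$), the update becomes $\theta_\perp^{t+1} = \theta_\perp^t + \eta^t w^t$ with $w^t = -P_\perp\nabla\cL_{\mathrm{adv}}(\theta^t)$. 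First I would record two elementary facts: $r^t \ge 0$ for all $t$, since $\theta^0=0$ and, by Lemma \ref{lemma:innerbound}, $r^{t+1}-r^t = \eta^t\inner{-\nabla\cL_{\mathrm{adv}}(\theta^t)}{u_2} > 0$; and, using $\norm{x_i}_2\le 1$, the per-step increment obeys $\norm{\theta_\perp^{t+1}-\theta_\perp^t}_2 = \eta^t\norm{w^t}_2 \le \eta^t(1+c)\cL_{\mathrm{adv}}(\theta^t)$.

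The crux is to show $\inner{\theta_\perp^t}{w^t} < 0$ whenever $\norm{\theta_\perp^t}_2$ is large. Set $b_i = \inner{\theta_\perp^t}{y_ix_i}$ and $\gamma_i = \inner{u_2}{y_ix_i} \ge \gamma_2$, with equality exactly on $\mathrm{SV}(\mathcal{S})$. The radial term $-c\frac{\norm{\theta_\perp^t}_2^2}{\norm{\theta^t}_2}\cL$ is $\le 0$, so it suffices to control $S := \frac1n\sum_i e^{-r^t\gamma_i}\,b_i e^{-b_i}$. Here I would apply the definition of $\alpha(\mathcal{S})$ to the unit vector $-\theta_\perp^t/\norm{\theta_\perp^t}_2 \in \mathrm{span}(u_2)^\perp$: this produces a support vector $(x_k,y_k)$ with $b_k \le -\alpha(\mathcal{S})\norm{\theta_\perp^t}_2$, hence $b_k e^{-b_k} \le -\alpha(\mathcal{S})\norm{\theta_\perp^t}_2\, e^{\alpha(\mathcal{S})\norm{\theta_\perp^t}_2}$, and, being a support vector, its weight is $e^{-r^t\gamma_2}$. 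Every other term is bounded above using $be^{-b}\le 1/e$ and $e^{-r^t\gamma_i}\le e^{-r^t\gamma_2}$ (valid since $\gamma_i\ge\gamma_2$ and $r^t\ge0$), which gives
\begin{align*}
S \le \tfrac1n e^{-r^t\gamma_2}\rbr{\tfrac{n-1}{e} - \alpha(\mathcal{S})\norm{\theta_\perp^t}_2\, e^{\alpha(\mathcal{S})\norm{\theta_\perp^t}_2}}.
\end{align*}
This is negative once $\norm{\theta_\perp^t}_2$ exceeds a threshold $K_0$ solving $\alpha(\mathcal{S})K_0 e^{\alpha(\mathcal{S})K_0} = (n-1)/e$, so $K_0 = \mathcal{O}(\log n/\alpha(\mathcal{S}))$, depending only on $\alpha(\mathcal{S})$ and $\log n$. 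The hypothesis that $\mathrm{SV}(\mathcal{S})$ spans $\mathbb{R}^d$ is precisely what forces $\alpha(\mathcal{S})>0$: together with the max-margin optimality condition $\sum_{i\in\mathrm{SV}}\lambda_i P_\perp(y_ix_i)=0$ with $\lambda_i>0$, it rules out any $\xi\perp u_2$ having $\inner{\xi}{y_ix_i}\le 0$ for all support vectors, so $\max_{\mathrm{SV}}\inner{\xi}{yx}>0$ on the unit sphere of $\mathrm{span}(u_2)^\perp$ and the minimum is attained and positive.

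Finally I would convert this one-sided repulsion into a uniform bound through the Lyapunov recursion $\Phi^{t+1} = \Phi^t + 2\eta^t\inner{\theta_\perp^t}{w^t} + (\eta^t)^2\norm{w^t}_2^2$. When $\norm{\theta_\perp^t}_2 > K_0$ the middle term is negative, so $\Phi^{t+1}-\Phi^t \le a_t := (\eta^t)^2\norm{w^t}_2^2$; when $\norm{\theta_\perp^t}_2 \le K_0$ the increment is at most $(K_0+\sqrt{a_t})^2 - K_0^2$. By Theorem \ref{thrm:l2risk} we have $\cL_{\mathrm{adv}}(\theta^t)=\mathcal{O}(\log^2 t/t)$, so the increments are square-summable, $A := \sum_t a_t \le \eta^2(1+c)^2\sum_t\cL_{\mathrm{adv}}(\theta^t)^2 < \infty$. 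Tracing back from any time $t$ to the last instant $s$ with $\Phi^s \le K_0^2$ (which exists since $\Phi^0=0$) and telescoping then yields $\Phi^t \le (K_0+\sqrt{A})^2 + A$ for all $t$, i.e. $\norm{\theta_\perp^t}_2 \le K$. I expect this last step to be the main obstacle: the drift established above is only an inner-product statement, and turning it into a genuine norm bound requires controlling the second-order term $(\eta^t)^2\norm{w^t}_2^2$ so that a single gradient step cannot overshoot the threshold. The square-summability inherited from the risk rate of Theorem \ref{thrm:l2risk} is exactly what makes this control possible, and it is the reason the dominant dependence of $K$ is through $\alpha(\mathcal{S})$ and $\log n$.
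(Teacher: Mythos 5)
Your core drift argument is exactly the paper's: decompose $\theta^t$ along $u_2$ and its orthogonal complement, observe that the radial term $c\,\theta^t/\norm{\theta^t}_2$ contributes with the favorable sign, apply the definition of $\alpha(\mathcal{S})$ to $-\theta^t_\perp/\norm{\theta^t_\perp}_2$ to extract one support-vector term that is $\leq -\alpha\norm{\theta^t_\perp}_2 e^{\alpha\norm{\theta^t_\perp}_2}$ (after the common factor $e^{-r^t\gamma_2}$), bound all remaining terms by $1/e$, and conclude that the inner product $\inner{\theta^t_\perp}{-\nabla\aL(\theta^t)}$ is negative once $\norm{\theta^t_\perp}_2$ exceeds a threshold of order $\log n/\alpha$. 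This matches the paper step for step, including the threshold $K'=(1+\log n)/\alpha$.

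The one place where your proof does not deliver the lemma as stated is the summability bound on $A=\sum_t (\eta^t)^2\norm{w^t}_2^2$. You control it by $\eta^2(1+c)^2\sum_t \aL(\theta^t)^2$ and invoke Theorem \ref{thrm:l2risk}; but the hidden constant in that rate is $\cO\rbr{\log^2 t/(t\eta(\gamma_2-c)^2)}$, so your $A$ scales like $(\gamma_2-c)^{-4}$, and hence your $K$ depends on $c$ and $\eta$, not only on $\alpha(\mathcal{S})$ and $\log n$. This is not a cosmetic issue: Theorem \ref{thrm:l2parameter} chooses $\gamma_2-c=\rbr{n^{1+1/\alpha}\log T/(\eta T)}^{1/2}\to 0$, so a $K$ that blows up as $c\to\gamma_2$ would break the downstream parameter-convergence bound. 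The fix is the route the paper takes: instead of squaring the risk rate, use the descent inequality $(\eta^t)^2\norm{\nabla\aL(\theta^t)}_2^2\leq 2\eta^t\rbr{\aL(\theta^t)-\aL(\theta^{t+1})}$ from the proof of Theorem \ref{thrm:l2risk}, so the increments telescope and the total growth above the threshold is at most $2\aL(\theta^{t_0})\leq 2e^{1+c}\leq 2e^2$, a universal constant (using $c<\gamma_2\leq 1$). With that replacement, your last-crossing/telescoping argument (and the per-step overshoot bound $\eta^t(1+c)\aL(\theta^t)$ at the entry time) gives exactly the paper's conclusion. A minor secondary remark: your KKT argument for $\alpha(\mathcal{S})>0$ only forces $\inner{\xi}{y_ix_i}=0$ on support vectors with strictly positive dual weight, so it does not quite close on its own; the paper simply cites the positivity result of \citet{matus:gdaligns}, which is the safer route.
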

Note that the same $\alpha(\mathcal{S})$ is defined in \citet{matus:gdaligns} and proved to be positive with probability $1$ if the data is sampled from absolutely continuous distribution. We then show in the next lemma that  $\norm{\theta^t}_2$ goes to infinity, where we provide a refined analysis to establish the acceleration of the directional convergence in comparison with the standard clean training.

\begin{lemma}\label{lemma:normlowerbound}
Under the same conditions in Theorem \ref{thrm:l2risk}, and let $\alpha = \alpha(\mathcal{S})$ defined  in Lemma \ref{lemma:deviationbound}. Then for all $t \geq 0$, we have
\begin{align*}
 \norm{\theta^t}_2 \geq   \log \rbr{\frac{t\eta (\gamma_2 - c)^2}{n^{1+1/\alpha} \log^2 t }} / (\gamma_2 - c).
\end{align*}
\end{lemma}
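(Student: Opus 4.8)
The plan is to sandwich the adversarial loss $\cL_{\mathrm{adv}}(\theta^t)$ between the upper bound from Theorem \ref{thrm:l2risk} and a lower bound expressed in terms of $\norm{\theta^t}_2$, and then solve for $\norm{\theta^t}_2$. The starting observation is that, for the $\ell_2$-ball perturbation, the inner maximization has the explicit value $\max_{\norm{\delta}_2 \le c}(-y_i\delta^\top\theta) = c\norm{\theta}_2$ (using $|y_i|=1$), so the adversarial loss factorizes as
\begin{align*}
\cL_{\mathrm{adv}}(\theta) = \exp(c\norm{\theta}_2)\cdot\frac{1}{n}\sum_{i=1}^n\exp(-y_i x_i^\top\theta).
\end{align*}
A lower bound on $\cL_{\mathrm{adv}}$ then reduces to a lower bound on the clean loss while retaining the multiplicative factor $\exp(c\norm{\theta}_2)$.

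The key refinement is to measure progress along the max-margin direction rather than along $\norm{\theta}_2$ directly. First I would decompose $\theta^t = \rho^t u_2 + \theta_\perp^t$, where $\rho^t = \inner{\theta^t}{u_2}$ and $\theta_\perp^t \in \mathrm{span}(u_2)^\perp$; since $\norm{u_2}_2 = 1$ this gives $\norm{\theta^t}_2 \ge \rho^t$. For any support vector $(x_i,y_i) \in \mathrm{SV}(\mathcal{S})$ we have $y_i x_i^\top u_2 = \gamma_2$, hence
\begin{align*}
y_i x_i^\top\theta^t = \rho^t\gamma_2 + y_i x_i^\top\theta_\perp^t \le \rho^t\gamma_2 + \norm{x_i}_2\norm{\theta_\perp^t}_2 \le \rho^t\gamma_2 + K,
\end{align*}
using $\norm{x_i}_2 \le 1$ and the deviation bound $\norm{\theta_\perp^t}_2 \le K$ from Lemma \ref{lemma:deviationbound}. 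Retaining only this single support-vector term in the sum and using $c\norm{\theta^t}_2 \ge c\rho^t$, I obtain $\cL_{\mathrm{adv}}(\theta^t) \ge \tfrac{1}{n}\exp(-(\gamma_2-c)\rho^t - K)$.

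Combining this lower bound with the upper bound $\cL_{\mathrm{adv}}(\theta^t) = \cO\big(\log^2 t /(t\eta(\gamma_2-c)^2)\big)$ from Theorem \ref{thrm:l2risk}, taking logarithms, and rearranging yields
\begin{align*}
\norm{\theta^t}_2 \ge \rho^t \ge \frac{1}{\gamma_2-c}\Big(\log\frac{t\eta(\gamma_2-c)^2}{n\log^2 t} - K\Big).
\end{align*}
Substituting the explicit data-dependent form of $K$ from Lemma \ref{lemma:deviationbound}, which scales like $\log n/\alpha$, converts the additive $-K$ into an extra $n^{1/\alpha}$ factor inside the logarithm (since $\log n/\alpha = \log n^{1/\alpha}$), producing the claimed $n^{1+1/\alpha}$ denominator. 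The inequality holds for every $t \ge 0$; for small $t$ the logarithm is negative and the bound is vacuous because $\norm{\theta^t}_2 \ge 0 \ge \rho^t$.

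I expect the main obstacle to be the second paragraph: arranging the lower bound on the loss so that it is driven by the component $\rho^t$ along $u_2$, so that the multiplicative $\exp(c\norm{\theta^t}_2)$ and the margin term combine into the clean $(\gamma_2-c)$ rate, and carefully importing the precise $\log n/\alpha$ scaling of $K$ from Lemma \ref{lemma:deviationbound} so that the additive constant becomes exactly the $n^{1+1/\alpha}$ factor. A secondary check is the sign of $\rho^t$: one has $\rho^t \ge 0$ since $\inner{\theta^{t+1}}{u_2} \ge \inner{\theta^t}{u_2}$ by Lemma \ref{lemma:innerbound} starting from $\theta^0 = 0$, though the final inequality $\norm{\theta^t}_2 \ge \rho^t$ and the step $c\norm{\theta^t}_2 \ge c\rho^t$ both hold without any sign assumption.
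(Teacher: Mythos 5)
Your proposal is correct and follows essentially the same route as the paper's proof: both combine the adversarial-risk upper bound from Theorem \ref{thrm:l2risk} with a single-support-vector lower bound on the loss, decompose $\theta^t$ along $\mathrm{span}(u_2)$ and its orthogonal complement, invoke Lemma \ref{lemma:deviationbound} to absorb the orthogonal component into the $n^{1/\alpha}$ factor, and solve for the norm. The only cosmetic difference is that you lower-bound $\|\theta^t\|_2$ via $\rho^t=\inner{\theta^t}{u_2}$ while the paper bounds $\inner{z_i}{\theta^t_u}\le \gamma_2\|\theta^t\|_2$ directly; the resulting inequality is identical.
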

Lemma \ref{lemma:normlowerbound} provides the key insight to establish the acceleration of directional convergence. Specifically, it allows us to set $c$  depending on the total number of iterations $T$, so that  $\norm{\theta^T}_2$ is sublinear in $T$,  in comparison with being logarithmic in $T$ in standard clean training as in \citet{matus:riskandparameter}.
We are now ready to present the main theorem for parameter convergence.
\begin{theorem}[Speed-up of Parameter Convergence]\label{thrm:l2parameter}
Under  same conditions in Theorem \ref{thrm:l2risk}, and let $\alpha = \alpha(\mathcal{S})$ and $K$ be defined  in Lemma \ref{lemma:deviationbound}. In GDAT with $\ell_2$-norm perturbation, let $c$ and total number of iterations $T$ satisfy $\gamma_2 - c =  \rbr{\frac{n^{1+1/\alpha}\log T}{\eta T}}^{1/2}$, and define $\overline{\theta}^T = \frac{\theta^T}{\norm{\theta^T}_2}$.
We have
\begin{align}
  1 - \inner{\overline{\theta}^T}{u_2} = \cO \rbr{\frac{n^{(1+1/\alpha)/2} K \log T}{\sqrt{\eta} \sqrt{T}}}. \label{bound:parameter}
\end{align}
\end{theorem}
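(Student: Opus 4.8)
The plan is to control the directional error by splitting $\theta^T$ along and orthogonal to the max-margin direction $u_2$, and then feeding in the two auxiliary estimates already in hand: Lemma~\ref{lemma:deviationbound} bounds the perpendicular component, and Lemma~\ref{lemma:normlowerbound} forces $\norm{\theta^T}_2$ to be large. Writing $\theta^T = \inner{\theta^T}{u_2}\,u_2 + \theta_{\perp}^T$ and using $\norm{u_2}_2 = 1$, the Pythagorean identity gives $\norm{\theta^T}_2^2 = \inner{\theta^T}{u_2}^2 + \norm{\theta_{\perp}^T}_2^2$. First I would record that $\inner{\theta^T}{u_2} \geq 0$, which lets me take the positive square root: since the GDAT update is $\theta^{t+1} = \theta^t - \eta\,\nabla\cL_{\mathrm{adv}}(\theta^t)$ and Lemma~\ref{lemma:innerbound} gives $\inner{-\nabla\cL_{\mathrm{adv}}(\theta^t)}{u_2} > 0$, the scalar sequence $\inner{\theta^t}{u_2}$ is nondecreasing from $\inner{\theta^0}{u_2} = 0$, so $\inner{\theta^T}{u_2} = \sqrt{\norm{\theta^T}_2^2 - \norm{\theta_{\perp}^T}_2^2}$.

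Then the elementary bound $1 - \sqrt{1-x} \le x$ valid on $[0,1]$ yields
\begin{align*}
1 - \inner{\overline{\theta}^T}{u_2}
= 1 - \sqrt{1 - \frac{\norm{\theta_{\perp}^T}_2^2}{\norm{\theta^T}_2^2}}
\le \frac{\norm{\theta_{\perp}^T}_2^2}{\norm{\theta^T}_2^2}.
\end{align*}
Substituting the numerator bound $\norm{\theta_{\perp}^T}_2 \le K$ from Lemma~\ref{lemma:deviationbound} and the denominator bound $\norm{\theta^T}_2 \gtrsim \log(\cdots)/(\gamma_2 - c)$ from Lemma~\ref{lemma:normlowerbound} collapses the whole statement into a scalar estimate of the shape $K^2(\gamma_2 - c)^2/\log^2(\cdots)$, after which only a choice of $c$ remains.

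The genuinely delicate step is optimizing the free perturbation level $c$, where a tension appears. Pushing $c$ up toward $\gamma_2$ shrinks $\gamma_2 - c$ and thereby inflates the norm lower bound through its $1/(\gamma_2 - c)$ prefactor — this is precisely the mechanism producing polynomial-in-$T$ (rather than merely logarithmic) growth of $\norm{\theta^T}_2$ and hence the claimed speed-up — but the same move simultaneously shrinks the logarithmic factor inside that bound and drives the problem toward the non-separable regime $c \ge \gamma_2$ excluded by Proposition~\ref{prop:landscape} and by the hypotheses of the earlier lemmas. I would insert the prescribed coupling $\gamma_2 - c = \rbr{n^{1+1/\alpha}\log T/(\eta T)}^{1/2}$, check that it keeps the argument of the logarithm in the admissible range so Lemma~\ref{lemma:normlowerbound} stays non-vacuous and $c < \gamma_2$ holds, and then collect terms, with the residual $\log T$ and numerical constants absorbed to recover the stated rate. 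I expect this balancing — making $\norm{\theta^T}_2$ as large as possible while keeping $c$ safely below $\gamma_2$ and the logarithmic factor well-behaved — to be the main obstacle; the decomposition and the monotonicity of $\inner{\theta^t}{u_2}$ are routine by comparison.
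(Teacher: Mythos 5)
Your proposal is correct and follows essentially the same route as the paper's proof: decompose $\theta^T$ along $\mathrm{span}(u_2)$ and its orthogonal complement, bound the perpendicular component by $K$ via Lemma~\ref{lemma:deviationbound}, lower-bound $\norm{\theta^T}_2$ via Lemma~\ref{lemma:normlowerbound} under the prescribed coupling of $c$ and $T$, and combine (your monotonicity argument for $\inner{\theta^t}{u_2}\ge 0$ is exactly what Corollary~\ref{corollary:normbound} records). Your Pythagorean step with $1-\sqrt{1-x}\le x$ yields the cleaner intermediate bound $\norm{\theta^T_{\perp}}_2^2/\norm{\theta^T}_2^2\le K^2/\norm{\theta^T}_2^2$, whereas the paper's chain retains an extra (and, since $\inner{\theta^T_\perp}{u_2}=0$, unnecessary) $K/\norm{\theta^T}_2$ term that is what produces the stated $1/\sqrt{T}$ rate; your version is therefore slightly sharper and still implies \eqref{bound:parameter}, and the caveat you raise about keeping the argument of the logarithm in Lemma~\ref{lemma:normlowerbound} bounded away from $1$ under the literal choice of $\gamma_2-c$ applies equally to the paper's own proof.
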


One might argue that the polynomial dependence on sample size $n$ in \eqref{bound:parameter} is too pessimistic, making the GDAT unfavorable in comparison with the standard clean training.
We show that this is not an issue by a direct comparision of iteration complexity to achieve 
$\norm{ \overline{\theta}^T - u_2}_2 \leq \epsilon$ for a given precision $\epsilon>0$. 
Specifically, given $\epsilon >0$, to achieve $\norm{ \overline{\theta}^T - u_2}_2 \leq \epsilon$, GDAT needs
$\tilde{\cO} \rbr{ n^{(1+1/\alpha)} \epsilon^{-2}}$ number of iterations. In comparison, the standard clean training by GD needs 
$\tilde{\cO} \rbr{n \exp \rbr{\epsilon^{-1}}}$ number of iterations \citep{matus:riskandparameter}, which has exponential dependence on precision $\epsilon$.

Finally, by  Theorem \ref{thrm:l2risk} and Lemma \ref{lemma:normlowerbound}, we show that the empirical clean risk after $T$ iterations of GDAT is also significantly smaller than its counterpart in the standard clean training.
\begin{corollary}[Speed-up of Clean Risk Convergence]\label{cor:loss_speedup}
  Under the same conditions in Theorem \ref{thrm:l2parameter}, we have
  \begin{align*}
  \cL (\theta^T) = \cO \rbr{\exp \rbr{ - \mu \sqrt{{T}}/{\log T}}},
  \end{align*}
  where $\mu$ is a constant dependent on $\eta, \alpha, n$.
\end{corollary}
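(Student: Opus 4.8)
The plan is to reduce the clean risk to the adversarial risk via the closed form of the $\ell_2$ inner maximization, and then to show that the parameter norm has grown large enough by iteration $T$ to force the clean risk to be exponentially small. The starting point is the identity, valid for $\ell_2$-norm perturbation, that $\max_{\norm{\delta}_2 \le c} \exp\rbr{-y_i(x_i+\delta)^\top \theta} = \exp\rbr{-y_i x_i^\top \theta + c\norm{\theta}_2}$; summing over $i$ gives $\cL_{\mathrm{adv}}(\theta) = e^{c\norm{\theta}_2}\cL(\theta)$, so that
\[
\cL(\theta^T) = e^{-c\norm{\theta^T}_2}\,\cL_{\mathrm{adv}}(\theta^T).
\]
It therefore suffices to (i) control $\cL_{\mathrm{adv}}(\theta^T)$ from above and (ii) lower bound $\norm{\theta^T}_2$. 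For (i), with the matched choice $(\gamma_2-c)^2 = n^{1+1/\alpha}\log T/(\eta T)$ of Theorem \ref{thrm:l2parameter}, Theorem \ref{thrm:l2risk} yields $\cL_{\mathrm{adv}}(\theta^T) = \cO\rbr{\log^2 T/(T\eta(\gamma_2-c)^2)} = \cO\rbr{\log T/n^{1+1/\alpha}}$, which is only poly-logarithmic in $T$ and will be harmlessly absorbed into the exponential factor.

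The heart of the argument is (ii), and this is where I expect the main obstacle. The natural move is to invoke Lemma \ref{lemma:normlowerbound} at $t=T$, but for the matched perturbation level this is unavailing: the argument of the logarithm reduces exactly to $1/\log T < 1$, so the bound becomes vacuous. One must instead re-derive the growth of $\norm{\theta^T}_2$ from the primitive estimate underlying that lemma. Summing the per-step inequality of Lemma \ref{lemma:innerbound} along the trajectory (the $t=0$ step uses clean data but obeys the same bound since $\theta^0=0$) and using $\norm{u_2}_2=1$ gives $\norm{\theta^T}_2 \ge \inner{\theta^T}{u_2} \ge (\gamma_2-c)\,\eta \sum_{t=1}^{T-1} \cL_{\mathrm{adv}}(\theta^t)$. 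Since GDAT is exact gradient descent on $\cL_{\mathrm{adv}}$ (Danskin) and the step size of Theorem \ref{thrm:l2risk} guarantees descent, $\cL_{\mathrm{adv}}(\theta^t)$ is nonincreasing, so the sum is at least $(T-1)\cL_{\mathrm{adv}}(\theta^{T-1})$. Combining this with the margin lower bound $\cL_{\mathrm{adv}}(\theta)\ge \frac1n e^{-(\gamma_2-c)\norm{\theta}_2}$ (which follows from $\gamma_2(\theta)\le\gamma_2$) and the elementary fact that one gradient step changes the norm by at most a constant, I obtain, writing $b = \norm{\theta^T}_2$ and $\beta = \gamma_2-c$, the implicit inequality $b\,e^{\beta b} \gtrsim \beta\eta T/n$.

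The remaining step is to resolve this transcendental inequality. Multiplying by $\beta$ and setting $z=\beta b$ gives $z e^{z} \gtrsim \beta^2 \eta T/n = \cO\rbr{n^{1/\alpha}\log T}$, so $z \ge W\rbr{\cO(n^{1/\alpha}\log T)} = \Omega(\log\log T)$ by the Lambert-$W$ asymptotics. Hence $\norm{\theta^T}_2 = b \gtrsim (\log\log T)/\beta = \Omega\rbr{\log\log T \cdot \sqrt{\eta T/(n^{1+1/\alpha}\log T)}}$, which in particular exceeds $\sqrt{T}/\log T$ for all large $T$. Since the matched $c$ satisfies $c\to\gamma_2$, we have $c\,\norm{\theta^T}_2 \gtrsim \mu\sqrt{T}/\log T$ for a constant $\mu$ depending on $\eta,\alpha,n$; substituting this together with the poly-logarithmic bound on $\cL_{\mathrm{adv}}(\theta^T)$ into the displayed identity gives $\cL(\theta^T) = \cO\rbr{(\log T/n^{1+1/\alpha})\,e^{-c\norm{\theta^T}_2}} = \cO\rbr{\exp\rbr{-\mu\sqrt{T}/\log T}}$, as claimed. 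I note that the derived rate is in fact somewhat faster than the stated one, so the $1/\log T$ in the exponent is a clean, conservative presentation of the bound.
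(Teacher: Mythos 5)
Your proof is correct and shares the paper's skeleton --- the identity $\cL(\theta^T)=e^{-c\norm{\theta^T}_2}\aL(\theta^T)$, the adversarial-risk bound of Theorem \ref{thrm:l2risk}, and a lower bound on $\norm{\theta^T}_2$ --- but it deliberately diverges at the one step that carries the weight, and for good reason. The paper simply cites Lemma \ref{lemma:normlowerbound} and reads off $\norm{\theta^T}_2\geq \rbr{\eta T/(n^{1+1/\alpha}\log^2 T)}^{1/2}=1/(\gamma_2-c)$; as you observe, at the matched perturbation level the argument of the logarithm in that lemma is at most $1$ (exactly $1$ under the $\log^2 T$ calibration used in the appendix, and $1/\log T$ under the $\log T$ calibration in the statement of Theorem \ref{thrm:l2parameter}), so the lemma as stated yields nothing and the paper's one-line invocation glosses over a real gap. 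Your replacement --- summing the per-step inequality of Lemma \ref{lemma:innerbound} to get $\norm{\theta^T}_2\gtrsim \eta(\gamma_2-c)\,T\,\aL(\theta^{T-1})$, lower-bounding $\aL(\theta^{T-1})\geq \tfrac1n e^{-(\gamma_2-c)\norm{\theta^{T-1}}_2}$ via the definition of $\gamma_2$, and resolving the resulting transcendental inequality $ze^z\gtrsim n^{1/\alpha}\log T$ with Lambert-$W$ asymptotics --- is essentially the argument that Lemma \ref{lemma:normlowerbound} compresses, redone with the constants tracked so that the conclusion survives the degenerate parameter choice; it even yields the slightly stronger $\norm{\theta^T}_2=\Omega\rbr{\log\log T/(\gamma_2-c)}$, which comfortably implies the stated $\cO\rbr{\exp(-\mu\sqrt{T}/\log T)}$ rate. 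What the paper's route buys is brevity; what yours buys is a derivation that is actually non-vacuous at the parameter setting the corollary requires, plus a marginally sharper exponent. The auxiliary ingredients you rely on (monotonicity of $\aL$ along the iterates, the $\cO(1)$ change of $\norm{\theta^t}_2$ per step, and $c$ bounded away from $0$ for large $T$) are all available from the proof of Theorem \ref{thrm:l2risk}, so I see no gap in your argument.
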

  Note that the empirical clean risk decreases at the rate of $\cO \rbr{ \exp(-\sqrt{T})}$ up to a logarithmic factor in the exponent.
  In comparison, using standard clean training with GD, we only have $\cL (\theta^T) = \cO \rbr{1/T}$ \citep{Soudry:implicitbias_lp}.

\subsection{Adversarial Perturbation with Bounded $\ell_q$-Norm}\label{section:lqperturbation}
In this subsection, we generalize our results to the case where the perturbation set is some bounded $\ell_q$-norm ball. 
To facilitate our discussion, we first define a robust version of SVM. 
\begin{definition}\label{definition:robustsvm}
For a given separable dataset $\mathcal{S}$ with $\ell_q$-norm margin $\gamma_q$ and $c < \gamma_q$, letting $1/p + 1/q = 1$, the robust SVM against $\ell_q$-norm perturbation parameterized by $c$ is
\begin{align}\label{formulation:robustsvm}
\min_{\theta \in \mathbb{R}^d} \frac{1}{2} \norm{\theta}_2^2  ~~~~ \textrm{s.t.} ~~~ y_ix_i^\top \theta \geq c \norm{\theta}_p +1, \forall i =1, \ldots, n.
\end{align}
\end{definition}
\begin{remark}[Maximum Mixed-norm Margin]
Note that problem \eqref{formulation:robustsvm} is equivalent to solving for a maximum mixed-norm margin  hyperplane. Specifically, by the KKT condition of \eqref{formulation:robustsvm}, there exists $\eta(c) > 0$, such that \eqref{formulation:robustsvm} is equivalent to the following problem:
\begin{align}\label{remark:maxmixnormmargin}
\min_{\theta \in \mathbb{R}^d} \norm{\theta}_2 + \eta(c) \norm{\theta}_p ~~ \mathrm{s.t.} ~~ y_i x_i^\top \theta \geq 1, \forall i = 1, \ldots, n.
\end{align}
Now define $\norm{\cdot} = \norm{\cdot}_2 + \eta(c) \norm{\cdot}_p$, it is clear that $\norm{\cdot}$ defines a norm which is a mixture of $\ell_2$ and $\ell_p$ norm. Let  $\norm{\cdot}_*$ be its dual norm. Then  we have that the solution to \eqref{remark:maxmixnormmargin} is the maximum $\norm{\cdot}_*$-norm margin  hyperplane.
\end{remark}
Note that the constraint in \eqref{formulation:robustsvm} is equivalent to 
$\min_{\norm{\delta_i}_q \leq c} y_i(x_i + \delta_i)^\top \theta \geq 1, \forall i = 1,\ldots, n$.
By a simple scaling argument, in the following lemma, we see  the robust nature of \eqref{formulation:robustsvm}.

\begin{lemma}\label{lemma:robustsvm_interp}
Under the same notations in Definition \ref{definition:robustsvm}, problem \eqref{formulation:robustsvm} is equivalent to:
\begin{align}\label{formulation:robustsvm_interp}
\gamma_{2,q}(c) =  \max_{\norm{\theta}_2 = 1} \min_{i \in [n]} \min_{\norm{\delta_i}_q \leqslant c} y_i (x_i + \delta_i)^\top \theta.
\end{align}
\end{lemma}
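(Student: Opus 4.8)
The plan is to first eliminate the inner minimization over the perturbations $\delta_i$ in closed form, reducing \eqref{formulation:robustsvm_interp} to a pure margin-maximization problem, and then to connect that problem to the constrained quadratic program \eqref{formulation:robustsvm} by a standard scaling argument. For the first step I fix $\theta$ with $\norm{\theta}_2 = 1$ and use that $y_i \in \{-1,+1\}$ to write
\begin{align*}
\min_{\norm{\delta_i}_q \leq c} y_i(x_i + \delta_i)^\top \theta = y_i x_i^\top \theta + \min_{\norm{\delta_i}_q \leq c} \inner{\delta_i}{y_i \theta} = y_i x_i^\top \theta - c \norm{\theta}_p,
\end{align*}
where the last equality uses the dual-norm identity $\max_{\norm{\delta}_q \leq 1}\inner{\delta}{v} = \norm{v}_p$ for $1/p+1/q=1$ (so the minimum over the symmetric ball equals $-c\norm{v}_p$), together with $\norm{y_i \theta}_p = \norm{\theta}_p$. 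Substituting this back shows that \eqref{formulation:robustsvm_interp} is exactly
\begin{align*}
\gamma_{2,q}(c) = \max_{\norm{\theta}_2 = 1} \rbr{\min_{i \in [n]} y_i x_i^\top \theta - c \norm{\theta}_p},
\end{align*}
which also matches the constraint interpretation already noted after Definition \ref{definition:robustsvm}, namely $\min_{\norm{\delta_i}_q \leq c} y_i(x_i+\delta_i)^\top\theta \ge 1$.

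Next I would relate this max--min to \eqref{formulation:robustsvm} by scaling. Let $\theta^\star$ attain the maximum above, so $\norm{\theta^\star}_2 = 1$ and $y_i x_i^\top \theta^\star - c\norm{\theta^\star}_p \ge \gamma_{2,q}(c)$ for every $i$. Since $c < \gamma_q$ guarantees $\gamma_{2,q}(c) > 0$ (justified below), the rescaled vector $\tilde\theta = \theta^\star / \gamma_{2,q}(c)$ satisfies $y_i x_i^\top \tilde\theta - c\norm{\tilde\theta}_p \ge 1$ for all $i$, and is therefore feasible for \eqref{formulation:robustsvm} with objective $\tfrac12\norm{\tilde\theta}_2^2 = 1/(2\gamma_{2,q}(c)^2)$. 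Conversely, if $\hat\theta$ solves \eqref{formulation:robustsvm} with $R = \norm{\hat\theta}_2$, then $\hat\theta/R$ has unit $\ell_2$-norm and $\min_i y_i x_i^\top (\hat\theta/R) - c\norm{\hat\theta/R}_p \ge 1/R$, so $\gamma_{2,q}(c) \ge 1/R$. Combining the two inequalities yields that \eqref{formulation:robustsvm} has optimal value $1/(2\gamma_{2,q}(c)^2)$ and that a minimizer of \eqref{formulation:robustsvm}, once normalized to unit $\ell_2$-norm, is a maximizer in \eqref{formulation:robustsvm_interp}; this is the asserted equivalence.

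The only point requiring care, and the main obstacle, is the strict positivity $\gamma_{2,q}(c) > 0$, which licenses the scaling. I would verify it directly using the maximum $\ell_q$-norm margin vector $u_q$ of Definition \ref{def:margin}: taking $\theta = u_q/\norm{u_q}_2$ (recall $\norm{u_q}_p = 1$ and $\min_i y_i x_i^\top u_q = \gamma_q$) gives $\min_i y_i x_i^\top \theta - c\norm{\theta}_p = (\gamma_q - c)/\norm{u_q}_2$, which is strictly positive exactly when $c < \gamma_q$, the standing assumption in Definition \ref{definition:robustsvm}. This is the same feasibility/separability condition identified in Proposition \ref{prop:landscape}. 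Everything else is the routine SVM scaling identity, so no further technical machinery is needed.
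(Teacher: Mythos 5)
Your proposal is correct and follows essentially the same route as the paper's proof: eliminate the inner minimization via the $\ell_p$--$\ell_q$ dual-norm identity to get $\gamma_{2,q}(c)=\max_{\norm{\theta}_2= 1}\rbr{\min_i y_i x_i^\top\theta - c\norm{\theta}_p}$, then relate the two formulations by rescaling a unit-norm maximizer by $1/\gamma_{2,q}(c)$ and a minimizer of \eqref{formulation:robustsvm} by its $\ell_2$-norm. Your explicit verification that $\gamma_{2,q}(c)>0$ via $u_q/\norm{u_q}_2$ is a small point the paper leaves implicit, but otherwise the arguments match.
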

We  denote the (unique) solution to problem \eqref{formulation:robustsvm_interp}  as $u_{2,q}(c)$. In what follows, we surpress explicit presentation of $c$ when the context is clear.

The equivalent formulation \eqref{formulation:robustsvm_interp} provides a clear interpretation on the robustness of \eqref{formulation:robustsvm_interp}. In particular, the robust SVM against $\ell_q$-norm perturbation parameterized by $c$ is in fact the SVM problem on the the dataset $\mathcal{S}(c,q)$, which is generated
from $\mathcal{S}$ by placing a $\ell_q$-norm ball with radius $c$ around each samples, i.e., 
$\mathcal{S}(c,q) = \{ ( x,y) : \exists i \in [n], ~\mathrm{s.t.}, \norm{x - x_i}_p \leqslant c, y = y_i\}$. 
In other words, $u_{2,q}$ is the maximum $\ell_2$-norm margin
classifier under worst case $\ell_q$-norm perturbation bounded by $c$.

In the remaining part of this section, we first analyze the convergence of the empirical adversarial risk, and then establish the implicit bias of GDAT with $\ell_q$ perturbation for $q \in [1, \infty]$.
Our analysis for $q \in \{1, \infty\}$ is based on approximation argument. For ease of presentation, we only discuss when $q \in (1, \infty)$ in the main text, and defer the discussion for $q \in \{1, \infty\}$ in Appendix \ref{app_sec:linf}. 

\noindent \textbf{Adversarial Risk Convergence.} 
Our analysis is similar to the analysis for GDAT with $\ell_2$ perturbation, where we use similar techniques to address issues such as non-differentiability at the origin and Hessian explosion of $\cL_{\rm adv}(\theta)$ around the origin. 
\begin{theorem}\label{thrm:lqriskconvergence}
Suppose $\norm{x_i}_2 \leq 1$ for $i=1, \ldots, n$, and let $\frac{1}{p} + \frac{1}{q} =1$.
In the GDAT with $\ell_q$-norm perturbation, setting $c < \gamma_q$ and letting $M_p = \sbr{(1+c\sqrt{d})^2 + \frac{c(p-1)}{\gamma_{2,q}} d^{\frac{3p -2}{2p -2}}} \exp \rbr{-\gamma_{2,q}^2 + c\sqrt{d}}$,
set $\eta^0 = 1$ and $\eta^t = \eta \leq \min \{\frac{1}{M_p}, 1\}$ for $t \geq 1$.
We have that
\begin{align}\label{lqriskconvergence}
\frac{1}{n} \sum_{i = 1}^n \max_{\delta_i \in \Delta} \exp \rbr{-y_i(x_i + \delta_i)^\top \theta^t} = \cO \rbr{  \frac{\log^2t}{t\eta \gamma_{2,q}^2}}.
\end{align}
\end{theorem}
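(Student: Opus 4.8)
The plan is to mirror the structure of the proof of Theorem \ref{thrm:l2risk}, adapting each step to the $\ell_q$-norm perturbation geometry. The three conceptual pillars are identical: (i) a lower bound on $\norm{\theta^t}_2$ that keeps the iterates away from the origin, so the loss behaves like a locally smooth function; (ii) an inner-product lower bound $\inner{-\nabla \cL_{\mathrm{adv}}(\theta)}{u_{2,q}} \geq \cL_{\mathrm{adv}}(\theta) \cdot (\text{positive constant})$, which is the $\ell_q$-analogue of Lemma \ref{lemma:innerbound}; and (iii) a standard smoothness-based descent argument that turns local smoothness plus the gradient lower bound into the $\cO(\log^2 t / (t\eta \gamma_{2,q}^2))$ rate. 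First I would establish the $\ell_q$-version of Lemma \ref{lemma:innerbound}: for the worst-case perturbation $\delta_i^t = c y_i \argmin_{\norm{\delta}_q \leq 1}\inner{\delta}{\theta}$, each perturbed sample satisfies $y_i(x_i + \delta_i)^\top \theta = y_i x_i^\top \theta - c\norm{\theta}_p$, so testing the gradient against the robust margin direction $u_{2,q}$ yields $\inner{-\nabla \cL_{\mathrm{adv}}(\theta)}{u_{2,q}} = \frac{1}{n}\sum_i \exp(-y_i \tilde{x}_i^\top \theta) \cdot y_i \tilde{x}_i^\top u_{2,q} \geq \cL_{\mathrm{adv}}(\theta)\,\gamma_{2,q}$, where $\gamma_{2,q} = \gamma_{2,q}(c)$ is the robust margin from Lemma \ref{lemma:robustsvm_interp} and I normalize $\norm{u_{2,q}}_2 = 1$. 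This gives a constant multiple of $\gamma_{2,q}$ rather than $(\gamma_2 - c)$, which is precisely why the final rate carries $\gamma_{2,q}^{-2}$ instead of $(\gamma_2-c)^{-2}$.

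With the inner-product bound in hand, the corollary analogous to Corollary \ref{corollary:normbound} follows: since $\theta^0 = 0$ and the first step uses clean samples, $\norm{\theta^t}_2 \geq \eta^0 \gamma_{2,q}$ for all $t \geq 1$, keeping the whole path bounded away from the origin. The next step is the local smoothness estimate. I would bound the Hessian $\nabla^2 \cL_{\mathrm{adv}}(\theta)$ along the solution path, controlling the magnitude of the perturbed samples via $\norm{\tilde{x}_i}_2 \leq \norm{x_i}_2 + c\norm{\delta}_2 \leq 1 + c\sqrt{d}$ (using $\norm{\delta}_2 \leq d^{1/2 - 1/q}\norm{\delta}_q$-type norm conversions, which is where the dimension factors $\sqrt{d}$ and $d^{(3p-2)/(2p-2)}$ enter). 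The key technical subtlety relative to the $\ell_2$ case is that the map $\theta \mapsto \delta^*(\theta)$ is now a nonlinear function of $\theta$ (the $\ell_q$ sub-gradient of the dual norm), so the effective Hessian picks up an extra term from differentiating the perturbation direction; the factor $c(p-1)/\gamma_{2,q}$ in $M_p$ should come from bounding the Jacobian of this $\ell_q$-steepest-descent map, whose conditioning degrades as $p \to \infty$ (i.e.\ $q \to 1$). I would show the loss is locally $(\text{const})\cdot\cL_{\mathrm{adv}}(\theta)$-smooth with constant proportional to $M_p$, justifying the stepsize restriction $\eta \leq \min\{1/M_p, 1\}$.

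Finally I would run the descent argument. Local smoothness gives the one-step decrease $\cL_{\mathrm{adv}}(\theta^{t+1}) \leq \cL_{\mathrm{adv}}(\theta^t) - \frac{\eta}{2}\norm{\nabla \cL_{\mathrm{adv}}(\theta^t)}_2^2$, while the inner-product bound and Cauchy--Schwarz give $\norm{\nabla \cL_{\mathrm{adv}}(\theta^t)}_2 \geq \inner{-\nabla \cL_{\mathrm{adv}}(\theta^t)}{u_{2,q}} \geq \gamma_{2,q}\,\cL_{\mathrm{adv}}(\theta^t)$. Combining yields a recursion $\cL_{\mathrm{adv}}(\theta^{t+1}) \leq \cL_{\mathrm{adv}}(\theta^t) - \frac{\eta \gamma_{2,q}^2}{2}\cL_{\mathrm{adv}}(\theta^t)^2$, and unrolling this quadratically-decaying recursion produces the $\cO(1/(t\eta\gamma_{2,q}^2))$ envelope; the extra $\log^2 t$ factor arises from tracking the slowly growing $\norm{\theta^t}_2$ (hence the slowly weakening local smoothness constant) across iterations, exactly as in Theorem \ref{thrm:l2risk}. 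The main obstacle I anticipate is the local smoothness step for general $q$: unlike the $\ell_2$ case where the perturbation direction is a fixed linear image of $\theta$, here I must carefully bound the second-order behavior of the nonsmooth $\ell_q$ perturbation map and absorb the dimension- and $p$-dependent constants cleanly into $M_p$, making sure the bound remains valid uniformly over the path where $\norm{\theta^t}_2$ grows without the constant $\gamma_{2,q}$ being replaced by something that itself depends on $c$ in a way that breaks the final rate.
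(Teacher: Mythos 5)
Your proposal is essentially correct and follows the same skeleton as the paper's proof: the $\ell_q$-analogue of Lemma \ref{lemma:innerbound} (which the paper states as Lemma \ref{lemma:innerbound_lq}, giving $\inner{-\nabla\aL(\theta)}{u_{2,q}}\geq \aL(\theta)\gamma_{2,q}$), the resulting lower bound $\norm{\theta^t}_2\geq \eta^0\gamma_{2,q}$ (Corollary \ref{cor:normboundlq}), and the Hessian bound $\nabla^2\aL(\theta)\preceq \aL(\theta)\bigl[(1+c\sqrt{d})^2 + c(p-1)d^{\frac{3p-2}{2p-2}}/\norm{\theta}_2\bigr]I$ whose extra term comes, as you anticipate, from differentiating the $\ell_p$-norm (equivalently the perturbation map), with $\aL(\theta^1)\leq\exp(-\gamma_{2,q}^2+1+c\sqrt{d})$ supplying the exponential factor in $M_p$. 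Where you genuinely diverge is the final rate extraction: the paper runs the standard convex-optimization telescoping $\norm{\theta^{t+1}-\theta}_2^2\leq\norm{\theta^t-\theta}_2^2-2\eta(\aL(\theta^{t+1})-\aL(\theta))$ against the reference point $\theta=\frac{\log t}{\gamma_{2,q}}u_{2,q}$, and the $\log^2 t$ factor is exactly $\norm{\theta}_2^2$ for that reference point --- it does \emph{not} come from a ``slowly weakening local smoothness constant'' (the smoothness constant is uniformly bounded by $M_p$ along the path), so your attribution there is inaccurate. Your alternative, combining $\aL(\theta^{t+1})\leq\aL(\theta^t)-\frac{\eta}{2}\norm{\nabla\aL(\theta^t)}_2^2$ with $\norm{\nabla\aL(\theta^t)}_2\geq\gamma_{2,q}\aL(\theta^t)$ into the self-bounding recursion $\aL(\theta^{t+1})\leq\aL(\theta^t)-\frac{\eta\gamma_{2,q}^2}{2}\aL(\theta^t)^2$ and unrolling via $\frac{1}{\aL(\theta^{t+1})}\geq\frac{1}{\aL(\theta^t)}+\frac{\eta\gamma_{2,q}^2}{2}$, is valid and in fact yields the cleaner bound $\cO\rbr{\frac{1}{t\eta\gamma_{2,q}^2}}$ with no logarithmic factor, which is stronger than the stated \eqref{lqriskconvergence}; the only thing you give up is that the paper's reference-point argument is reused verbatim to prove the norm lower bound feeding into the parameter-convergence results, so you would need to keep (or rederive) that comparison elsewhere.
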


We point out here that \eqref{l2riskconvergence} is a special case of \eqref{lqriskconvergence}. In particular,
by the definition of $\gamma_{2,q}(c)$, we have that $\gamma_{2,2}(c) = \gamma_2 - c$, which recovers  bound  \eqref{l2riskconvergence} from \eqref{lqriskconvergence}.


\noindent \textbf{Parameter Convergence.}
We  show that if we set $c < \gamma_q$ in the GDAT algorithm with stepsizes specified in Theorem \ref{thrm:lqriskconvergence}, with $\ell_q$ perturbation, 
the algorithm still possesses implicit bias property, i.e., 
$\theta^t$ still has directional convergence, 
and the limiting direction depends on the perturbation set $\Delta$.

Before we formally prove the implicit bias of GDAT, we provide some intuitions here for better understanding.
Note that we can solve for the adversarial perturbation analytically, then from the update of GDAT it is clear to see that $\theta^t$ is a  conic combination of $\{z_i - c \partial \norm{\theta^t}_p\}_{i\in[n]}$, and
$\partial \norm{\theta^t}_p$ only depends on the direction of $\theta^t$. Hence 
 by normalizing the norm of $\theta^t$ and using  $\lim_{t \to \infty} \norm{\theta^t}_2 = \infty$, if the limit $\overline{u} = \lim_{t \to \infty} \frac{\theta^t}{\norm{\theta^t}_2}$ exists,  it  satisfies the following condition under proper scaling that
\begin{align*}
 \theta & = \sum_{i=1}^n a_i (z_i -  c\partial \norm{\theta^t}_p),\\
 \text{s.t.}& \quad   a_i \geq 0 , z_i^\top \theta \geq 1, \forall i =1,\ldots n ,\\
 & \quad a_i (z_i^\top \theta - 1) = 0, \forall i = 1, \ldots n.
\end{align*}
Defining $a = (a_1, \ldots, a_n)$ and $(\hat{\theta}, a) = \big(( \norm{\theta}_p c+1)\theta, (\norm{\theta}_p c+1)a\big)$,  it is easy to see that $(\hat{\theta}, a)$ is a solution to the following system
\begin{align}
\theta & = \sum_{i=1}^n a_i (z_i - c \partial \norm{\theta^t}_p) \label{l1_system1} , \\
 \text{s.t.:}& \quad   a_i \geq 0 , z_i^\top \theta \geq c\norm{\theta}_p + 1, \forall i =1,\ldots n.\label{l1_system2}\\
 & \quad a_i (z_i^\top \theta - c\norm{\theta}_p -1) = 0, \forall i = 1, \ldots n .\label{l1_system3}
\end{align}
Notice that the above set of equations (\ref{l1_system1})-(\ref{l1_system3}) is exactly the first-order KKT condition of the following optimization problem
\begin{align}\label{mix_margin_formulation:2}
\min_{\theta} \frac{1}{2} \norm{\theta}_2^2 
 \quad \text{s.t.}& \quad   z_i^\top \theta \geq c\norm{\theta}_p + 1, \forall i =1,\ldots n .
 \end{align}

\begin{theorem}[Implicit Bias of GDAT with $\ell_q$-norm Perturbation]\label{thrm:lqparameterconvergence}
Under the same conditions in Theorem \ref{thrm:lqriskconvergence}, define $\overline{\theta}^t = \frac{\theta^t}{\norm{\theta^t}_2}$, then we have:
\begin{align*}
 1 - \inner{\overline{\theta}^t}{u_{2,q}} = \cO \rbr{\frac{\log n}{\log t}}
\end{align*}
\end{theorem}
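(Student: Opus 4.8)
The plan is to control the angle through the identity $\inner{\overline{\theta}^t}{u_{2,q}} = \inner{\theta^t}{u_{2,q}}/\norm{\theta^t}_2$ and to pin \emph{both} the numerator and denominator to the unnormalized robust margin $\rho_t := \min_{i}\rbr{y_i x_i^\top \theta^t - c\norm{\theta^t}_p}$ up to an additive $\cO(\log n)$. Writing $z_i = y_i x_i$ and $h_i(\theta) = y_i x_i^\top\theta - c\norm{\theta}_p$ (so that the inner maximization gives $\cL_{\mathrm{adv}}(\theta) = \tfrac1n\sum_i \exp(-h_i(\theta))$), the elementary sandwich $\tfrac1n\exp(-\rho_t) \le \cL_{\mathrm{adv}}(\theta^t) \le \exp(-\rho_t)$ couples the margin to the loss. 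Feeding in the risk rate of Theorem~\ref{thrm:lqriskconvergence} then yields the margin lower bound $\rho_t \ge \log t - \log n - \cO(\log\log t)$, the $\ell_q$-analog of the norm bound of Lemma~\ref{lemma:normlowerbound}; this is what supplies the $\Omega(\log t)$ in the denominator. Throughout I take $q\in(1,\infty)$ so that $\norm{\cdot}_p$ is differentiable away from the origin and $\partial\norm{\theta^t}_p$ is a genuine gradient.

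Next I would lower-bound the aligned component using the convex robust SVM \eqref{mix_margin_formulation:2}. Let $\hat\theta$ be its unique minimizer; by the margin interpretation \eqref{formulation:robustsvm_interp} one has $\norm{\hat\theta}_2 = 1/\gamma_{2,q}$ and $u_{2,q} = \gamma_{2,q}\hat\theta$. Since each $h_i$ is concave and positively homogeneous of degree one, $\theta^t/\rho_t$ is feasible for \eqref{mix_margin_formulation:2} once $\rho_t>0$. Strong convexity of $\tfrac12\norm{\cdot}_2^2$ together with the KKT stationarity $\hat\theta = \sum_i a_i(z_i - c\,\partial\norm{\hat\theta}_p)$ with $a_i\ge 0$ supported on $\mathrm{SV}(\mathcal{S})$ (exactly the system \eqref{l1_system1}--\eqref{l1_system3}) gives $\norm{\theta^t/\rho_t - \hat\theta}_2^2 \le \norm{\theta^t/\rho_t}_2^2 - \norm{\hat\theta}_2^2$, and expanding this yields $\inner{\theta^t}{u_{2,q}} \ge \rho_t/\gamma_{2,q}$. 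Thus the component of $\theta^t$ along $u_{2,q}$ already matches the target scale $\rho_t/\gamma_{2,q}$ for free.

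It then remains to prove the matching upper bound $\gamma_{2,q}\norm{\theta^t}_2 \le \rho_t + \cO(\log n)$: combined with the two facts above, $1 - \inner{\overline{\theta}^t}{u_{2,q}} \ge 1 - \tfrac{\rho_t/\gamma_{2,q}}{(\rho_t + \cO(\log n))/\gamma_{2,q}}$, so $1 - \inner{\overline{\theta}^t}{u_{2,q}} \le \cO(\log n)/\rho_t = \cO(\log n/\log t)$, which is the claim. For this upper bound I would first generalize Lemma~\ref{lemma:innerbound} to $\inner{-\nabla\cL_{\mathrm{adv}}(\theta)}{u_{2,q}} \ge \gamma_{2,q}\cL_{\mathrm{adv}}(\theta)$, which is immediate from $\inner{z_i - c\,\partial\norm{\theta}_p}{u_{2,q}} \ge \gamma_{2,q}$; summing the update gives $\inner{\theta^t}{u_{2,q}} \ge \gamma_{2,q}\sum_{s<t}\eta^s\cL_{\mathrm{adv}}(\theta^s)$, while the path-length bound gives $\norm{\theta^t}_2 \le \sum_{s<t}\eta^s\norm{\nabla\cL_{\mathrm{adv}}(\theta^s)}_2$. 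The task therefore reduces to showing that the cumulative misalignment $\sum_{s<t}\eta^s\rbr{\norm{\nabla\cL_{\mathrm{adv}}(\theta^s)}_2 - \gamma_{2,q}\cL_{\mathrm{adv}}(\theta^s)}$ is $\cO(\log n)$, equivalently to controlling the potential $\gamma_{2,q}\norm{\theta^t}_2 + \log\cL_{\mathrm{adv}}(\theta^t)$ via the local-smoothness descent estimate already used to prove Theorem~\ref{thrm:lqriskconvergence}.

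The hard part will be exactly this norm upper bound. In the $\ell_2$ setting it is delivered by the bounded-deviation Lemma~\ref{lemma:deviationbound}, whose proof crucially uses that the support vectors form a \emph{fixed} set spanning $\mathbb{R}^d$; here the adversarial direction $\partial\norm{\theta^t}_p$ rotates as $\theta^t$ rotates, so the effective points $z_i - c\,\partial\norm{\theta^t}_p$ drift and the static spanning argument no longer pins the component orthogonal to $u_{2,q}$ to a constant, but only to a quantity growing like $\cO(\sqrt{\log n\,\log t})$. Quantifying this drift of $\partial\norm{\theta^t}_p$ and showing the accumulated perpendicular motion stays at the $\cO(\log n)$ level inside the potential is the technical heart of the argument, and it is precisely this looser control that degrades the rate from the $\cO(1/\log^2 t)$-type behavior a bounded deviation would give to the stated $\cO(\log n/\log t)$.
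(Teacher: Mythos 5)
Your decomposition is sound and the first two thirds of it are correct: the sandwich $\tfrac1n e^{-\rho_t}\le \aL(\theta^t)\le e^{-\rho_t}$ combined with Theorem \ref{thrm:lqriskconvergence} does give $\rho_t=\Omega(\log t)$, and the variational-inequality argument for $\inner{\theta^t}{u_{2,q}}\ge \rho_t/\gamma_{2,q}$ is a clean (arguably tidier than the paper's) way to handle the aligned component. The genuine gap is the step you yourself flag as ``the technical heart'': the norm upper bound $\gamma_{2,q}\norm{\theta^t}_2\le\rho_t+\cO(\log n)$ is asserted, not proved, and the route you sketch for it is the wrong one. The bounded-deviation machinery of Lemma \ref{lemma:deviationbound} is not what delivers this bound --- in the paper it is used only for the sharper $\tilde{\cO}(1/\sqrt{T})$ rate of Theorem \ref{thrm:l2parameter}, not for the $\cO(\log n/\log t)$ rate here --- and chasing the drift of $\partial\norm{\theta^t}_p$ and the component orthogonal to $u_{2,q}$ is a dead end: by your own accounting it yields only $\cO(\sqrt{\log n\log t})$ control of the perpendicular part, which through your own final display would degrade the rate to $\cO(\sqrt{\log n/\log t})$, strictly weaker than the claim.

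The fix is an argument you actually name in passing (``controlling the potential $\gamma_{2,q}\norm{\theta^t}_2+\log\aL(\theta^t)$ via the local-smoothness descent estimate'') but do not execute, and it makes no reference to $\mathrm{span}(u_{2,q})^\perp$ at all. From \eqref{ineq:smoothlq3} and $1+x\le e^x$ one gets $\log\aL(\theta^{t+1})\le\log\aL(\theta^t)-\eta\norm{\nabla\aL(\theta^t)}_2^2/\aL(\theta^t)+\tfrac{m_p\eta^2}{2}\norm{\nabla\aL(\theta^t)}_2^2$, while Lemma \ref{lemma:innerbound_lq} plus Cauchy--Schwarz gives $\norm{\nabla\aL(\theta^t)}_2\ge\gamma_{2,q}\aL(\theta^t)$ and hence $\norm{\nabla\aL(\theta^t)}_2^2/\aL(\theta^t)\ge\gamma_{2,q}\norm{\nabla\aL(\theta^t)}_2$. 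Since also $\norm{\theta^{t+1}}_2\le\norm{\theta^t}_2+\eta\norm{\nabla\aL(\theta^t)}_2$, the first-order terms cancel in the potential and the accumulated quadratic error is $\tfrac{m_p\eta^2}{2}\sum_s\norm{\nabla\aL(\theta^s)}_2^2\le m_p\eta\,\aL(\theta^1)=\cO(1)$ by \eqref{ineq:smoothlq2}. Therefore $\gamma_{2,q}\norm{\theta^t}_2\le-\log\aL(\theta^t)+\cO(1)\le\rho_t+\log n+\cO(1)$, which is exactly your missing inequality; plugging it back in completes your proof. This is the same mechanism as the paper's proof, which telescopes $\aL(\theta^t)\le\exp\rbr{-\eta\gamma_{2,q}\sum_s\norm{\nabla\aL(\theta^s)}_2+\cO(1)}$ and compares against the path-length bound $\norm{\theta^t}_2\le\eta\sum_s\norm{\nabla\aL(\theta^s)}_2$. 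So as written the proposal has a real hole, but it is one inequality away from being a complete and essentially equivalent argument.
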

Combining Theorem \ref{thrm:lqparameterconvergence}  and Lemma \ref{lemma:robustsvm_interp}, we  conclude that GDAT with $\ell_q$-norm perturbation indeed promotes robustness against $\ell_q$ perturbation. 
Using GDAT with $\ell_q$-norm perturbation will result in a classifier which is the maximum $\ell_2$-norm margin
classifier under worst case $\ell_q$-norm perturbations to the samples bounded by $c$.
The learned classifier will have $\ell_q$-norm margin at least $c$. As we increase perturbation level $c$ to $\gamma_q$, the learned classifier will converge to maximum $\ell_q$-norm margin classifier.

\section{Numerical Experiment}\label{section:experiment}

In this section, we first conduct numerical experiments on  linear classifiers  to backup our theoretical findings. We further empirically extend our method to neural networks, where our numerical results demonstrate that our  theoretical results can be potentially generalized.



\noindent \textbf{Linear Classifiers.}~We investigate the  empirical performance of the GDAT algorithm on linear classifiers,
with training set 
$\mathcal{S} = \{ \rbr{(-0.5, 1), +1}, \rbr{(-0.5, -1), -1}, \rbr{(-0.75, -1), -1}, \rbr{(2,1), +1}\}$.
It is straightforward to verify that the maximum $\ell_2$-norm margin classifier  is 
$u_2 = (0, 1)$. 

Considering  $\ell_2$-norm perturbations,
we first run standard clean training with GD, and GDAT with $\ell_2$-norm perturbation ($c = 0.95 \gamma_2$), for $2.5 \times 10^4$ number of iterations. In both GD and GDAT we take constant stepsizes, with $\eta = 1$ and  $\eta = 0.1$, respectively. 
By Figure~\ref{fig:gdat_linear}(a), we see that the convergence rate of adversarial loss using GDAT is similar to the convergence rate of clean loss using GD. 
However, when we directly compare the clean losses of GDAT and GD,  GDAT clearly demonstrates an exponential speed-up in comparison with GD, which is consistent with Corollary~\ref{cor:loss_speedup}.
Additionally, as pointed out by Theorem \ref{thrm:l2parameter}, GDAT also enjoys significant speed-up in terms of the directional convergence  of $\theta^t$ to $u_2$. We also compare the norm growth $\norm{\theta^t}_2$, and observe that the norm generated by GDAT grows much faster than the norm generated by GD, which is also in alignment with our discussions in Section \ref{section:l2perturbation}.
\begin{figure}[h!]
  \includegraphics[height=0.33\textheight]{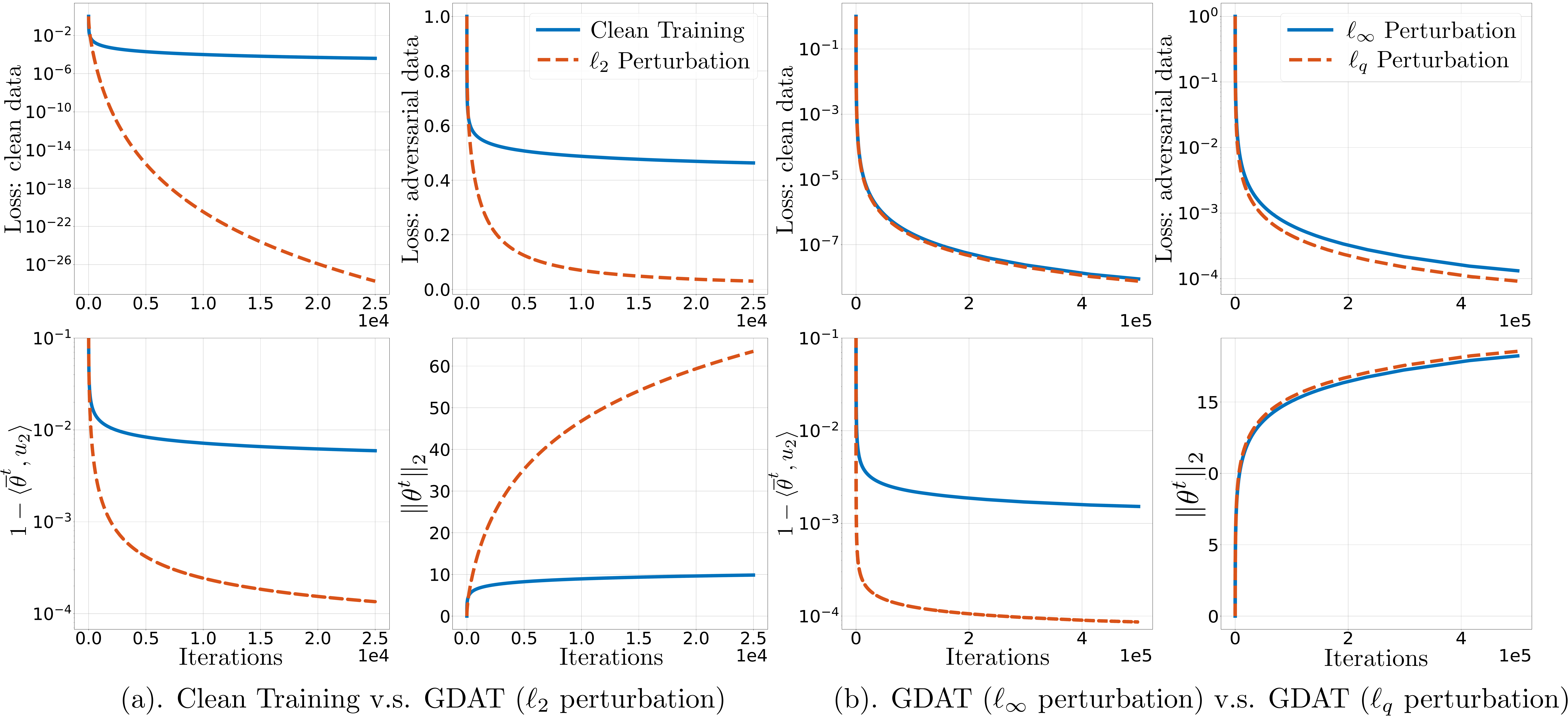}
  \caption{GDAT of Linear Classifiers.}
  \label{fig:gdat_linear}
\end{figure}

We further run GDAT with $\ell_\infty$-norm perturbation ($c = 0.5$).
By Lemma \ref{lemma:robustsvm_interp}, we have that $u_{2,\infty} = (0, 1)$.
Note that the Hausdorff distance between $\ell_{q}$-norm ball and $\ell_\infty$-norm ball distance goes to zero as $q$ goes to infinity. 
Thus, we have that \eqref{formulation:robustsvm_interp} for $q = 1000$ is  a close approximation of \eqref{formulation:robustsvm_interp} for $q = \infty$. 
We  run two versions of GDAT, where one uses $\ell_q$-norm perturbation with $q = 1000$, and the other uses $\ell_\infty$-norm perturbation.
We run both algorithms with stepsize $\eta = 0.1$ for $5.0 \times 10^5$ number of iterations, and we present the results in Figure~\ref{fig:gdat_linear}(b). We find  that the two training methods behave similarly. In addition, the empirical  directional convergence rates of $\theta^t$ just  differ slightly.

\noindent \textbf{Neural Networks.} 
It is seen above that  GDAT with $\ell_2$-norm perturbations converges significantly faster than GD for linear classifiers in adversarial training. A natural question is whether this is still the case on adversarial training of more complicated neural networks. We conduct experiments on neural network with one hidden layer.
We take the two classes from MNIST dataset with label ``$2$" and ``$9$" to form our training set $\mathcal{S}$. We also vary the width of the hidden layer in $\{64 \times 64, 128 \times 128, 256 \times 256\}$. 

One major difference from the case of  linear classifiers is that we cannot solve the inner maximization problem of \eqref{formulation:advtrain} exactly as it does not admits a closed-form solution. Instead,  we solve the inner problem approximately using projected gradient descent with $20$ iterations and stepsize $0.01$.
We test two versions of GDAT, where one adopts  $\ell_2$-norm perturbations ($c = 2.8$), and the other uses $\ell_\infty$-norm perturbations ($c = 0.1$).
For standard clean training and the outer minimization problem in~\eqref{formulation:advtrain}, we use the stochastic gradient descent algorithm with batch size $128$ and  constant stepsize~$10^{-5}$. 

We compare the loss and classification accuracy, which are evaluated using the clean training samples, of standard clean training and GDAT.
By Figure \ref{fig:gdat_nn}, we see that GDAT indeed accelerates the convergence of both  loss and classification accuracy on clean training samples. The performance gap is most obvious when the width of the hidden layer is small, and reduces gradually as we increase the width of the hidden layer.
We argue that such reduction comes from the fact that as network width increases, the margin on the samples outputted by the hidden layer also increases. As suggested by Theorem \ref{thrm:l2parameter}, in this case, a larger perturbation level $c$ should be used.
We conduct additional experiments in Appendix \ref{section:addexperiment} to empirically verify our argument.
\begin{figure}[h!]
  \includegraphics[height=0.37\textheight]{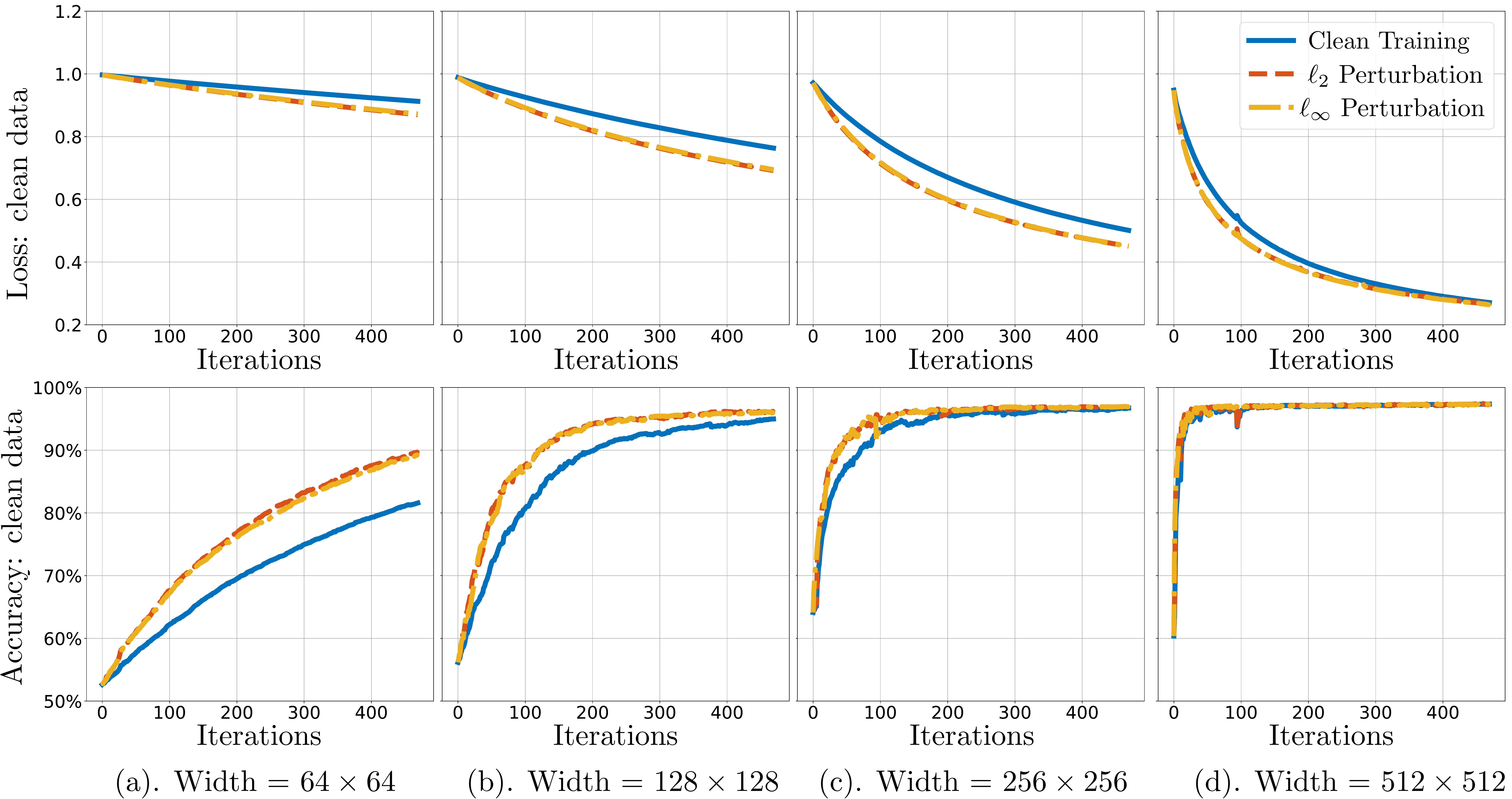}
  \caption{GDAT of Neural Network on MNIST Dataset.}
  \label{fig:gdat_nn}
\end{figure}

\section{Discussions}\label{section:conclusion}

We investigate the implicit bias of GDAT for linear classifier.  There are several plausible natural extensions. 
For example, we can represent a linear classifier using a {\bf deep linear network}, which is significantly overparameterized.
Some recent results characterize the implicit bias of gradient descent for training deep linear networks \citep{matus:gdaligns} and linear convolutional networks \citep{jasonlee:cnn}.
 Motivated by these results,  investigating the implicit bias of GDAT in training deep linear networks worths future investigations.

Meanwhile,
investigating implicit bias in {\bf deep nonlinear networks} is a more important and challenging direction: 
(1) For linear classifiers, 
adding adversarial perturbations during training can be understood as a form of regularization, which explains the faster convergence in training. Although observed empirically, 
the potential acceleration of adversarial training is not yet understood in the current literature, to the best of our knowledge. 
(2) The notion of margin for neural networks still lacks proper definition,
which we need to define to facilitate investigations on the effect of adversarial training in promoting robustness. 
(3) Ultrawide nonlinear networks have been shown to evolve similarly to linear networks using gradient descent \citep{mei:lineariednn, lee:widenn}. We shall further investigate if our results on linear classifiers can be extended to wide nonlinear networks. 

\bibliography{advtrain}

\begin{thebibliography}{40}
\expandafter\ifx\csname natexlab\endcsname\relax\def\natexlab#1{#1}\fi
\expandafter\ifx\csname url\endcsname\relax
  \def\url#1{\texttt{#1}}\fi
\expandafter\ifx\csname urlprefix\endcsname\relax\def\urlprefix{URL }\fi

\bibitem[{Athalye et~al.(2018)Athalye, Carlini and
  Wagner}]{anish:obfuscated_gradient}
\textsc{Athalye, A.}, \textsc{Carlini, N.} and \textsc{Wagner, D.} (2018).
\newblock Obfuscated gradients give a false sense of security: Circumventing
  defenses to adversarial examples.
\newblock In \textit{International Conference on Machine Learning}.

\bibitem[{Ben-Tal et~al.(2009)Ben-Tal, El~Ghaoui and
  Nemirovski}]{bental:robustopt}
\textsc{Ben-Tal, A.}, \textsc{El~Ghaoui, L.} and \textsc{Nemirovski, A.}
  (2009).
\newblock \textit{Robust Optimization}, vol.~28.
\newblock Princeton University Press.

\bibitem[{Brutzkus et~al.(2018)Brutzkus, Globerson, Malach and
  Shalev-Shwartz}]{shwartz:sgd_linearsep}
\textsc{Brutzkus, A.}, \textsc{Globerson, A.}, \textsc{Malach, E.} and
  \textsc{Shalev-Shwartz, S.} (2018).
\newblock {SGD} learns over-parameterized networks that provably generalize on
  linearly separable data.
\newblock In \textit{International Conference on Learning Representations}.

\bibitem[{Carlini and Wagner(2016)}]{carlini:distillationnotenough}
\textsc{Carlini, N.} and \textsc{Wagner, D.} (2016).
\newblock Defensive distillation is not robust to adversarial examples.
\newblock \textit{arXiv preprint arXiv:1607.04311} .

\bibitem[{Carlini and Wagner(2017)}]{carlini:cw}
\textsc{Carlini, N.} and \textsc{Wagner, D.} (2017).
\newblock Towards evaluating the robustness of neural networks.
\newblock In \textit{2017 IEEE Symposium on Security and Privacy}.

\bibitem[{Charles et~al.(2019)Charles, Rajput, Wright and
  Papailiopoulos}]{charles2019convergence}
\textsc{Charles, Z.}, \textsc{Rajput, S.}, \textsc{Wright, S.} and
  \textsc{Papailiopoulos, D.} (2019).
\newblock Convergence and margin of adversarial training on separable data.
\newblock \textit{arXiv preprint arXiv:1905.09209} .

\bibitem[{Choromanska et~al.(2015)Choromanska, Henaff, Mathieu, Arous and
  LeCun}]{choromanska:landscape5}
\textsc{Choromanska, A.}, \textsc{Henaff, M.}, \textsc{Mathieu, M.},
  \textsc{Arous, G.~B.} and \textsc{LeCun, Y.} (2015).
\newblock The loss surfaces of multilayer networks.
\newblock In \textit{Artificial Intelligence and Statistics}.

\bibitem[{Feige et~al.(2015)Feige, Mansour and Schapire}]{schapire:robustness}
\textsc{Feige, U.}, \textsc{Mansour, Y.} and \textsc{Schapire, R.} (2015).
\newblock Learning and inference in the presence of corrupted inputs.
\newblock In \textit{Conference on Learning Theory}.

\bibitem[{Ghorbani et~al.(2019)Ghorbani, Mei, Misiakiewicz and
  Montanari}]{mei:lineariednn}
\textsc{Ghorbani, B.}, \textsc{Mei, S.}, \textsc{Misiakiewicz, T.} and
  \textsc{Montanari, A.} (2019).
\newblock Linearized two-layers neural networks in high dimension.
\newblock \textit{arXiv preprint arXiv:1904.12191} .

\bibitem[{Goodfellow et~al.(2015)Goodfellow, Shlens and
  Szegedy}]{goodfellow:advexample}
\textsc{Goodfellow, I.}, \textsc{Shlens, J.} and \textsc{Szegedy, C.} (2015).
\newblock Explaining and harnessing adversarial examples.
\newblock In \textit{International Conference on Learning Representations}.

\bibitem[{Gunasekar et~al.(2018{\natexlab{a}})Gunasekar, Lee, Soudry and
  Srebro}]{gunasekar:implicitbias}
\textsc{Gunasekar, S.}, \textsc{Lee, J.}, \textsc{Soudry, D.} and
  \textsc{Srebro, N.} (2018{\natexlab{a}}).
\newblock Characterizing implicit bias in terms of optimization geometry.
\newblock In \textit{International Conference on Machine Learning}.

\bibitem[{Gunasekar et~al.(2018{\natexlab{b}})Gunasekar, Lee, Soudry and
  Srebro}]{jasonlee:cnn}
\textsc{Gunasekar, S.}, \textsc{Lee, J.~D.}, \textsc{Soudry, D.} and
  \textsc{Srebro, N.} (2018{\natexlab{b}}).
\newblock Implicit bias of gradient descent on linear convolutional networks.
\newblock In \textit{Advances in Neural Information Processing Systems}.

\bibitem[{He et~al.(2017)He, Wei, Chen, Carlini and
  Song}]{He:emsembledefensenotenough}
\textsc{He, W.}, \textsc{Wei, J.}, \textsc{Chen, X.}, \textsc{Carlini, N.} and
  \textsc{Song, D.} (2017).
\newblock Adversarial example defense: Ensembles of weak defenses are not
  strong.
\newblock In \textit{11th USENIX Workshop on Offensive Technologies}.

\bibitem[{Hinton et~al.(2012)Hinton, Deng, Yu, Dahl, Mohamed, Jaitly, Senior,
  Vanhoucke, Nguyen and Kingsbury}]{hinton:dlspeech}
\textsc{Hinton, G.}, \textsc{Deng, L.}, \textsc{Yu, D.}, \textsc{Dahl, G.},
  \textsc{Mohamed, A.-r.}, \textsc{Jaitly, N.}, \textsc{Senior, A.},
  \textsc{Vanhoucke, V.}, \textsc{Nguyen, P.} and \textsc{Kingsbury, B.}
  (2012).
\newblock Deep neural networks for acoustic modeling in speech recognition.
\newblock \textit{IEEE Signal Processing Magazine} \textbf{29}.

\bibitem[{Hoffer et~al.(2017)Hoffer, Hubara and Soudry}]{Hoffer:bias_longer}
\textsc{Hoffer, E.}, \textsc{Hubara, I.} and \textsc{Soudry, D.} (2017).
\newblock Train longer, generalize better: closing the generalization gap in
  large batch training of neural networks.
\newblock In \textit{Advances in Neural Information Processing Systems}.

\bibitem[{Ioffe and Szegedy(2015)}]{Ioffe:batch_norm}
\textsc{Ioffe, S.} and \textsc{Szegedy, C.} (2015).
\newblock Batch normalization: Accelerating deep network training by reducing
  internal covariate shift.
\newblock In \textit{International Conference on Machine Learning}.

\bibitem[{Ji and Telgarsky(2018)}]{matus:riskandparameter}
\textsc{Ji, Z.} and \textsc{Telgarsky, M.} (2018).
\newblock Risk and parameter convergence of logistic regression.
\newblock \textit{arXiv preprint arXiv:1803.07300} .

\bibitem[{Ji and Telgarsky(2019)}]{matus:gdaligns}
\textsc{Ji, Z.} and \textsc{Telgarsky, M.} (2019).
\newblock Gradient descent aligns the layers of deep linear networks.
\newblock In \textit{International Conference on Learning Representations}.

\bibitem[{Keskar et~al.(2017)Keskar, Mudigere, Nocedal, Smelyanskiy and
  Tang}]{nitishi:bias_batch}
\textsc{Keskar, N.~S.}, \textsc{Mudigere, D.}, \textsc{Nocedal, J.},
  \textsc{Smelyanskiy, M.} and \textsc{Tang, P. T.~P.} (2017).
\newblock On large-batch training for deep learning: Generalization gap and
  sharp minima.
\newblock In \textit{International Conference on Learning Representations}.

\bibitem[{Krizhevsky et~al.(2012)Krizhevsky, Sutskever and
  Hinton}]{Krizhevsky:imagenet}
\textsc{Krizhevsky, A.}, \textsc{Sutskever, I.} and \textsc{Hinton, G.~E.}
  (2012).
\newblock Imagenet classification with deep convolutional neural networks.
\newblock In \textit{Advances in Neural Information Processing Systems}.

\bibitem[{Lee et~al.(2019)Lee, Xiao, Schoenholz, Bahri, Sohl-Dickstein and
  Pennington}]{lee:widenn}
\textsc{Lee, J.}, \textsc{Xiao, L.}, \textsc{Schoenholz, S.~S.}, \textsc{Bahri,
  Y.}, \textsc{Sohl-Dickstein, J.} and \textsc{Pennington, J.} (2019).
\newblock Wide neural networks of any depth evolve as linear models under
  gradient descent.
\newblock \textit{arXiv preprint arXiv:1902.06720} .

\bibitem[{Madry et~al.(2018)Madry, Makelov, Schmidt, Tsipras and
  Vladu}]{madry:robust}
\textsc{Madry, A.}, \textsc{Makelov, A.}, \textsc{Schmidt, L.},
  \textsc{Tsipras, D.} and \textsc{Vladu, A.} (2018).
\newblock Towards deep learning models resistant to adversarial attacks.
\newblock In \textit{International Conference on Learning Representations}.

\bibitem[{Moosavi-Dezfooli et~al.(2016)Moosavi-Dezfooli, Fawzi and
  Frossard}]{seyed:deepfool}
\textsc{Moosavi-Dezfooli, S.-M.}, \textsc{Fawzi, A.} and \textsc{Frossard, P.}
  (2016).
\newblock Deepfool: a simple and accurate method to fool deep neural networks.
\newblock In \textit{Proceedings of the IEEE Conference on Computer Vision and
  Pattern Recognition}.

\bibitem[{Nacson et~al.(2019)Nacson, Lee, Gunasekar, Savarese, Srebro and
  Soudry}]{nacson}
\textsc{Nacson, M.~S.}, \textsc{Lee, J.}, \textsc{Gunasekar, S.},
  \textsc{Savarese, P. H.~P.}, \textsc{Srebro, N.} and \textsc{Soudry, D.}
  (2019).
\newblock Convergence of gradient descent on separable data.
\newblock In \textit{The 22nd International Conference on Artificial
  Intelligence and Statistics, {AISTATS} 2019, 16-18 April 2019, Naha, Okinawa,
  Japan}.

\bibitem[{Neyshabur et~al.(2015{\natexlab{a}})Neyshabur, Salakhutdinov and
  Srebro}]{Neyshabur:bias_pathsgd}
\textsc{Neyshabur, B.}, \textsc{Salakhutdinov, R.~R.} and \textsc{Srebro, N.}
  (2015{\natexlab{a}}).
\newblock Path-{SGD}: Path-normalized optimization in deep neural networks.
\newblock In \textit{Advances in Neural Information Processing Systems}.

\bibitem[{Neyshabur et~al.(2015{\natexlab{b}})Neyshabur, Tomioka and
  Srebro}]{nayshabur:realbias}
\textsc{Neyshabur, B.}, \textsc{Tomioka, R.} and \textsc{Srebro, N.}
  (2015{\natexlab{b}}).
\newblock In search of the real inductive bias: On the role of implicit
  regularization in deep learning.
\newblock In \textit{International Conference on Learning Representations}.

\bibitem[{Neyshabur et~al.(2015{\natexlab{c}})Neyshabur, Tomioka and
  Srebro}]{neyshabur:norm_capacity}
\textsc{Neyshabur, B.}, \textsc{Tomioka, R.} and \textsc{Srebro, N.}
  (2015{\natexlab{c}}).
\newblock Norm-based capacity control in neural networks.
\newblock In \textit{Conference on Learning Theory}.

\bibitem[{Papernot et~al.(2017)Papernot, McDaniel, Goodfellow, Jha, Celik and
  Swami}]{nicolas:advattack_security}
\textsc{Papernot, N.}, \textsc{McDaniel, P.}, \textsc{Goodfellow, I.},
  \textsc{Jha, S.}, \textsc{Celik, Z.~B.} and \textsc{Swami, A.} (2017).
\newblock Practical black-box attacks against machine learning.
\newblock In \textit{Proceedings of the 2017 ACM on Asia Conference on Computer
  and Communications Security}.

\bibitem[{Papernot et~al.(2016)Papernot, McDaniel, Wu, Jha and
  Swami}]{papernot:distillation}
\textsc{Papernot, N.}, \textsc{McDaniel, P.}, \textsc{Wu, X.}, \textsc{Jha, S.}
  and \textsc{Swami, A.} (2016).
\newblock Distillation as a defense to adversarial perturbations against deep
  neural networks.
\newblock In \textit{2016 IEEE Symposium on Security and Privacy}.

\bibitem[{Sinha et~al.(2018)Sinha, Namkoong and Duchi}]{duchi:dro}
\textsc{Sinha, A.}, \textsc{Namkoong, H.} and \textsc{Duchi, J.} (2018).
\newblock Certifying some distributional robustness with principled adversarial
  training.
\newblock In \textit{International Conference on Learning Representations}.

\bibitem[{Soudry et~al.(2018)Soudry, Hoffer, Nacson, Gunasekar and
  Srebro}]{Soudry:implicitbias_lp}
\textsc{Soudry, D.}, \textsc{Hoffer, E.}, \textsc{Nacson, M.~S.},
  \textsc{Gunasekar, S.} and \textsc{Srebro, N.} (2018).
\newblock The implicit bias of gradient descent on separable data.
\newblock \textit{The Journal of Machine Learning Research} \textbf{19}
  2822--2878.

\bibitem[{Srivastava et~al.(2014)Srivastava, Hinton, Krizhevsky, Sutskever and
  Salakhutdinov}]{Srivastava:dropout}
\textsc{Srivastava, N.}, \textsc{Hinton, G.}, \textsc{Krizhevsky, A.},
  \textsc{Sutskever, I.} and \textsc{Salakhutdinov, R.} (2014).
\newblock Dropout: a simple way to prevent neural networks from overfitting.
\newblock \textit{The Journal of Machine Learning Research} \textbf{15}
  1929--1958.

\bibitem[{Szegedy et~al.(2014)Szegedy, Zaremba, Sutskever, Bruna, Erhan,
  Goodfellow and Fergus}]{szegedy:advexample}
\textsc{Szegedy, C.}, \textsc{Zaremba, W.}, \textsc{Sutskever, I.},
  \textsc{Bruna, J.}, \textsc{Erhan, D.}, \textsc{Goodfellow, I.} and
  \textsc{Fergus, R.} (2014).
\newblock Intriguing properties of neural networks.
\newblock In \textit{International Conference on Learning Representations}.

\bibitem[{Wald(1939)}]{wald:robuststat}
\textsc{Wald, A.} (1939).
\newblock Contributions to the theory of statistical estimation and testing
  hypotheses.
\newblock \textit{The Annals of Mathematical Statistics} \textbf{10} 299--326.

\bibitem[{Wilson et~al.(2017)Wilson, Roelofs, Stern, Srebro and
  Recht}]{wilson:bias_algorithm}
\textsc{Wilson, A.~C.}, \textsc{Roelofs, R.}, \textsc{Stern, M.},
  \textsc{Srebro, N.} and \textsc{Recht, B.} (2017).
\newblock The marginal value of adaptive gradient methods in machine learning.
\newblock In \textit{Advances in Neural Information Processing Systems}.

\bibitem[{Xie et~al.(2018)Xie, Wu, van~der Maaten, Yuille and
  He}]{kaiming:featuredenoising}
\textsc{Xie, C.}, \textsc{Wu, Y.}, \textsc{van~der Maaten, L.}, \textsc{Yuille,
  A.} and \textsc{He, K.} (2018).
\newblock Feature denoising for improving adversarial robustness.
\newblock \textit{arXiv preprint arXiv:1812.03411} .

\bibitem[{Xu et~al.(2009{\natexlab{a}})Xu, Caramanis and Mannor}]{huan:lasso}
\textsc{Xu, H.}, \textsc{Caramanis, C.} and \textsc{Mannor, S.}
  (2009{\natexlab{a}}).
\newblock Robust regression and lasso.
\newblock In \textit{Advances in Neural Information Processing Systems}.

\bibitem[{Xu et~al.(2009{\natexlab{b}})Xu, Caramanis and Mannor}]{huan:svm}
\textsc{Xu, H.}, \textsc{Caramanis, C.} and \textsc{Mannor, S.}
  (2009{\natexlab{b}}).
\newblock Robustness and regularization of support vector machines.
\newblock \textit{Journal of Machine Learning Research} \textbf{10} 1485--1510.

\bibitem[{Zhang et~al.(2017)Zhang, Bengio, Hardt, Recht and
  Vinyals}]{chiyuan:understanding}
\textsc{Zhang, C.}, \textsc{Bengio, S.}, \textsc{Hardt, M.}, \textsc{Recht, B.}
  and \textsc{Vinyals, O.} (2017).
\newblock Understanding deep learning requires rethinking generalization.
\newblock In \textit{International Conference on Learning Representations}.

\bibitem[{Zhang et~al.(2019)Zhang, Yu, Jiao, Xing, Ghaoui and
  Jordan}]{zhang:advrobustness}
\textsc{Zhang, H.}, \textsc{Yu, Y.}, \textsc{Jiao, J.}, \textsc{Xing, E.~P.},
  \textsc{Ghaoui, L.~E.} and \textsc{Jordan, M.~I.} (2019).
\newblock Theoretically principled trade-off between robustness and accuracy.
\newblock In \textit{International Conference on Machine Learning}.

\end{thebibliography}
\bibliographystyle{ims}
\appendix 

\section{Proof of Proposition \ref{prop:landscape}}
\begin{proof}
Suppose $c <\gamma_q$, consider $\theta_\alpha = \alpha u_q$ for $\alpha >0$. We have:
\begin{align*}
 \aL (\theta_\alpha) & = \frac{1}{n} \sum_{i=1}^n \exp\rbr{ -y_i x_i^\top \theta_\alpha + c \norm{\theta_\alpha}_p} \\
 & =  \frac{1}{n} \sum_{i=1}^n \exp\rbr{ - \alpha y_i x_i^\top u_q + c \alpha} \\
 & \leq \frac{1}{n} \sum_{i=1}^n \exp\rbr{ - \alpha \gamma_q + c \alpha}.
 \end{align*}
Take $\alpha \to \infty$ we obtain $\lim_{\alpha \to \infty} \aL (\theta_\alpha) = 0$, which implies $\inf_{\theta \in \mathbb{R}^d} \aL (\theta) = 0$. Note that $\cL(\theta)$ can not have any finite minimier since $\aL(\theta) > 0$ for any $\theta \in \mathbb{R}^d$.

If $c > \gamma_q$. From the definition of maximum $\ell_q$-norm margin, for any $\theta \in \RR^d$,
there exists $(y_i, x_i) \in \cS $ such that $y_i x_i^\top \theta \leqslant \gamma_q \norm{\theta}_p$. Hence,
$\aL(\theta) \geqslant \exp \frac{1}{n} \rbr{ (c - \gamma_q) \norm{\theta}_p}$.
Then it is easy to see that $\aL(\theta)$ has bounded sublevel set and hence a finite minimizer $\hat{\theta}$.
Since $\aL(\theta)$ is convex, we examine the KKT condition, given by:
\begin{align}\label{eq:KKT1}
 \frac{1}{n} \sum_{i=1}^n \exp\rbr{ -y_i x_i^\top \hat{\theta} + c \norm{\hat{\theta}}_p} \rbr{-y_i x_i + c\partial \norm{\hat{\theta}}_p} \ni 0.
\end{align}
Consider the regularized problem with regularization parameter $\eta$:
\begin{align*}
 \min_{\theta \in \RR^d}  \frac{1}{n} \sum_{i=1}^n \exp \rbr{ -y_i x_i^\top \theta} + \eta \norm{\theta}_p.
\end{align*}
Its KKT condition corresponds to:
\begin{align}\label{eq:KKT2}
\frac{1}{n} \sum_{i=1}^n \exp \rbr{-y_i x_i^\top \theta}(-y_i x_i) + \eta \partial \norm{\theta}_p \ni 0.
\end{align}
Then compare \eqref{eq:KKT1} and \eqref{eq:KKT2} we see that by taking $\eta = \frac{c}{n} \sum_{i=1}^n \exp \rbr{-y_i x_i^\top \hat{\theta} + c\norm{\hat{\theta}}_p}$, the solution to the adversarial training problem $\hat{\theta}$
would also be the solution to the regularized problem. 
\end{proof}

To facilitate our later discussions, we point out that by  the conjugacy of $\ell_p$-norm and $\ell_q$-norm, \eqref{formulation:adv_raw} has the following equivalent form that 
\begin{align}\label{formulation:advlogistic}
\min_{\theta \in \mathbb{R}^d} \cL_{\mathrm{adv}}(\theta) = \min_{\theta \in \mathbb{R}^d} \frac{1}{n} \sum_{i = 1}^n  \exp \rbr{-y_i x_i^\top \theta + c\norm{\theta}_p}.
\end{align}
In fact, one can verify that the GDAT algorithm is equivalent to gradient descent algorithm on \eqref{formulation:advlogistic}.


\section{Proofs for Section \ref{section:l2perturbation}}
\begin{proof}[Proof of Lemma \ref{lemma:innerbound}]
Recall we have $\aL (\theta) = \frac{1}{n} \sum_{i=1}^n \max_{\norm{\delta}_2 \leq c} \exp  \rbr{-y_i (x_i + \delta_i)^\top \theta}$. For each sample
$(x_i, y_i) \in \cS$, given a classifier $\theta$, the worst case perturbation  is $\tilde{\delta}_i = \argmax_{\norm{\delta}_2 \leq c} \exp \rbr{-y_i (x_i + \delta)^\top \theta} = \argmin_{\norm{\delta}_2 \leq c} y_i \delta^\top \theta$.  The corresponding loss is
$\aL(\theta) = \frac{1}{n} \sum_{i=1}^n \exp  \rbr{-y_i (x_i + \tilde{\delta}_i)^\top \theta}$.

Since for a fixed $\delta_i$, the function $\exp  \rbr{-y_i (x_i + \delta_i)^\top \theta}$ is convex in $\theta$, hence the gradient of $\aL(\theta)$ is 
\begin{align*}
-\nabla \aL (\theta) =  \frac{1}{n} \sum_{i=1}^n \exp \rbr{-y_i (x_i+\tilde{\delta}_i) ^\top \theta }y_i (x_i + \tilde{\delta}_i).
\end{align*}
Then from the definition of $u_2$ \eqref{def:svm}, we have
\begin{align}
\inner{-\nabla \aL (\theta) }{u_2} & = \sum_{i=1}^n \exp \rbr{-y_i (x_i+\tilde{\delta}_i) ^\top \theta } \inner{y_i (x_i + \tilde{\delta}_i)}{u_2} \\
& \geq \sum_{i=1}^n \exp \rbr{-y_i (x_i+\tilde{\delta}_i) ^\top \theta } \rbr{\inner{y_i x_i}{u_2} - c} \\ 
& \geq \sum_{i=1}^n \exp \rbr{-y_i (x_i+\tilde{\delta}_i) ^\top \theta } \rbr{\gamma_2 - c}  = \aL(\theta) (\gamma_2 -c),
\end{align}
where in the second inequality holds since $\norm{\tilde{\delta}_i}_2 \leqslant c$ and $\norm{u_2}_2=1$.
\end{proof}

\begin{proof}[Proof of Corollary \ref{cor:normboundlq}]
Since $\aL(\theta)$ is not differentiable at $\theta^0 = 0$, we  use subgradient (note that $\aL(\theta)$ is convex) at $0$. Specifically, we take
$\nabla \aL(\theta^0) = \frac{1}{n} \sum_{i=1}^n z_i \in \partial \aL(\theta^0)$. Then we have $\inner{\theta^1}{u_2} = \frac{\eta^0}{n} \sum_i \inner{z_i}{u_2} \geq \eta^0 \gamma_2$, where the last inequality uses the definition of $\gamma_2$.  

By Lemma \ref{lemma:innerbound}, we have $\inner{\theta^t}{u_2} \geq \eta^0 \gamma_2$ for all $t \geq 1$, which also implies $\inner{v}{u_2} \geq \eta^0 \gamma_2$ and hence $\norm{v^t}_2 \geq \eta^0 \gamma_2$ for $v \in \sbr{\theta^t, \theta^{t+1}}$.
\end{proof}

\begin{proof}[Proof of Theorem \ref{thrm:l2risk}]
For simplicity, we let $z_i = y_i x_i $, where we have $\norm{z_i}_2 \leq 1$ as we assume $\|x_i\|_2\leq 1$. We have
\begin{align*}
    \nabla \aL(\theta) &  = \frac{1}{n} \sum_{i=1}^n \exp \rbr{-z_i^\top \theta + c\norm{\theta}_2} \rbr{-z_i + c \frac{\theta}{\norm{\theta}_2}}, \\
    \nabla^2 \aL(\theta) & =  \frac{1}{n} \sum_{i=1}^n \exp\rbr{-z_i^\top \theta + c\norm{\theta}_2} \rbr{-z_i + c \frac{\theta}{\norm{\theta}_2}} \rbr{-z_i + c \frac{\theta}{\norm{\theta}_2}}^\top
    \\
    & ~~~ + \frac{1}{n} \sum_{i=1}^n \exp\rbr{-z_i^\top \theta + c\norm{\theta}_2} c \rbr{\norm{\theta}I - \frac{\theta \theta^\top}{\norm{\theta}_2}}/\norm{\theta}_2^2 \\
    & = \frac{1}{n} \sum_{i=1}^n \exp \rbr{-z_i^\top \theta + c\norm{\theta}_2} \sbr{z_i z_i^\top - 2 \frac{c z_i^\top \theta}{\norm{\theta}_2} + c^2 \theta \theta^\top /\norm{\theta}_2^2 + cI/\norm{\theta}_2 - c\theta\theta^\top / \norm{\theta}_2^3}.
\end{align*}
Note that the Hessian expression indicates that the objective is highly non-smooth around origin, and the loss is not even differentiable at origin. However, we shall prove  that starting from origin, every iteration generated by  GADT  stays away from the origin with distance bounded below.

Using Taylor's expansion, and by definition $\theta^{t+1} = \theta^t - \eta^t \nabla \aL(\theta^t)$, we have
\begin{align}\label{risk:taylor}
    \aL(\theta^{t+1}) \leq \aL(\theta^t) - \eta^t \norm{\nabla \aL(\theta^t)}_2^2 + \frac{(\eta^t \norm{\nabla \aL(\theta^t)})^2}{2} \max_{v \in \sbr{\theta^t, \theta^{t+1}}} \lambda({H(v)})_{\mathrm{max}},
\end{align}
where $ \lambda({H(v)})_{\mathrm{max}}$ denotes the largest eigenvalue of $H(v)$, where
\begin{align*}
    H(v) = \frac{1}{n} \sum_{i=1}^I n
    \exp \rbr{-z_i^\top v + c\norm{v}_2}
    \sbr{z_i z_i^\top - 2 \frac{c z_i^\top
    v}{\norm{v}_2} + c^2 v v^\top /\norm{v}_2^2 + cI/\norm{v}_2 - cvv^\top / \norm{v}_2^3}.
\end{align*}
To upper bound $H(v)$, we need a lower bound on $\norm{v}$, which is readily given by Corollary \ref{cor:normboundlq}. That is, $\norm{v}_2 \geq \eta^0 \gamma_2$.

We now analyze (\ref{risk:taylor}) for $t \geq 1$, where we  show that $\aL(\theta^t)$ is locally smooth with parameter proportional to $\aL(\theta^t)$, and with proper stepsize, the risk is monotonely decreasing.
Note that $z_i z_i^t \leq I$, $-2c \frac{z_i^\top v}{\norm{v}_2} \leq 2cI$, $c^2 v v^\top /\norm{v}_2^2 \leq c^2 I$. Now since $\norm{v^t}_2 \geq \eta^0 \gamma_2$, we have $cI/\norm{v}_2 - cvv^\top/\norm{v}_2^3 \leq \frac{2c}{\eta^0 \gamma_2} I$. Plugging them in, we have
\begin{align*}
    H(v) & \leq \frac{1}{n} \sum_i
    \exp \rbr{-z_i^\top v + c\norm{v}_2} \rbr{1 + 2c + c^2 + \frac{2c}{\eta^0 \gamma_2}}I \\
    & = \aL(v)
    \rbr{1 + 2c + c^2 + \frac{2c}{\eta^0 \gamma_2}}I,
\end{align*}
and (\ref{risk:taylor}) reduces to
\begin{equation}\label{risk:taylor2}
\begin{aligned}
    \aL(\theta^{t+1}) \leq & \aL(\theta^t)   - \eta^t \norm{\nabla \aL(\theta^t)}_2^2 \\
    & + \frac{(\eta^t \norm{\nabla \aL(\theta^t)})^2}{2}\sbr{(1+c)^2 + \frac{2c}{\eta^0 \gamma_2}}\max \cbr{\aL(\theta^{t}), \aL(\theta^{t+1})}.
\end{aligned}
\end{equation}
Suppose $\aL(\theta^{t+1}) > \aL(\theta^t)$, and  let $M = \sbr{ ( 1+c)^2 + \frac{2c}{\eta^0 \gamma_2}}$.
We have
\begin{align*}
 \aL(\theta^{t+1}) \leq \aL(\theta^t)  - \eta^t \norm{\nabla \aL(\theta^t)}_2^2 + \frac{(\eta^t \norm{\nabla \aL(\theta^t)})^2}{2}M \aL(\theta^{t+1}),
\end{align*}
which implies
\begin{align}\label{ineq:contro}
\aL(\theta^{t+1}) \leq \rbr{ 1 - \frac{M(\eta^t)^2}{2} \norm{\nabla \aL(\theta^t)}_2^2}^{-1} \rbr{\aL(\theta^t) - \eta^t \norm{\nabla \aL(\theta^t)}_2^2}. 
\end{align}
Meanwhile, if we choose $\eta^t$ satisfying
\begin{align}\label{risk:stepsize}
\eta^t M =  \eta^t \aL(\theta^t) \sbr{(1+c)^2 + \frac{2c}{\eta^0 \gamma_2}} \leq 1,
\end{align}
then we have the right hand side of \eqref{ineq:contro}  is upper bounded by $\aL (\theta^t)$, and we have
\begin{align*}
\aL(\theta^{t+1}) \leq \rbr{ 1 - \frac{M(\eta^t)^2}{2} \norm{\nabla \aL(\theta^t)}_2^2}^{-1} \rbr{\aL(\theta^t) - \eta^t \norm{\nabla \aL(\theta^t)}_2^2}  < \aL(\theta^t),
\end{align*}
which is clearly a contradiction. Hence, if $\eta^t$ satisfies \eqref{risk:stepsize},  by \eqref{risk:taylor2} we have
\begin{align}
    \aL(\theta^{t+1}) &  \leq \aL(\theta^t) - \eta^t \norm{\nabla \aL(\theta^t)}_2^2 + \frac{(\eta^t \norm{\nabla \aL(\theta^t)})^2}{2} \sbr{(1+c)^2 + \frac{2c}{\eta^0 \gamma_2}} \aL(\theta^t) \nonumber \\
    & \leq \aL(\theta^t) - \frac{\eta^t}{2} \norm{\nabla \aL(\theta^t)}_2^2 \label{risk:smooth},
\end{align}
where the last inequality holds by the  choice of $\eta^t$ in (\ref{risk:stepsize}).


Note that if (\ref{risk:stepsize}) holds for $t=1$ for $\eta^1 = \eta$,  by induction  it is easy to see that with constant stepsize $\eta^t = \eta$ for $t \geq 1$, (\ref{risk:stepsize})  holds for all $t \geq 1$. Hence for $t \geq 1$, we  choose stepsize $\eta$ such that $\eta \aL(\theta^1) \sbr{(1+c)^2 + \frac{2c}{\eta^0 \gamma_2}} \leq 1$. 
Note that $\aL(\theta^1) = \frac{1}{n} \sum_{i=1}^n \exp \rbr{-z_i^\top \theta^1 + c\norm{\theta^1}_2} \leq \exp \rbr{(1+c)\eta^0}$ since $\norm{\theta_1} \leq \eta^0$. Then we only require
\begin{align*}
    \eta &  \leq \exp \rbr{-(1+c)\eta^0}\cdot \frac{\eta^0 \gamma_2}{(1+c)^2 \eta^0 \gamma_2 + 2c} \\
    & = \exp \rbr{-(1+c)\eta^0}\cdot \frac{\eta^0(1+c) \gamma_2/(1+c)}{(1+c)^2 \eta^0 \gamma_2 + 2c} \\
    & \leq \frac{\gamma_2/e}{(1+c)^3 \gamma_2 + 2c(1+c)},
\end{align*}
where in the last inequality we take $\eta^0 = 1$ and use basic inequality $\exp(-x)x \leq e^{-1}$ for $x \geq 1$.
In summary, we choose $\eta^0 = 1$ and $\eta^t = \eta =  \min \{\frac{\gamma_2/e}{(1+c)^3 \gamma_2 + 2c(1+c)}, 1\}$ for $t \geq 1$, then by previous argument, we have (\ref{risk:smooth}) holds for all $t\geq 1$.

Now we are ready to apply the standard smoothness-based analysis of gradient descent using (\ref{risk:smooth}), take any $\theta \in \RR^d$, we have
\begin{align*}
    \norm{\theta^{t+1} - \theta}_2^2 & =
    \norm{\theta^t - \theta}_2^2 - 2\eta^t \inner{\nabla \aL(\theta^t)}{\theta^t - \theta} + (\eta^t)^2 \norm{\nabla \aL(\theta^t)}_2^2 \\
    & \leq \norm{\theta^t - \theta}_2^2 - 2\eta^t \rbr{\aL(\theta^t) - \aL(\theta)} + (\eta^t)^2 \norm{\nabla \aL(\theta^t)}_2^2 \\
    & \leq  \norm{\theta^t - \theta}_2^2 - 2\eta^t \rbr{\aL(\theta^t) - \aL(\theta)} + 2\eta^t \rbr{\aL(\theta^t) - \aL(\theta^{t+1})} \\
    & = \norm{\theta^t - \theta}_2^2 - 2\eta^t \rbr{\aL(\theta^{t+1}) - \aL(\theta)},
\end{align*}
where  the first inequality holds by the convexity of $\aL(\theta)$, and  the second inequality holds by (\ref{risk:smooth}). Now sum up the above inequality from $s = 1$ to $t-1$.  By $\eta^t = \eta \leq 1= \eta^0$ and  $\aL(\theta^{s+1}) \leq \aL(\theta^s)$, we have
\begin{align*}
    \aL(\theta^t) - \aL(\theta) \leq \frac{1}{2t\eta} \norm{\theta^1 - \theta}_2^2 \leq \frac{1}{t\eta} \rbr{\norm{\theta}_2^2 + \norm{\theta^1}_2^2}.
\end{align*}
Now since $\theta$ is arbitrary, letting $\theta = \frac{\log (t)}{\gamma_2 - c}\cdot u_2$, we have
\begin{align*}
\norm{\theta}_2^2 + \norm{\theta^1}_2^2 \leq   \frac{\log^2 t}{(\gamma_2 - c)^2} + (1+c)^2,
\end{align*}
and 
\begin{align*}
    \aL(\theta)  = \frac{1}{n} \sum_{i=1}^n \exp \rbr{- z_i^\top u_2 \cdot \frac{\log t}{\gamma_2 - c}  + c\cdot \frac{\log t}{\gamma_2 - c}}  \leq \frac{1}{t},
\end{align*}
which yields
\begin{align*}
    \aL(\theta^t) \leq \frac{1}{t} +  \rbr{ \frac{\log^2 t}{(\gamma_2 - c)^2} + (1+c)^2} = \cO \rbr{\frac{\log^2 t}{t\eta (\gamma_2 - c)^2}}.
\end{align*}
\end{proof}

\begin{proof}[Proof of Lemma \ref{lemma:deviationbound}]
For simplicity, we let $z_i = y_i x_i$ and $\ell_i(\theta) = \exp \rbr{-z_i^\top \theta + c\norm{\theta}_2}$.
Define 
\begin{align*}
\alpha = \min_{\norm{\xi}_2 = 1, \xi \in \mathrm{span}(u_2)^\perp} \max_{i \in \mathrm{SV}(\cS)} \inner{\xi}{z_i} 
\end{align*}
 where $\mathrm{SV}(\cS)$ denotes the set of support vectors. It has been shown in \citet{matus:gdaligns} (Lemma 2.10) that $\alpha > 0 $ with probability $1$ if the data is sampled from absolutely continuous distribution.

We have 
\begin{align}
  \inner{\nabla \aL(\theta^t)}{\theta^t_{\perp}} & = \frac{1}{n}
  \inner{\sum_{i=1}^n \exp \rbr{-z_i^\top \theta^t + c\norm{\theta^t}_2}
  \rbr{-z_i + c \frac{\theta^t}{\norm{\theta^t}_2}}}{\theta^t_{\perp}} \nonumber \\
  & = \frac{1}{n} \sum_{i=1}^n \ell_i(\theta^t) \inner{-z_i}{\theta^t_{\perp}}
   + \frac{1}{n} \sum_{i=1}^n \ell_i(\theta^t) \inner{c\frac{\theta^t}{\norm{\theta^t}_2}}{\theta^t_{\perp}} \nonumber \\
   & \geq \frac{1}{n} \sum_{i=1}^n \ell_i(\theta^t) \inner{-z_i}{\theta^t_{\perp}} \nonumber \nonumber \\
   & \geq \frac{1}{n} \sbr{\ell_j(\theta^t)\inner{-z_j^\prime}{\theta^t_\perp} +
   \sum_{\inner{z_i}{\theta^t_\perp}\geq 0, i \neq j} \ell_i(\theta^t)\inner{ - z_i}{\theta^t_\perp}  }, \label{norm: orthonorm}
\end{align}
where  $z^{\prime}_j \in S$ is arbitrary, by definition of $\alpha$: $\inner{-z^{\prime}_j}{\theta^t_{\perp}} \geq \alpha \norm{\theta^t_\perp}_2$. 

We  bound the first term as
\begin{align*}
  \ell_j(\theta^t)\inner{-z_j^\prime}{\theta^t_\perp} & \geq \exp \rbr{-(z^\prime_j)^\top \theta^t + c\norm{\theta^t}_2} \alpha \norm{\theta^t_\perp}_2 \\
  & = \exp \rbr{-(z^\prime_j)^\top \theta^t_{\perp}  -(z^\prime_j)^\top \theta^t_{u_2} + c\norm{\theta^t}_2} \alpha \norm{\theta^t_\perp}_2 \\
  & \geq \exp \rbr{-\inner{\theta^t}{\gamma_2 u_2}}
  \exp \rbr{\alpha \norm{\theta^t_\perp}_2} \alpha \norm{\theta^t_\perp}_2 \exp \rbr{c\norm{\theta^}_2},
\end{align*}
where the second inequality uses $\inner{z^\prime_j}{u_2} \geq \gamma_2$.

On the other hand, we can bound the second term in \eqref{norm: orthonorm} as
\begin{align*}
 \frac{1}{n} \sum_{\inner{z_i}{\theta^t_\perp}\geq 0, i\neq j} \ell_i(\theta^t)\inner{ - z_i}{\theta^t_\perp} & \geq
  \frac{1}{n} \sum_{\inner{z_i}{\theta^t_\perp}\geq 0, i\neq j}
 \exp \rbr{-z_i^\top \theta^t + c\norm{\theta^t}_2} \inner{-z_i}{\theta^t_\perp} \\
 & =  \frac{1}{n} \sum_{\inner{z_i}{\theta^t_\perp}\geq 0, i\neq j}
 \exp \rbr{-z_i^\top \theta^t_{u_2} -z_i^\top \theta^t_{\perp}
 + c\norm{\theta^t}_2} \inner{-z_i}{\theta^t_\perp} \\
 & \geq \exp \rbr{-\inner{\theta^t}{\gamma_2 u_2}} \exp(c\norm{\theta^t}_2) \exp(-z_i^\top \theta^t_\perp) \inner{-z_i}{\theta^t_\perp} \\
 & \geq \exp \rbr{-\inner{\theta^t}{\gamma_2 u_2}} \exp(c\norm{\theta^t}_2) (-\frac{1}{e}),
\end{align*}
where in the last inequality holds since $\inner{\theta^t}{u_2} \geq 0$, $\inner{z_i}{\theta^t_{u_2}} = z_i^\top \rbr{u_2^\top \theta^t} u_2 \geq \gamma_2 \inner{\theta^t}{u_2}$ and $-x\exp(-x) \geq -\frac{1}{e}$ for $x\geq 0$.

Plugging the two bounds above into  (\ref{norm: orthonorm}), we have
\begin{align*}
  \inner{\nabla \aL(\theta^t)}{\theta^t_{\perp}} \geq \exp \rbr{-\inner{\theta^t}{\gamma_2 u_2}} \exp(c\norm{\theta^t}_2)
  \sbr{\frac{1}{n} \exp \rbr{\alpha \norm{\theta^t_\perp}_2} \alpha \norm{\theta^t_\perp}_2 - \frac{1}{e} },
\end{align*}
which is non-negative when $\norm{\theta^t_\perp}_2 \geq K'  = \frac{1+\log n}{\alpha}$.


Supposing $\norm{\theta^t_\perp}_2 \geq  K'$, by gradient descent update, we have,
\begin{align}
  \norm{\theta^{t+1}_\perp}_2^2 & = \norm{\theta^{t}_\perp}_2^2 - 2\eta^t \inner{\nabla \aL(\theta^t)}{\theta^t_\perp} + (\eta^t)^2 \norm{\nabla \aL(\theta^t)}^2  \nonumber \\
  & \leq \norm{\theta^{t}_\perp}_2^2 + 2 \eta^t \norm{\nabla \aL(\theta^t)}_2^2 \nonumber  \\
  & \leq \norm{\theta^{t}_\perp}_2^2 + 2 \rbr{\aL(\theta^t) - \aL(\theta^{t+1})}  \label{eqn:thetanorm},
\end{align}
where the last inequality uses (\ref{risk:smooth}).

Now let $t_0$ satisfy $\norm{\theta^{t_0 - 1}_\perp}_2 < K'$ and $\norm{\theta^{t_0 - 1}_\perp}_2 \geq K'$.
Define $t_1 = \min\{s \geq t_0: \norm{\theta^s_\perp}_2 <  K' \}$, when $\norm{\theta^s_\perp}_2 \geq K'$ for all $s \geq t_0$ we define $t_1 = \infty$.
That is for any $t \in \{t_0, \ldots, t_1 - 1\}$, we have $\norm{\theta^t_\perp}_2 \geq K'$.
then for any $s$ such that $t_0 \leq s < t_1$, summing \eqref{eqn:thetanorm} up from $t_0$ to $s-1$ yields:
\begin{align*}
  \norm{\theta^{s}_\perp}_2^2 & \leq \norm{\theta^{t_0}_\perp}_2^2 + 2 \rbr{\aL(\theta^{t_0}) - \aL(\theta^{s})} \\ & \leq \norm{\theta^{t}_\perp}_2^2 + 2 \exp(1+c) \\
 & \leq \norm{\theta^{t}_\perp}_2^2 + 18,
\end{align*}
where we use $\aL(\theta^t) \leq \aL(\theta^1) \leq \exp(1+c)$ and $c < 1$. This inequality shows that for $\theta^t \in \{\theta^{t_0}, \ldots, \theta^{t_1 -1} \} \subset \{\theta: \norm{\theta_\perp}_2 \geq K'\}$,  
\begin{align*}
\norm{\theta^t_\perp}_2 \leq \norm{\theta^{t_0}_\perp}_2 +18.  
\end{align*}
Then, we only need to bound $\norm{\theta^{t_0}_\perp}_2$  to conclude the proof, where $t_0$ is the first time $\theta^t$ enters $ \{\theta: \norm{\theta_\perp}_2 \geq K' \}$. We have
\begin{align*}
  \theta^{t_0}_\perp = \theta^{t_0-1}_\perp + \eta^{t_0 -1} P_{\perp}
  \rbr{\frac{1}{n} \sum_{i=1}^n \ell_i(\theta^{t_0 -1}) (z_i - c\frac{\theta^{t_0 - 1}}{\norm{\theta^{t_0 - 1}}_2}) },
\end{align*}
where $P_\perp(\cdot)$ denotes the projection onto $\text{span}(u_2)^\perp$. Note that $t_0$ is the first time $\theta^t$ (re)-enters the region $\{\theta: \norm{\theta_\perp}_2 \geq K' \} $, and thus $\norm{\theta^{t_0 -1}_\perp}_2 < K'$. We have
\begin{align*}
  \norm{\theta^{t_0}_\perp}_2 \leq K' + \eta^{t_0 -1} (1+c) \leq K' + 1 + c < K' +2,
\end{align*}
where the last inequality we use $c < \gamma_2 \leq 1$.

In summary, we have shown that for any $t$ such that $\norm{\theta^t_\perp}_2 \geq K'$, we have
$\norm{\theta^t_\perp}_2 \leq K' + 20$, and we conclude that $\norm{\theta^t_\perp}_2 = K' + 20 = K$ for all $t \geq 0$. Note that $K$ only depends $\alpha(\cS)$ and sample size $n$.
\end{proof}

\begin{proof}[Proof of Lemma \ref{lemma:normlowerbound}]
To obtain a lower bound on $\norm{\theta^t}_2$, we first denote $\theta^t = \theta^t_u + \theta^t_{\perp}$, where $\theta^t_{u}$ denotes the projection of $\theta$ onto $\text{span}(u_2)$, and $\theta^t_{\perp}$ denotes the projection of $\theta$ onto $\text{span}(u_2)^{\perp}$. We have
\begin{align*}
  \frac{1}{n} \sum_{i=1}^n \exp(-z_i^\top \theta^t_u - z_i^\top \theta^t_{\perp})
  \leq \frac{\log^2 t}{t \eta (\gamma_2 - c)^2} \exp(-c\norm{\theta^t}_2).
\end{align*}
Let us assume  that $\norm{\theta^t_{\perp}}$ is bounded so that $\exp(\norm{\theta^t_{\perp}}) \leq M$, which will be verified immediately. Choosing an arbitrary  support vector $z_i$, we have $ 0< \inner{z_i}{\theta^t_{u}} = \inner{z_i}{u_2}\inner{\theta^t}{u_2} = \gamma_2 \inner{\theta^t}{u_2} = \gamma_2 \norm{\theta^t_u}_2 \leq \gamma_2 \norm{\theta^t}_2$,
hence the previous inequality becomes:
\begin{align*}
  \exp(-\gamma_2 \norm{\theta^t}_2) \leq \frac{n\log^2 t }{t\eta (\gamma_2 -c)^2} \exp(-c\norm{\theta^t}_2) M,
\end{align*}
which is equivalent to
\begin{align}
   \norm{\theta^t}_2 \geq \log \rbr{\frac{t\eta (\gamma_2 - c)^2}{nM \log^2 t }} / (\gamma_2 - c). \label{norm:lb1}
\end{align}
Now we only need to show that $\norm{\theta^t_{\perp}} \leq M$ for all $t$ for some $M$.
Since we have shown in Lemma~\ref{lemma:deviationbound} that $\norm{\theta^t}_2 \leq K$, we  choose $M = e^{K} \leq \exp \rbr{\frac{20 + \log n}{\alpha}} = \cO (n^\frac{1}{\alpha})$, and the lower bound (\ref{norm:lb1}) becomes
\begin{align}
  \norm{\theta^t}_2 \geq  \log \rbr{\frac{t\eta (\gamma_2 - c)^2}{n^{1+1/\alpha} \log^2 t }} / (\gamma_2 - c), \label{norm:lb2}
\end{align}
which concludes our proof.
\end{proof}

\begin{proof}[Proof of Theorem \ref{thrm:l2parameter}]
We  denote $\theta^t = \theta^t_u + \theta^t_{\perp}$, where $\theta^t_{u}$ denotes the projection of $\theta$ onto $\text{span}(u_2)$, and $\theta^t_{\perp}$ denotes the projection of $\theta$ onto $\text{span}(u_2)^{\perp}$.
Combine Lemma \ref{lemma:deviationbound} and Lemma \ref{lemma:normlowerbound}, we have 

\begin{align*}
  1 - \inner{\frac{\theta^t}{\norm{\theta^t}_2}}{u_2} & = 1-  \frac{\inner{\theta^t_{u_2}}{u_2} + \inner{\theta^t_\perp}{u_2}}{\norm{\theta^t}_2}  \leq 1 - \frac{\inner{\theta^t_{u_2}}{u_2}}{\norm{\theta^t}_2} + \frac{K}{\norm{\theta^t}_2} \\
  & = 1 - \frac{\norm{\theta^t_{u_2}}_2}{\norm{\theta^t}_2} + \frac{K}{\norm{\theta^t}_2}  \leq 1 - \frac{\norm{\theta^t_{u_2}}_2^2}{\norm{\theta^t}_2^2} + \frac{K}{\norm{\theta^t}_2}\\
  & = \frac{\norm{\theta^t_\perp}_2^2}{\norm{\theta^t}_2^2} + \frac{K}{\norm{\theta^t}_2}\\
  & \leq \frac{K^2}{\norm{\theta^t}_2^2} + \frac{K}{\norm{\theta^t}_2}.
\end{align*}
By our choice of $c$ and $T$ that $\gamma_2 - c =  \rbr{\frac{n^{1+1/\alpha}\log^2 T}{\eta T}}^{1/2}$, together Lemma \ref{lemma:normlowerbound}, the Theorem holds as desired.
\end{proof}

\begin{proof}[Proof of Corollary \ref{cor:loss_speedup}]
By Lemma \ref{lemma:normlowerbound} and the the choice of parameters that $\gamma_2 - c =  \rbr{\frac{n^{1+1/\alpha}\log^2 T}{\eta T}}^{1/2}$, we have:
\begin{align*}
\norm{\theta^T}_2 \geq \rbr{\frac{\eta T}{n^{(1+1/\alpha)} \log^2 T}}^{1/2}.
\end{align*}
Together with Theorem \ref{thrm:l2risk}, we have

\begin{align*}
 \cL (\theta^T) & = \aL(\theta^T) \exp \rbr{-c \norm{\theta^T}_2} \\ 
 & \leqslant \frac{\log^2 T}{T\eta (\gamma_2 -c )^2} \exp \rbr{-c \rbr{\frac{\eta T}{n^{(1+1/\alpha)} \log^2 T}}^{1/2}} \\
 & = \cO \rbr{ \exp \rbr{-c \rbr{\frac{\eta T}{n^{(1+1/\alpha)} \log^2 T}}^{1/2}}}.
\end{align*}
where the last equality holds by the parameter choice $\gamma_2 - c =  \rbr{\frac{n^{1+1/\alpha}\log^2 T}{\eta T}}^{1/2}$. Finally, letting $\mu = c \rbr{\frac{\eta}{n^{1+1/\alpha)}}}^{1/2}$, the claim follows immediately.
\end{proof}


\section{Proofs for Section \ref{section:lqperturbation}}
In this section, we consider general $\ell_q$-norm perturbations. In short, we show that no matter how small the perturbation is, adversarial training changes the implicit bias of standard clean training using gradient descent,  and adapt it to specific norm we choose for adversarial training.

Intuitively, we might expect that under the $\ell_q$-norm perturbation the implicit bias of gradient descent algorithm changes to converging in direction to $\ell_q$-norm max margin solution $u_q$. We provide a counter example here. Consider $\cS = \{z_1=(x_1, y_1), z_2=(x_2, y_2) \}$ with $x_1 = (10, 1),x_2 = (-10, -1)$ and $y_1 = 1, y_2 = -1$. 

It is easy to see that the $\ell_{\infty}$-norm max margin solution is $u_{\infty} = (1,0)$ with $\gamma_{\infty} = 10$, and the $\ell_{2}$-norm max margin solution is $u_2 = (\frac{10}{\sqrt{101}}, \frac{1}{\sqrt{101}})$ with $\gamma_2 = \sqrt{101}$. 

Without perturbation, we have that the gradient descent initialized at the origin converges in direction to $\ell_2$-norm max margin solution $u_2$ with one step. 
Now we take $\ell_\infty$-norm perturbation with $c = 0.5$, the negative gradient is given by: $ -\nabla \cL_{\text{adv}}(\theta) =   \frac{\ell_1(\theta)}{2}(z_1 - c\cdot \mathrm{sign}(\theta)) + \frac{\ell_2(\theta)}{2}(z_2 - c \cdot \mathrm{sign}(\theta))$. We initialize gradient descent at the origin with any constant step size. By the symmetry of the training data, we have that $\theta^t$  always stays always inside quadrant I, and converges in direction to $\overline{u} = (\frac{\sqrt{361}}{\sqrt{362}}, \frac{1}{\sqrt{362}})$, which is neither $u_\infty$ or $u_2$, but  inside the interior of convex hull of $u_\infty$ and $u_2$.
In fact, $\overline{u}$ exactly equals to the $u_{2, \infty}$ defined in \eqref{formulation:robustsvm_interp}.

\begin{proof}[Proof of Lemma \ref{lemma:robustsvm_interp}]
We prove that solutions to \eqref{formulation:robustsvm_interp} and the robust SVM against $\ell_q$-norm perturbation parameterized by $c$ \eqref{formulation:robustsvm} are equal up to a constant factor. 
We first have that $\gamma_{2,q}(c)$ in \eqref{formulation:robustsvm_interp} is equivalent to
\begin{align}\label{formulation:robustsvm_new}
 \gamma_{2,q} = \max_{\norm{\theta}_2 \leq 1}\min_{i \in [n]} y_i x_i^\top \theta - c\norm{\theta}_p.
\end{align}
We denote the unique solution to \eqref{formulation:robustsvm_new} as $u_{2,q}$. It is not difficulty to   see that
\begin{align*}
 y_i x_i^\top u_{2,q} - c\norm{u_{2,q}}_2 \geqslant \gamma_{2,q}, \forall i = 1,\ldots, n.
\end{align*}
We define $\overline{u}_{2,q} = \frac{u_{2,q}}{\gamma_{2,q}}$, then:
\begin{align*}
 y_i x_i^\top \overline{u}_{2,q} - c\norm{\overline{u}_{2,q}}_2 \geqslant 1, \forall i = 1,\ldots, n.
\end{align*}
It is now clear that $\overline{u}_{2,q}$ is a feasible solution to \eqref{formulation:robustsvm}. 
We  denote the optimal solution to \eqref{formulation:robustsvm} as $\overline{u}$, then we have by the optimality of $\overline{u}$ that
$\norm{\overline{u}}_2 \leq \norm{\overline{u}_{2,q}}_2 \leq \frac{\norm{u_{2,q}}_2}{\gamma_{2,q}}$,
and feasibility of $\overline{u}$ that
\begin{align*}
 y_i x_i^\top (\gamma_{2,q} \overline{u}) - c \norm{\gamma_{2,q} \overline{u}}_2 \geq \gamma_{2,q} \forall i =1, \ldots,n.
\end{align*}
Then from previous two inequalities we have $\gamma_{2,q} \overline{u}$ is a feasible solution to \eqref{formulation:robustsvm_new} with objective value equal to the optimal objective value of \eqref{formulation:robustsvm_new}. Since the optimal solution to \eqref{formulation:robustsvm_new} is unique,
this implies that  $\overline{u} = \frac{u_{2,q}}{\gamma_{2,q}}$, which concludes our proof.
\end{proof}

We extend Lemma \ref{lemma:innerbound} to bounded $\ell_q$-norm perturbation set.
\begin{lemma}\label{lemma:innerbound_lq}
Recall the definition of $\gamma_{2,q}$ in  \eqref{formulation:robustsvm_interp}.
For any $c < \gamma_q$, we have that $\inner{ - \nabla \aL(\theta)}{u_{2,q}} \geq \aL(\theta) \gamma_{2,q}$ for all $\theta \in \RR^d$.
\end{lemma}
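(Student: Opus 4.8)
The plan is to mirror the proof of Lemma \ref{lemma:innerbound}, replacing the $\ell_2$ geometry by the mixed-norm geometry encoded in $u_{2,q}$. First I would work with the smooth equivalent form \eqref{formulation:advlogistic} of the adversarial loss, writing $\ell_i(\theta) = \exp(-z_i^\top\theta + c\norm{\theta}_p)$ with $z_i = y_i x_i$, so that the negative gradient is a common-weighted conic combination
\begin{align*}
-\nabla\aL(\theta) = \frac{1}{n}\sum_{i=1}^n \ell_i(\theta)\rbr{z_i - c\,\partial\norm{\theta}_p},
\end{align*}
where $\partial\norm{\theta}_p$ denotes any fixed subgradient of the $\ell_p$-norm at $\theta$ (this covers $\theta = 0$, where $\aL$ is only subdifferentiable). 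Pairing with $u_{2,q}$ and pulling the sum outside reduces the claim to a single per-sample inequality: it suffices to show $\inner{z_i}{u_{2,q}} - c\inner{\partial\norm{\theta}_p}{u_{2,q}} \geq \gamma_{2,q}$ for every $i$, after which summing against the nonnegative weights $\ell_i(\theta)/n$ and recognizing $\frac1n\sum_i \ell_i(\theta) = \aL(\theta)$ finishes the argument.

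The crux is to decouple the two terms. For the penalty term I would invoke the dual-norm relationship: every subgradient $g$ of $\norm{\cdot}_p$ satisfies $\norm{g}_q \le 1$ (with equality when $\theta\neq0$), so H\"older's inequality gives $\inner{g}{u_{2,q}} \le \norm{g}_q\,\norm{u_{2,q}}_p \le \norm{u_{2,q}}_p$. Since $c \ge 0$, this yields $-c\inner{\partial\norm{\theta}_p}{u_{2,q}} \ge -c\norm{u_{2,q}}_p$, converting the $\theta$-dependent penalty into the clean quantity $c\norm{u_{2,q}}_p$. For the margin term I would use the feasibility/optimality of $u_{2,q}$ for the robust-SVM formulation \eqref{formulation:robustsvm_new}: because $u_{2,q}$ attains $\gamma_{2,q} = \max_{\norm{\theta}_2\le1}\min_i y_i x_i^\top\theta - c\norm{\theta}_p$, it satisfies $\inner{z_i}{u_{2,q}} - c\norm{u_{2,q}}_p \ge \gamma_{2,q}$ for all $i$. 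Chaining the two bounds gives the per-sample inequality, and hence the lemma.

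I do not anticipate a genuine obstacle; the proof is short and is essentially a one-line generalization of Lemma \ref{lemma:innerbound}. The only point requiring care is the non-smoothness of $\norm{\cdot}_p$ at $\theta=0$ and at coordinate faces: I must phrase everything in terms of an arbitrary subgradient rather than a gradient and rely on $\norm{g}_q \le 1$ (not $=1$) so that the dual-norm bound remains valid on all of $\RR^d$, including the initialization $\theta^0=0$ used by GDAT. As a consistency check, for $q=2$ one has $p=2$, $\partial\norm{\theta}_2 = \theta/\norm{\theta}_2$, $\norm{u_{2,2}}_2 = 1$, and $\gamma_{2,2}(c) = \gamma_2 - c$, so the bound collapses exactly to $\inner{-\nabla\aL(\theta)}{u_2} \ge \aL(\theta)(\gamma_2-c)$ of Lemma \ref{lemma:innerbound}.
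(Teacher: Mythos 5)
Your proof is correct and follows essentially the same route as the paper's: both express $-\nabla\aL(\theta)$ as a nonnegative combination of the perturbed samples and reduce the claim to the per-sample bound $\inner{y_i(x_i+\tilde\delta_i)}{u_{2,q}}\ge\gamma_{2,q}$, which is exactly the defining property of $\gamma_{2,q}$. Your H\"older/dual-norm step on $\partial\norm{\theta}_p$ is just the penalized-form restatement of the paper's observation that the worst-case perturbation has $\ell_q$-norm at most $c$ (and your explicit handling of the subgradient at $\theta=0$ is a small point of extra care the paper elides).
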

\begin{proof}
Recall that we have $\aL (\theta) = \frac{1}{n} \sum_{i=1}^n \max_{\norm{\delta}_q \leq c} \exp  \rbr{-y_i (x_i + \delta_i)^\top \theta}$. For each sample
$(x_i, y_i) \in \cS$, given a classifier $\theta$, the worst case perturbation  is $\tilde{\delta}_i = \argmax_{\norm{\delta}_q \leq c} \exp \rbr{-y_i (x_i + \delta)^\top \theta} = \argmin_{\norm{\delta}_q \leq c} y_i \delta^\top \theta$. The corresponding loss is then
$\aL(\theta) = \frac{1}{n} \sum_{i=1}^n \exp  \rbr{-y_i (x_i + \tilde{\delta}_i)^\top \theta}$.

Since for a fixed $\delta_i$, the function $\exp  \rbr{-y_i (x_i + \delta_i)^\top \theta}$ is convex in $\theta$, hence the gradient of $\aL(\theta)$ is 
\begin{align*}
-\nabla \aL (\theta) =  \frac{1}{n} \sum_{i=1}^n \exp \rbr{-y_i (x_i+\tilde{\delta}_i) ^\top \theta }y_i (x_i + \tilde{\delta}_i).
\end{align*}
Then by the definition of $u_{2,q}$, we have
\begin{align}
\inner{-\nabla \aL (\theta) }{u_2} & = \sum_{i=1}^n \exp \rbr{-y_i (x_i+\tilde{\delta}_i) ^\top \theta } \inner{y_i (x_i + \tilde{\delta}_i)}{u_{2,q}} \\
& \geq \sum_{i=1}^n \exp \rbr{-y_i (x_i+\tilde{\delta}_i) ^\top \theta } \gamma_{2,q}  = \aL(\theta) \gamma_{2,q},
\end{align}
where  the second inequality holds by $\norm{\tilde{\delta}_i}_q \leq c$, and the definitions of $u_{2,q}$ and $\gamma_{2,q}$  in Lemma~\ref{lemma:robustsvm_interp}.
\end{proof}
Note that for $q = 2$, by the fact  that $\gamma_{2,2}(c) = \gamma_2 - c$, we immediately have Lemma \ref{lemma:innerbound} holds.

As a direct corollary of Lemma \ref{lemma:innerbound_lq}, we have $\norm{\theta^t}_2$ is bounded away from $0$ for all $t \geq 1$. 

\begin{corollary}\label{cor:normboundlq}
Let $\theta^0 = 0$ in Algorithm \ref{alg:gdat}, we have: $\norm{\theta^t}_2 \geq \eta^0 \gamma_{2,q}$ for al $t \geq 1$.
\end{corollary}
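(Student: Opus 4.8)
The plan is to follow the same template used for the $\ell_2$ case in Corollary~\ref{corollary:normbound}, tracking the scalar projection $\inner{\theta^t}{u_{2,q}}$ of the iterate onto the robust max-margin direction and showing it never drops below $\eta^0\gamma_{2,q}$. Since $\norm{u_{2,q}}_2 = 1$, the Cauchy--Schwarz inequality gives $\norm{\theta^t}_2 \geq \inner{\theta^t}{u_{2,q}}$, so it suffices to lower bound the projection. The engine driving the argument is Lemma~\ref{lemma:innerbound_lq}, which guarantees that each negative gradient makes an acute angle of controlled size with $u_{2,q}$; the only place requiring separate care is the first step, where $\aL$ is non-differentiable at the origin.

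For the base case $t = 1$, I would first note that at $\theta^0 = 0$ every summand equals $\exp(0) = 1$, so the inner maximization is insensitive to the perturbation and we may take the clean samples as adversarial examples. Convexity of $\aL$ then lets us select the subgradient $-\nabla\aL(\theta^0) = \tfrac{1}{n}\sum_{i=1}^n z_i \in \partial \aL(0)$ with $z_i = y_i x_i$, yielding $\theta^1 = \tfrac{\eta^0}{n}\sum_{i=1}^n z_i$. The key inequality $\inner{z_i}{u_{2,q}} \geq \gamma_{2,q}$ for every $i$ comes from the feasibility of $u_{2,q}$ in the robust formulation \eqref{formulation:robustsvm_interp}: carrying out the inner minimization over $\norm{\delta_i}_q \leq c$ (whose value is $-c\norm{u_{2,q}}_p$ by $\ell_p$--$\ell_q$ duality) gives $y_i x_i^\top u_{2,q} - c\norm{u_{2,q}}_p \geq \gamma_{2,q}$, and since $c\norm{u_{2,q}}_p \geq 0$ the clean inner product $\inner{z_i}{u_{2,q}}$ is at least as large. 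Averaging over $i$ delivers $\inner{\theta^1}{u_{2,q}} \geq \eta^0\gamma_{2,q}$.

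For the inductive step, observe first that the bound just established keeps each $\theta^t$ ($t \geq 1$) strictly away from the origin, so $\aL$ is differentiable there and Lemma~\ref{lemma:innerbound_lq} applies. The update rule then gives
\begin{align*}
\inner{\theta^{t+1}}{u_{2,q}} = \inner{\theta^t}{u_{2,q}} + \eta^t \inner{-\nabla\aL(\theta^t)}{u_{2,q}} \geq \inner{\theta^t}{u_{2,q}} + \eta^t \aL(\theta^t)\gamma_{2,q} \geq \inner{\theta^t}{u_{2,q}},
\end{align*}
where the last inequality uses $\eta^t, \aL(\theta^t), \gamma_{2,q} > 0$. Hence $\inner{\theta^t}{u_{2,q}}$ is non-decreasing in $t$ and stays at or above its value $\eta^0\gamma_{2,q}$ at $t = 1$; combining with Cauchy--Schwarz closes the argument.

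The main (and essentially only) obstacle is the base case: one must justify the specific subgradient choice at the origin and verify the clean-sample bound $\inner{z_i}{u_{2,q}} \geq \gamma_{2,q}$ from robust-margin feasibility, being careful that it is the $\ell_p$ norm dual to the $\ell_q$ perturbation, rather than the $\ell_2$ norm, that appears in the discarded term. Once this is in place, the inductive step is a one-line monotonicity consequence of Lemma~\ref{lemma:innerbound_lq}, exactly paralleling the $\ell_2$ derivation and recovering Corollary~\ref{corollary:normbound} via $\gamma_{2,2}(c) = \gamma_2 - c$.
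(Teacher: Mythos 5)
Your proof is correct and follows essentially the same route as the paper: choose the clean-gradient subgradient at the origin to get $\inner{\theta^1}{u_{2,q}} \geq \eta^0\gamma_{2,q}$, use Lemma~\ref{lemma:innerbound_lq} to show $\inner{\theta^t}{u_{2,q}}$ is non-decreasing, and finish with Cauchy--Schwarz. In fact you supply more detail than the paper does (which simply defers to the $\ell_2$ case), in particular the verification that $\inner{z_i}{u_{2,q}} \geq \gamma_{2,q}$ by discarding the nonnegative $c\norm{u_{2,q}}_p$ term from the robust-margin feasibility condition.
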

\begin{proof}
The proof is similar to Corollary \ref{corollary:normbound}, we omit the details here.
\end{proof}

\begin{proof}[Proof of Theorem \ref{thrm:lqriskconvergence}]
For simplicity, we define $z_i = y_i x_i $ and have $\norm{z_i}_2 \leq 1$ since $\|x_i\|_2\le 1$.  We have for $\theta \neq 0$ 
\begin{align*}
    \nabla \aL(\theta) &  = \frac{1}{n} \sum_{i=1}^n \exp \rbr{-z_i^\top \theta + c \norm{\theta}_p} \rbr{-z_i + c \partial \norm{\theta}_p}, \\
    \nabla^2 \aL(\theta) & =  \frac{1}{n} \sum_{i=1}^n \exp\rbr{-z_i^\top \theta + c\norm{\theta}_p} \rbr{-z_i + c \partial \norm{\theta}_p} \rbr{-z_i + c \partial \norm{\theta}_p}^\top
    \\
    & ~~~ + \frac{1}{n} \sum_{i=1}^n \exp\rbr{-z_i^\top \theta + c\norm{\theta}_p} c \rbr{(1-p) \norm{\theta}_p^{1-2p} (\odot^{p-1} \theta)(\odot^{p-1} \theta)^\top + (p-1)\norm{\theta}_p^{1-p} \diag(\odot^{p-2} \theta)},\\
\end{align*}
where $\odot^{p-1} \theta$ denotes taking element-wise $(p-1)$-th power of $\theta$.

Note that we have $\norm{\partial \norm{\theta}_p}_q = 1$. By the conjugacy of $\ell_p$-norm and $\ell_q$-norm
with $\frac{1}{p} + \frac{1}{q} = 1$, we have$\norm{\theta}_p = \max_{\norm{s}_q \leq 1} \inner{\theta}{s}$. 
Hence we  upper bound the first term in Hessian $\nabla^2 \aL(\theta)$ above by
\begin{align}
 \frac{1}{n} & \sum_{i=1}^n \exp\rbr{-z_i^\top \theta + c\norm{\theta}_p} \rbr{-z_i + c \partial \norm{\theta}_p} \rbr{-z_i + c \partial \norm{\theta}_p}^\top \\
 & \leq \frac{1}{n} \sum_{i=1}^n \exp\rbr{-z_i^\top \theta + c\norm{\theta}_p} ( 1 + c \sqrt{d} \norm{\theta}_2)^2.
\end{align}

We further have:
\begin{align*}
(p-1) \norm{\theta}_p^{p-1} \diag(\odot^{p-2} \theta) & \leq (p-1) \frac{\diag(\odot^{p-2} \theta)}{d^{\frac{p}{p-1}} \norm{\theta}_\infty^{p-1}} \\
& \leq (p-1) d^{\frac{p}{p-1}} \frac{I}{\norm{\theta}_\infty} \\
& \leq (p-1) d^{\frac{3p-2}{2p-2}} \frac{I}{\norm{\theta}_2}.
\end{align*}
Together with the fact that $p \geq 1$, we  bound the Hessian $\nabla^2 \aL(\theta)$ as:
\begin{align*}
\nabla^2 \aL(\theta) \leq \aL(\theta) \sbr{ (1 + c\sqrt{d} )^2 + c(p-1) d^{\frac{3p-2}{2p-2}} \frac{1}{\norm{\theta}_2}} I.
\end{align*}

Note that the Hessian expression indicates that the objective is highly non-smooth around origin. 
However, as shown in Corollary \ref{cor:normboundlq}, starting from origin, $\theta^t$  always stays away from the origin with distance bounded below. 

Using Taylor expansion, and by  $\theta^{t+1} = \theta^t - \eta^t \nabla \aL(\theta^t)$, we have
\begin{align}\label{risk:taylorlq}
    \aL(\theta^{t+1}) \leq \aL(\theta^t) - \eta^t \norm{\nabla \aL(\theta^t)}_2^2 + \frac{(\eta^t \norm{\nabla \aL(\theta^t)})^2}{2} \max_{v \in \sbr{\theta^t, \theta^{t+1}}}  \lambda \rbr{{H(v)}}_{\mathrm{max}},
\end{align}
where $ \lambda \rbr{{H(v)}}_{\mathrm{max}}$ denotes the largest eigenvalue of $H(v)$, and
\begin{align*}
    H(v) = \aL(v) \sbr{ (1 + c\sqrt{d} )^2 + c(p-1) d^{\frac{3p-2}{2p-2}} \frac{1}{\norm{v}_2}} I.
\end{align*}
Since $\eta^0 =1$, by Corollary \ref{cor:normboundlq}, for any $t \geq 1$, we have $\norm{\theta^t}_2 \geqslant \gamma_{2,q}$. 
Letting $m_p = (1 + c\sqrt{d} )^2 + c(p-1) d^{\frac{3p-2}{2p-2}} \frac{1}{\gamma_{2,q}}$,
and since that $\aL(\theta)$ is a convex function,
 we  obtain that
\begin{align*}
 \aL(\theta^{t+1}) \leq \aL(\theta^t) - \eta^t \norm{\nabla \aL(\theta^t)}_2^2 + \frac{(\eta^t \norm{\nabla \aL(\theta^t)})^2}{2} m_p \max \{\aL(\theta^{t+1}), \aL(\theta^t)\}.
\end{align*}

 We then show by contradiction that we have $\aL(\theta^{t+1}) < \aL(\theta^t)$. Assume this is not the case, then we have:
 \begin{align*}
 \aL(\theta^{t+1}) \leq \rbr{1 - \frac{M (\eta^t)^2}{2} \norm{\nabla \aL(\theta^t)}_2^2}^{-1} \rbr{\aL(\theta^t) - \eta^t \norm{\nabla \aL(\theta^t)}_2^2}
 \end{align*}
However, if we choose $\eta^t$ satisfying 
$\eta^t \leq \frac{2}{m_q \aL(\theta^t)}$, we have the right hand side of previous inequality strictly smaller than $\aL(\theta^t)$, which is clearly a constradiction.
Hence when we choose $\eta^t \leq \frac{2}{m_q \aL(\theta^t)}$, we have $\aL(\theta^{t+1}) < \aL(\theta^t)$ and
\begin{align}\label{ineq:smoothlq}
 \aL(\theta^{t+1}) \leq \aL(\theta^t) - \eta^t \norm{\nabla \aL(\theta^t)}_2^2 + \frac{(\eta^t \norm{\nabla \aL(\theta^t)})^2}{2} m_p \aL(\theta^t).
\end{align}
Now by induction, if we choose $\eta^t = \eta \leq \frac{1}{m_q \aL(\theta^1)}$ for $t \geq 1$, then we  have 
\eqref{ineq:smoothlq} holds for all $t \geq 1$. Note that we have an upper bound of $\aL(\theta^1)$, which is 
\begin{align}
\aL(\theta^1) &= \frac{1}{n} \sum_{i=1}^n \exp \rbr{-y_i(x_i + \tilde{\delta}_i)^\top \theta^1)} \nonumber  \\
& = \frac{1}{n} \sum_{i=1}^n \exp \rbr{ - y_i(x_i + \tilde{\delta}_i)^\top \theta_u^1 - y_i(x_i + \tilde{\delta}_i)^\top \theta_\perp^1} \nonumber  \\
& \leq \frac{1}{n} \sum_{i=1}^n \exp \rbr{-\gamma_{2,q}^2 + (1+c\sqrt{d})} = \exp \rbr{-\gamma_{2,q}^2 + (1+c\sqrt{d})}, \label{ineq:upperboundl1}
\end{align}
where $\tilde{\delta}_i$ denotes the worst case perturbation to $x_i$, and $\theta^1_u$ denotes projection of $\theta^1$ onto $\mathrm{span}(u_{2,q})$, and $\theta_\perp$ denotes projection of $\theta^1$ onto $\mathrm{span}(u_{2,q})^\perp$. 

In summary, we have that if
 \begin{align}\label{lqstepsize}
 \eta^t = \eta \leq \min\{\frac{1}{M_p}, 1\} \text{ for all } t \geq 1, \text{ where } M_p = 
m_p \exp \rbr{-\gamma_{2,q}^2 + (1+c\sqrt{d})},
\end{align}
we have
\begin{align}
\aL(\theta^{t+1}) & \leq \aL(\theta^t) - \eta \norm{\nabla \aL(\theta^t)}_2^2 + \frac{(\eta \norm{\nabla \aL(\theta^t)})^2}{2} m_p \aL(\theta^t)  \label{ineq:smoothlq3}\\
& \leq \aL(\theta^t) - \frac{\eta}{2} \norm{\nabla \aL(\theta^t)}_2^2 \label{ineq:smoothlq2}
\end{align}
where the last inequality holds since $\eta m_p \aL(\theta^t) \leq \eta m_p \aL(\theta^1) \leq 1$ \eqref{lqstepsize}. Now for any $\theta \in \RR^d$, we have
\begin{align*}
    \norm{\theta^{t+1} - \theta}_2^2 & =
    \norm{\theta^t - \theta}_2^2 - 2\eta^t \inner{\nabla \aL(\theta^t)}{\theta^t - \theta} + (\eta^t)^2 \norm{\nabla \aL(\theta^t)}_2^2 \\
    & \leq \norm{\theta^t - \theta}_2^2 - 2\eta^t \rbr{\aL(\theta^t) - \aL(\theta)} + (\eta^t)^2 \norm{\nabla \aL(\theta^t)}_2^2 \\
    & \leq  \norm{\theta^t - \theta}_2^2 - 2\eta^t \rbr{\aL(\theta^t) - \aL(\theta)} + 2\eta^t \rbr{\aL(\theta^t) - \aL(\theta^{t+1})} \\
    & = \norm{\theta^t - \theta}_2^2 - 2\eta^t \rbr{\aL(\theta^{t+1}) - \aL(\theta)},
\end{align*}
where  the first inequality holds by the convexity of $\aL(\theta)$, and  the second inequality holds by  \eqref{ineq:smoothlq2}. 

Summing up the above inequality from $s = 1$ to $t-1$ and by $\eta^t = \eta \leq 1= \eta^0$ together with $\aL(\theta^{s+1}) \leq \aL(\theta^s)$, we have
\begin{align}
    \aL(\theta^t) - \aL(\theta) \leq \frac{1}{2t\eta} \norm{\theta^1 - \theta}_2^2 \leq \frac{1}{t\eta} \rbr{\norm{\theta}_2^2 + \norm{\theta^1}_2^2} 
\end{align}
Since $\theta$ is arbitrary, by choosing  $\theta = \frac{\log (t)}{\gamma_{2,q}} \cdot u_{2,q}$, we have
\begin{align*}
\norm{\theta}_2^2 + \norm{\theta^1}_2^2 \leq   \frac{\log^2 t}{\gamma_{2,q}^2} + (1+c\sqrt{d})^2,
\end{align*}
and 
\begin{align*}
    \aL(\theta) 
    & = \frac{1}{n} \sum_{i=1}^n \exp  \rbr{- \min_{\norm{\delta_i}_q \leq c}(z_i + \delta_i)^\top u_{2,q} \frac{\log t}{\gamma_{2,q}} }  \leq \frac{1}{t},
\end{align*}
which yields
\begin{align}
    \aL(\theta^t) \leq \frac{1}{t} +  \frac{1}{t\eta}\rbr{ \frac{\log^2 t}{\gamma_{2,q}^2} + (1+c\sqrt{d})^2} = \cO \rbr{\frac{\log^2 t}{t\eta\gamma_{2,q}^2}}. \label{ineq:advlqloss}
\end{align}

\end{proof}

\textbf{Parameter Convergence: Intuition}. 
Before we formally prove the implicit bias of GDAT, we provide some intuitions here for better understanding.
We claim that $\overline{u} = \lim_{t \to \infty} \frac{\theta^t}{\norm{\theta^}_2}$ is in the same direction as the solution to
\begin{align}\label{mix_margin_formulation}
\min_{\theta}  \norm{\theta}_2 + \eta(c) \norm{\theta}_p,
 \quad \text{s.t.}& \quad   z_i^\top \theta \geq 1, \forall i =1,\ldots n .
\end{align}

Folowing our discussion in Section \ref{section:theory}, 
\eqref{mix_margin_formulation:2} has a robust  reformulation as maximizing the $\ell_2$-norm margin under the worse case $\ell_q$-norm perturbation bounded by $c$ that 
\begin{align*}
\min_{\theta} \frac{1}{2} \norm{\theta}_2^2 
 \quad \text{s.t.}& \quad    \min_{\norm{\delta_i}_q \leq c} (z_i+\delta_i)^\top \theta \geq   1, \forall i =1,\ldots n,
\end{align*}
or equivalently
\begin{align}
\max_{\theta} \min_{i =1, \ldots, n} \min_{\norm{\delta_i}_\infty \leq c} \frac{y_i (x_i + \delta_i)^\top \theta}{\norm{\theta}_q}. \label{svm:perturbation}
\end{align}
We  note that (\ref{svm:perturbation}) is a Support Vector Machine problem over an uncoutable data set that is generated by norm-bounded perturbation $\cS_(c,q) = \{(x,y): \text{ where } \exists i \in [n], \norm{x - x_i}_q \leq c, y = y_i\}$. By the separability and $c < \gamma_q$, we have that $\cS_(c,q)$ is well defined.

By the first-order KKT condition we have that (\ref{mix_margin_formulation:2}) is equivalent to 
\begin{align*}
\min_{\theta} \norm{\theta}_2 + \eta(c) \norm{\theta}_p
 \quad \text{s.t.}& \quad   z_i^\top \theta \geq 1, \forall i =1,\ldots n .
\end{align*}
for some proper $\eta(c)$ that depends on $c$. Hence in summary, if $\overline{u}= \lim_{t \to \infty} \frac{\theta^t}{\norm{\theta^t}_2}$ exists, it is in the same direction as the solution to the mixed $(\ell_2, \ell_1)$-norm max margin solution of (\ref{mix_margin_formulation}).


\textbf{Claim:} In general, for $\ell_q$-norm perturbation bounded by $c$, $\theta^t$ converges in direction to the solution to
\begin{align*}
\min_{\theta} \frac{1}{2} \norm{\theta}_2^2 
 \quad \text{s.t.}& \quad    \min_{\norm{\delta_i}_q \leq c} (z_i+\delta_i)^\top \theta \geq   1, \forall i =1,\ldots n .
\end{align*}
or
\begin{align*}
\min_{\theta} \norm{\theta}_2 + \eta(c) \norm{\theta}_p
 \quad \text{s.t.}& \quad   z_i^\top \theta \geq 1, \forall i =1,\ldots n .
\end{align*}
for some proper $\eta(c)$ that depends on $c$.

\begin{proof}[Proof of Theorem \ref{thrm:lqparameterconvergence}]
Recall that in Theorem \ref{thrm:lqriskconvergence} we showed in \eqref{ineq:smoothlq3} the following recursion

\begin{align*}
 \aL(\theta^{t+1}) & \leq \aL(\theta^t) - \eta \norm{\nabla \aL(\theta^t)}_2^2 + \frac{(\eta \norm{\nabla \aL(\theta^t)})^2}{2} m_p \aL(\theta^t)  \\
& \leq \exp \rbr{ -\eta \frac{\norm{ \nabla \aL(\theta^t)}_2^2}{\aL(\theta^t)} + m_p \frac{ \eta^2 }{2}  \norm{ \nabla \aL(\theta^t)}_2^2} \\
&\leq \exp \rbr{ -\eta \gamma_{2,q} \norm{ \nabla \aL(\theta^t)}_2 + m_p \frac{ \eta^2 }{2}  \norm{ \nabla \aL(\theta^t)}_2^2} .
\end{align*}
where  the last inequality holds by Lemma \ref{lemma:innerbound_lq}.

Applying the previous inequality recursively from $s=1$ to $t-1$, we have 
\begin{align*}
\aL(\theta^t) \leq \exp \rbr{ -\eta \gamma_{2,q} \sum_{s=1}^{t-1} \norm{ \nabla \aL(\theta^s)}_2 + \sum_{s=1}^{t-1}m_p \frac{ \eta^2 }{2}  \norm{ \nabla \aL(\theta^s)}_2^2} .
\end{align*}

Now since in the proof of Theorem \ref{thrm:lqriskconvergence} we  showed that $\eta m_p <1$ \eqref{lqstepsize}, 
combining the above inequality this with \eqref{ineq:smoothlq2}, we have
$$
\sum_{s=1}^{t-1}m_p \frac{ \eta^2 }{2}  \norm{ \nabla \aL(\theta^s)}_2^2 = 
\sum_{s=1}^{t-1} \frac{ \eta }{2}  \norm{ \nabla \aL(\theta^s)}_2^2 = \aL(\theta^1) - \aL(\theta^t) \leq \aL(\theta^1).
$$

Combining this inequality with the upper bound on $\aL(\theta^1)$  in \eqref{ineq:upperboundl1}, we have
\begin{align*}
\aL(\theta^t) \leq \exp \rbr{ -\eta \gamma_{2,q} \sum_{s=0}^{t-1} \norm{ \nabla \aL(\theta^s)}_2  -\gamma_{2,q}^2 + (1+c\sqrt{d})}.
\end{align*}
Now for all $i \in [n]$, we have:
\begin{align*}
\exp \rbr{- \min_{\norm{\delta_i}_q \leq c} y_i (x_i + \delta_i)^\top \theta^t} 
 \leq n \exp \rbr{ -\eta \gamma_{2,q} \sum_{s=0}^{t-1} \norm{ \nabla \aL(\theta^s)}_2  -\gamma_{2,q}^2 + (1+c\sqrt{d})},
\end{align*}
which yields
\begin{align*}
\min_{\norm{\delta_i}_q \leq c} y_i (x_i + \delta_i)^\top \theta^t \geq \eta \gamma_{2, q} \sum_{s=0}^{t-1} \norm{ \nabla \aL(\theta^s)}_2  + \gamma_{2,q}^2 - (1+c\sqrt{d}) - \log n.
\end{align*}
Dividing both  sides by $\norm{\theta}_2$, and since $\lim_{t \to \infty} \aL(\theta^t) = 0$, we  have $\lim_{t \to \infty}\norm{\theta^t}_2 = \infty$. Hence,

\begin{align}
\lim_{t \to \infty} \min_{\norm{\delta_i}_q \leq c} y_i (x_i + \delta_i)^\top \frac{\theta^t}{\norm{\theta^t}_2} & \geq 
\lim_{t \to \infty} \eta \gamma_{2, q} \sum_{s=0}^{t-1} \frac{\norm{ \nabla \aL(\theta^s)}_2}{\norm{\theta^t}_2} - \frac{1 + c\sqrt{d} + \log n}{\norm{\theta^t}_2} \label{ineq:lqparams} \\
&  \geq \gamma_{2,q} \nonumber ,
\end{align}
where  the last inequality holds by $\norm{\theta^t}_2 \leq \eta \sum_{s=0}^{t-1} \norm{ \nabla \aL(\theta^s)}_2$.

Hence in summary, we have
\begin{align*}
 \min_{\norm{\delta_i}_q \leq c} y_i (x_i + \delta_i)^\top  \lim_{t \to \infty} \frac{\theta^t}{\norm{\theta^t}_2}
\geq \gamma_{2,q}.
\end{align*}

Hence, we have $\lim_{t \to \infty}\theta^t/{\norm{\theta^t}_2}$ is  a solution to \eqref{formulation:robustsvm_interp}, but notice that the solution to \eqref{formulation:robustsvm_interp} is unique since 
a multiple of its optimal solution would be the solution to \eqref{formulation:robustsvm} that
\begin{align*}
\min_{\theta \in \mathbb{R}^d} \frac{1}{2} \norm{\theta}_2^2 \quad \text{s.t.} \quad \min_{\delta_i \in \Delta_i(q)} y_i(x_i + \delta_i)^\top \theta \geq 1, \forall i =1, \ldots, n, 
\end{align*}
which is a convex program with strongly convex objective.
By this fact, we conclude that $\lim_{t \to \infty} \frac{\theta^t}{\norm{\theta^t}_2} = u_{2, q}$.
To further get the rate of convergence, we use the convergence of adversarial risk in \eqref{ineq:advlqloss}, and establish the lower bound on $\norm{\theta^t}_2$: $\norm{\theta^t}_2 = \Omega \rbr{\log t}$. Combining this with 
\eqref{ineq:lqparams}, the claim follows immediately.
\end{proof}

\section{$\ell_\infty$-Norm Perturbation}\label{app_sec:linf}
Recall that the robust SVM against $\ell_\infty$-norm perturbation parameterized by $c$ is formulated as
\begin{align}\label{for:linfty_svm}
\gamma_{2, \infty} = \max_{\theta} \min_{i =1, \ldots, n} \min_{\norm{\delta_i}_\infty \leq c} \frac{y_i (x_i + \delta_i)^\top \theta}{\norm{\theta}_2},
\end{align}
and its associated max-margin classifier is
\begin{align*}
u_{2, \infty} = \argmax_{\norm{\theta}_2 = 1} \min_{i =1, \ldots, n} \min_{\norm{\delta_i}_\infty \leq c} y_i (x_i + \delta_i)^\top \theta.
\end{align*}
It is easy to see that for $c < \gamma_\infty$, both $\gamma_{2,\infty}$ and $u_{2,\infty}$ are well defined, and $\gamma_{2, \infty} > 0$.


Before showing parameter convergence, we first prove that the adversarial risk goes to zero. To avoid analyzing $\ell_\infty$-perturbation directly, which can go messy. 
For $\lambda > 0$, we define a smooth approximation of $\ell_1$-norm that
\begin{align*}
 h_\lambda (\theta_j) = \sqrt{\theta_j^2 + \lambda }, \quad \text{ and }\quad H_\lambda (\theta_j) = \sum_{j=1}^d h_\lambda(\theta_j). 
\end{align*}
Note that as $\lambda \to 0$, $H_{\lambda} (\theta) \to \norm{\theta}_1$ uniformly.
We then define a smoothified version of (\ref{for:linfty_svm}) that we let perturbation set be
$\Delta_i(\lambda) = \{\delta: \forall j \in [d], |\delta_{j}| \leq c \frac{h_\lambda(\theta_j)}{|\theta_j|}\}$, and the corresponding $\gamma_{2,\infty}$ and $u_{2,\infty}$ become
\begin{align}\label{for:smoothed_svm}
\gamma_{2, \lambda} & = \max_{\theta} \min_{i =1, \ldots, n} \min_{\delta_i \in \Delta_i(\lambda)} \frac{y_i (x_i + \delta_i)^\top \theta}{\norm{\theta}_2},  \\
u_{2, \lambda} & = \argmax_{\norm{\theta}_2 = 1} \min_{i =1, \ldots, n} \min_{\delta_i \in \Delta_i(\lambda)}  y_i (x_i + \delta_i)^\top \theta.
\end{align}
Note that the Hausdorff distance between $\Delta_i(\lambda)$ and $\{\delta: \norm{\delta}_\infty \leq c\}$ converges to $0$ as $\lambda$ goes to $0$.
It can be seen that when $\lambda \to 0$, the smoothified problem (\ref{for:smoothed_svm}) reduces to (\ref{for:linfty_svm}).
That is, $\lim_{\lambda \to 0} \gamma_{2, \lambda} = \gamma_{2, \infty}$ and $\lim_{\lambda \to 0} u_{2,\lambda} = u_{2,\infty}$.

\begin{theorem}\label{thrm:smooth_convergence}
Let perturbation set be
$\Delta_i(\lambda) = \{\delta: \forall j \in [d], |\delta_{j}| \leq c \frac{h_\lambda(\theta_j)}{|\theta_j|}\}$, and let its associated adversarial risk be
\begin{align*}
\aL(\theta)  = \frac{1}{n} \sum_{i=1}^n 
\max_{\delta_i \in \Delta_i(\lambda)} \exp \rbr{-y_i (x_i+\delta_i)^\top \theta }.
\end{align*}
For $c < \gamma_{2, \lambda}$,
letting $\eta = \frac{1}{(1+2c\lambda^{-1/2})^2 }$, we have
\begin{align*}
\aL(\theta^t) \leq \mathcal{O} \rbr{\frac{\log^2 t (1+2c\lambda^{-1/2})^2}{t\gamma_{2, \lambda}}}.
\end{align*}

\end{theorem}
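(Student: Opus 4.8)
The plan is to first reduce the adversarial risk to a smooth unconstrained objective, exactly mirroring the reduction \eqref{formulation:advlogistic} used in the $\ell_q$ case. For a fixed $\theta$ the inner maximization over $\Delta_i(\lambda)$ is a coordinate-wise box problem: the worst perturbation takes $\delta_j = -\sgn(y_i\theta_j)\,c\,h_\lambda(\theta_j)/|\theta_j|$, so that $\min_{\delta_i\in\Delta_i(\lambda)} y_i\delta_i^\top\theta = -c\sum_{j=1}^d h_\lambda(\theta_j) = -cH_\lambda(\theta)$. Hence
\begin{align*}
\aL(\theta) = \frac1n\sum_{i=1}^n \exp\rbr{-z_i^\top\theta + cH_\lambda(\theta)}, \qquad z_i = y_ix_i,
\end{align*}
which is convex (since $H_\lambda$ is convex) and, crucially, smooth \emph{everywhere}. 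This is exactly what the smoothing buys us: in contrast to the genuine $\ell_\infty$ risk, whose surrogate $\norm{\theta}_1$ is nondifferentiable and whose Hessian explodes at the origin, the objective above has no such pathology. Consequently I will not need the origin-avoidance step (Corollary \ref{cor:normboundlq}) that was essential in Theorems \ref{thrm:l2risk} and \ref{thrm:lqriskconvergence}, and the argument simplifies accordingly.

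Second, I would establish a \emph{self-bounded} smoothness estimate $\nabla^2\aL(\theta)\preceq C_\lambda\,\aL(\theta)\,I$ valid for all $\theta$. Differentiating gives $\nabla^2\aL(\theta)=\frac1n\sum_i\ell_i(\theta)\big[(-z_i+c\nabla H_\lambda(\theta))(-z_i+c\nabla H_\lambda(\theta))^\top + c\nabla^2H_\lambda(\theta)\big]$ with $\ell_i(\theta)=\exp(-z_i^\top\theta+cH_\lambda(\theta))$. The two elementary facts I need are $|\partial_j H_\lambda(\theta)|=|\theta_j|/h_\lambda(\theta_j)\le 1$ and $\partial_j^2 H_\lambda(\theta)=\lambda/(\theta_j^2+\lambda)^{3/2}\le\lambda^{-1/2}$; the first controls the rank-one (gradient) term, while the second gives $\nabla^2 H_\lambda\preceq\lambda^{-1/2}I$, so that the curvature of the smoothed norm contributes a finite, $\lambda$-dependent bound where $\norm{\cdot}_1$ would have blown up. Together with $\norm{z_i}_2\le1$ these yield $C_\lambda=\cO\big((1+2c\lambda^{-1/2})^2\big)$. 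With this estimate the monotone-decrease argument of Theorem \ref{thrm:l2risk} applies verbatim: choosing $\eta=(1+2c\lambda^{-1/2})^{-2}$ so that $\eta\,C_\lambda\,\aL(\theta^1)\le1$ and arguing by induction via a Taylor expansion and a contradiction that $\aL(\theta^{t+1})\le\aL(\theta^t)$, one obtains the descent recursion $\aL(\theta^{t+1})\le\aL(\theta^t)-\tfrac{\eta}{2}\norm{\nabla\aL(\theta^t)}_2^2$.

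Finally, I would invoke the standard smooth-GD telescoping used for \eqref{l2riskconvergence}: for arbitrary $\theta$, convexity together with the descent recursion gives $\norm{\theta^{t+1}-\theta}_2^2\le\norm{\theta^t-\theta}_2^2-2\eta(\aL(\theta^{t+1})-\aL(\theta))$, and summing yields $\aL(\theta^t)-\aL(\theta)\le\frac{1}{2t\eta}\norm{\theta^1-\theta}_2^2$. It remains to pick the comparison point $\theta=\frac{\log t}{\gamma_{2,\lambda}}u_{2,\lambda}$ and bound $\aL(\theta)$: optimality of $u_{2,\lambda}$ in \eqref{for:smoothed_svm} gives $z_i^\top u_{2,\lambda}-cH_\lambda(u_{2,\lambda})\ge\gamma_{2,\lambda}$ for every $i$, and combining this with the near-homogeneity $H_\lambda(a\,u)\le a\norm{u}_1+d\sqrt\lambda$ produces $z_i^\top\theta-cH_\lambda(\theta)\ge\log t-cd\sqrt\lambda$, hence $\aL(\theta)\le e^{cd\sqrt\lambda}/t=\cO(1/t)$. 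Plugging $\norm{\theta}_2^2=\log^2 t/\gamma_{2,\lambda}^2$ and the bounded $\norm{\theta^1}_2$ into the telescoped bound gives the claimed rate $\cO\big(\log^2 t\,(1+2c\lambda^{-1/2})^2/(t\,\gamma_{2,\lambda}^2)\big)$. The main obstacle I anticipate is the self-bounded smoothness step: one must verify that the curvature $c\nabla^2H_\lambda$ is controlled by $c\lambda^{-1/2}$ \emph{uniformly} in $\theta$ (this is the estimate that replaces the origin-avoidance lemma), and the failure of homogeneity of $H_\lambda$ introduces the benign additive $d\sqrt\lambda$ term above that must be absorbed into the constants before one may let $\lambda\to0$ to recover the $\ell_\infty$ statement.
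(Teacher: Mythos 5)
Your proposal is correct and follows essentially the same route as the paper's proof: the closed-form reduction $\aL(\theta)=\frac1n\sum_i\exp(-z_i^\top\theta+cH_\lambda(\theta))$, the uniform self-bounded smoothness estimate $\nabla^2\aL(\theta)\preceq(1+2c\lambda^{-1/2})^2\aL(\theta)I$ in place of the origin-avoidance lemma, the contradiction-plus-induction argument for monotone decrease, and the standard telescoping against the comparator $\frac{\log t}{\gamma_{2,\lambda}}u_{2,\lambda}$. The only cosmetic differences are that the paper bounds $\aL$ at the comparator via $H_\lambda(au)\le aH_\lambda(u)$ for $a\ge1$ rather than your near-homogeneity inequality (avoiding the $e^{cd\sqrt\lambda}$ factor), and your final denominator $\gamma_{2,\lambda}^2$ is actually what the computation yields (the paper's displayed $\gamma_{2,\lambda}$ appears to drop a power).
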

\begin{proof}
By the definition of perturbation set that $\Delta_i = \{\delta: \forall j \in [d], |\delta_{j}| \leq c \frac{h_\lambda(\theta_j)}{|\theta_j|}\}$, we have
\begin{align*}
\aL (\theta) = \frac{1}{n} \sum_{i=1}^n \exp \rbr{-y_i x_i^\top \theta + c H_\lambda(\theta) }.
\end{align*}
By some simple calculation, we have
\begin{align*}
 \nabla H_\lambda (\theta) = \rbr{\frac{\theta_1}{\sqrt{\theta_1^2 + \lambda}}, \ldots, \frac{\theta_d}{\sqrt{\theta_d^2 + \lambda}}}, \,
 \nabla^2 H_\lambda (\theta) = \diag \rbr{ \frac{\lambda}{(\theta_1^2 + \lambda)^{3/2}}, \ldots, \frac{\lambda}{(\theta_d^2 + \lambda)^{3/2}} }.
\end{align*}
Then, it holds that
\begin{align*}
\nabla \aL(\theta) & = \frac{1}{n} \sum_{i=1}^n \exp \rbr{-z_i^\top \theta + c H_\lambda(\theta)} \rbr{-z_i + c \nabla H_\lambda (\theta)},\\
\nabla^2 \aL(\theta) & = \frac{1}{n} \sum_{i=1}^n \exp \rbr{-z_i^\top \theta + c H_\lambda(\theta)} \rbr{z_i z_i^\top + c^2 \nabla H_\lambda (\theta) \nabla H_\lambda (\theta)^\top - 2z_i^\top \nabla H_\lambda (\theta) + c \nabla^2 H_\lambda (\theta)}.
\end{align*}
It can  be verified that $\nabla^2 \aL(\theta) \leq  (1+ \frac{2c}{\sqrt{\lambda}})^2 \aL(\theta) I$. By Talyer expansion, we have
\begin{align}\label{ineq: contrad1}
\aL(\theta^{t+1}) \leq \aL(\theta^t) - \eta \norm{ \nabla \aL(\theta^t)}_2^2 + (1+ \frac{2c}{\sqrt{\lambda}})^2 \frac{ \eta^2}{2} \max \{\aL(\theta^t), \aL(\theta^{t+1}) \} \norm{ \nabla \aL(\theta^t)}_2^2.
\end{align}
Now we  show that $\aL(\theta^{t+1}) \geq \aL(\theta^t)$ does not hold when  $\eta \leq \frac{1}{(1+2c\lambda^{-1/2})^2 \aL(\theta^t)}$. Suppose the contrary holds. By (\ref{ineq: contrad1}), we have
\begin{align*}
\aL(\theta^{t+1}) \leq \rbr{ 1 -\frac{\eta^2 \norm{ \nabla \aL(\theta^t)}_2^2
}{2}  (1+ \frac{2c}{\sqrt{\lambda}})^2 }^{-1} \rbr{ \aL(\theta^t) -  \eta \norm{ \nabla \aL(\theta^t)}_2^2} < \aL(\theta^t).
\end{align*}
where the last inequality holds by $\eta = \frac{1}{(1+2c\lambda^{-1/2})^2 \aL(\theta^t)}$. Hence we obtain a  contradiction.

Note that $\aL(\theta^0) = 1$, and if $\eta \leq \frac{1}{(1+2c\lambda^{-1/2})^2 }$,  $\eta \leq \frac{1}{(1+2c\lambda^{-1/2})^2 \aL(\theta^t)}$ holds for $t=0$, and $\aL(\theta^1) \leq 1$. Consequently, we can inductively show that 
$\aL(\theta^t) \leq 1$ for all $t$, and $\eta \leq \frac{1}{(1+2c\lambda^{-1/2})^2 \aL(\theta^t)}$ always holds if we let $\eta = \frac{1}{(1+2c\lambda^{-1/2})^2 }$.

By the choice of $\eta$, we obtain the following recursion taht
\begin{align}\label{ineq:recursion}
\aL(\theta^{t+1}) & \leq \aL(\theta^t) - \eta \norm{ \nabla \aL(\theta^t)}_2^2 + (1+ \frac{2c}{\sqrt{\lambda}})^2 \frac{ \eta^2 \aL(\theta^t)}{2}  \norm{ \nabla \aL(\theta^t)}_2^2 \\
& = \aL(\theta^t) - \frac{\eta}{2} \norm{ \nabla \aL(\theta^t)}_2^2. 
\end{align}
Using the previous recursion we have that for any $\theta \in \mathbb{R}^d$,
\begin{align*}
\norm{\theta^{t+1} - \theta}_2^2 &= \norm{\theta^t - \theta}_2^2 - 2\eta \inner{\nabla \aL(\theta^t)}{\theta^t - \theta} + \eta^2 \norm{\nabla \aL(\theta^t)}_2^2 \\
& \leq \norm{\theta^t - \theta}_2^2 - 2\eta \rbr{ \aL(\theta^t) - \aL(\theta)} + 2\eta \rbr{\aL(\theta^t) - \aL(\theta^{t+1})} \\
& =  \norm{\theta^t - \theta}_2^2 - 2\eta \rbr{ \aL(\theta^{t+1}) - \aL(\theta)},
\end{align*}
where  the second inequality holds by convexity and (\ref{ineq:recursion}). Summing up the previous inequality from $s=0$ to $s=t-1$ and by $\aL(\theta^{s+1}) \leq \aL(\theta^s)$, we have
\begin{align*}
\aL(\theta^t) - \aL(\theta) \leq \frac{1}{2t\eta} \norm{\theta}_2^2.
\end{align*}
Taking $\theta = \frac{\log t}{\gamma_{2,\lambda}} u_{2, \lambda} $, we have
\begin{align*}
\aL(\theta) & = \frac{1}{n} \sum_{i=1}^n 
\max_{\delta_i \in \Delta_i(\lambda)} \exp \rbr{-y_i (x_i+\delta_i)^\top \theta } \\
& = \frac{1}{n} \sum_{i=1}^n 
\max_{\delta_i \in \Delta_i(\lambda)} \exp \rbr{-y_i (x_i+\delta_i)^\top \frac{\log t}{\gamma_{2,\lambda}} u_{2, \lambda} }
 \leq \frac{1}{t}.
\end{align*}
where  the last inequality holds by $ \max_{\delta_i \in \Delta_i} y_i (x_i + \delta_i)^\top u_{2,\lambda} \geq \gamma_{2,\lambda}$.
Hence we  obtain

\begin{align*}
\aL(\theta^t) \leq \frac{1}{t}  + \frac{\log^2 t}{t \gamma_{2, \lambda} \eta} = \mathcal{O} \rbr{\frac{\log^2 t (1+2c\lambda^{-1/2})^2}{t\gamma_{2, \lambda}}}.
\end{align*}
\end{proof}

Before showing parameter convergence, we need the following lemma which is a generalization of Lemma 10 in \citet{gunasekar:implicitbias},  but with much simpler proof.

\begin{lemma}\label{lemma:grad_loss}
Fix $c< \gamma_{2, \lambda}$, for any $\theta \in \RR^d$, we have
\begin{align*}
 \norm{\nabla \aL(\theta)}_2 \geq \aL(\theta) \gamma_{2, \lambda}.
\end{align*}
\end{lemma}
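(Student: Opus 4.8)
The plan is to mirror the proof of Lemma \ref{lemma:innerbound_lq}, but now to lower bound the \emph{magnitude} of the gradient by testing the negative gradient against the unit vector $u_{2,\lambda}$. Since $\norm{u_{2,\lambda}}_2 = 1$, Cauchy--Schwarz gives $\norm{\nabla \aL(\theta)}_2 \geq \inner{-\nabla \aL(\theta)}{u_{2,\lambda}}$, so it suffices to show that this inner product is at least $\aL(\theta)\,\gamma_{2,\lambda}$. First I would write the gradient in the explicit form already derived in the proof of Theorem \ref{thrm:smooth_convergence}: with $z_i = y_i x_i$,
\[
-\nabla \aL(\theta) = \frac{1}{n}\sum_{i=1}^n \exp\rbr{-z_i^\top \theta + c H_\lambda(\theta)}\rbr{z_i - c\,\nabla H_\lambda(\theta)}.
\]
Pairing with $u_{2,\lambda}$ and using that the exponential weights are nonnegative and that $\frac1n\sum_i \exp(-z_i^\top\theta + cH_\lambda(\theta)) = \aL(\theta)$, the task reduces to the per-sample estimate $\inner{z_i - c\,\nabla H_\lambda(\theta)}{u_{2,\lambda}} \geq \gamma_{2,\lambda}$ for every $i$.

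The key step is a coordinatewise comparison between $\nabla H_\lambda$ and $H_\lambda$. Carrying out the same $\ell_\infty$ dualization of the inner perturbation that produced the closed form for $\aL$, the margin of any unit vector $\theta$ against $\Delta_i(\lambda)$ equals $z_i^\top\theta - cH_\lambda(\theta)$; hence by the defining optimality of $u_{2,\lambda}$ we have $z_i^\top u_{2,\lambda} - cH_\lambda(u_{2,\lambda}) \geq \gamma_{2,\lambda}$ for all $i$. It therefore remains to establish $\inner{\nabla H_\lambda(\theta)}{u_{2,\lambda}} \leq H_\lambda(u_{2,\lambda})$. This follows entry by entry: since $h_\lambda'(t) = t/\sqrt{t^2+\lambda}$ obeys $\abs{h_\lambda'(t)} < 1$, each summand satisfies $h_\lambda'(\theta_j)\,(u_{2,\lambda})_j \leq \abs{(u_{2,\lambda})_j} \leq \sqrt{(u_{2,\lambda})_j^2 + \lambda} = h_\lambda\big((u_{2,\lambda})_j\big)$, and summing over $j$ gives the bound. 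Combining with $c>0$, $\inner{z_i - c\,\nabla H_\lambda(\theta)}{u_{2,\lambda}} \geq z_i^\top u_{2,\lambda} - cH_\lambda(u_{2,\lambda}) \geq \gamma_{2,\lambda}$, which is exactly what is needed.

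I expect the only real subtlety to be the comparison $\inner{\nabla H_\lambda(\theta)}{u_{2,\lambda}} \le H_\lambda(u_{2,\lambda})$, which plays the role that positive homogeneity and duality play for a genuine norm (in Lemma \ref{lemma:innerbound_lq} the term $\inner{\partial\norm{\theta}_p}{u_{2,q}}$ is bounded directly by $\norm{u_{2,q}}_p$). Because $H_\lambda$ is only a smooth surrogate for $\norm{\cdot}_1$ and is not positively homogeneous, duality is unavailable; instead the slack $\abs{h_\lambda'}<1$ supplies the inequality cheaply. Once this is in place, the weighted sum followed by a single application of Cauchy--Schwarz closes the argument, and this is precisely why the proof is shorter than that of Lemma~10 in \citet{gunasekar:implicitbias}.
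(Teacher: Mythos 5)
Your proof is correct, and it follows the same overall skeleton as the paper's: pair $-\nabla\aL(\theta)$ with the unit vector $u_{2,\lambda}$, establish a per-sample margin bound of $\gamma_{2,\lambda}$, and finish with Cauchy--Schwarz. The difference is in how the per-sample bound is justified, and here your version is actually the more careful one. The paper writes $-\nabla\aL(\theta)$ as a nonnegative combination of the worst-case perturbed points $y_i\tilde{x}_i$ and asserts $\inner{y_i\tilde{x}_i}{u_{2,\lambda}}\geq\gamma_{2,\lambda}$ ``by the definition of $\gamma_{2,\lambda}$ and $u_{2,\lambda}$.'' That step is not immediate, because the perturbation set $\Delta_i(\lambda)$ depends on $\theta$: the perturbation realized at the current iterate $\theta$ need not lie in the perturbation set associated with $u_{2,\lambda}$, so the margin guarantee of $u_{2,\lambda}$ does not apply to it verbatim (and, for the same reason, the Danskin-style representation of the gradient via $y_i\tilde{x}_i$ does not literally coincide with the gradient of the closed form $\frac1n\sum_i\exp\rbr{-z_i^\top\theta+cH_\lambda(\theta)}$). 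You instead work directly with the closed-form gradient, reduce the claim to $\inner{\nabla H_\lambda(\theta)}{u_{2,\lambda}}\leq H_\lambda(u_{2,\lambda})$, and prove that coordinatewise from $\abs{h_\lambda'(t)}<1$ and $\abs{s}\leq\sqrt{s^2+\lambda}=h_\lambda(s)$; combined with $z_i^\top u_{2,\lambda}-cH_\lambda(u_{2,\lambda})\geq\gamma_{2,\lambda}$ this gives exactly the needed per-sample bound. In short: same strategy, but you supply the one genuinely nontrivial inequality that the paper's one-line proof elides, which is the correct substitute for the homogeneity/duality argument available when the regularizer is a true norm.
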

\begin{proof}
\begin{align*}
- \nabla \aL(\theta) = \frac{1}{n} \sum_{i=1}^n \exp \rbr{-y_i \tilde{x_i}} y_i \tilde{x_i}.
\end{align*}
where $\tilde{x_i} = \argmin_{x_i' - x_i \in \Delta_i(\lambda)} y_i (x_i')^\top \theta$.
Then by the definition of $\gamma_{2, \lambda}$ and $u_{2, \lambda}$ \eqref{for:smoothed_svm},  we have
$$
 \inner{y_i \tilde{x_i}}{u_{2, \lambda}} \geq \gamma_{2,\lambda}
$$

From which we obtain $\inner{-\nabla \aL(\theta)}{u_{2, \lambda}} \geq \aL(\theta) \gamma_{2, \lambda}$, the claim follows by Cauchy-Schwarz inequality.
\end{proof}

\begin{theorem}
Under the same setting as in Theorem \ref{thrm:smooth_convergence}, we have
\begin{align*}
\lim_{t \to \infty} \frac{\theta^t}{\norm{\theta^t}_2} = u_{2,\lambda}.
\end{align*}
\end{theorem}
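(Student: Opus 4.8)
The plan is to mirror the proof of Theorem~\ref{thrm:lqparameterconvergence}, with the smooth surrogate $H_\lambda$ replacing $\norm{\cdot}_p$ so that every estimate remains valid at $\theta^0 = 0$. First I would record the explicit form of the inner minimization: since $\Delta_i(\lambda)$ is a coordinate box, $\min_{\delta_i \in \Delta_i(\lambda)} y_i(x_i+\delta_i)^\top\theta = y_i x_i^\top\theta - cH_\lambda(\theta)$, so that $\aL(\theta) = \frac1n\sum_i \exp\rbr{-y_i x_i^\top\theta + cH_\lambda(\theta)}$ and the robust margin of a direction $\theta$ is $g(\theta) := \min_i \rbr{y_i x_i^\top\theta - cH_\lambda(\theta)}/\norm{\theta}_2$, which satisfies $g(\theta)\le\gamma_{2,\lambda}$ by the very definition of $\gamma_{2,\lambda}$ in \eqref{for:smoothed_svm}. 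Starting from the descent recursion $\aL(\theta^{t+1}) \le \aL(\theta^t) - \tfrac{\eta}{2}\norm{\nabla\aL(\theta^t)}_2^2$ of Theorem~\ref{thrm:smooth_convergence}, I would pass to the multiplicative form $\aL(\theta^{t+1}) \le \aL(\theta^t)\exp\rbr{-\eta\gamma_{2,\lambda}\norm{\nabla\aL(\theta^t)}_2 + (1+2c\lambda^{-1/2})^2\tfrac{\eta^2}{2}\norm{\nabla\aL(\theta^t)}_2^2}$, using $1+x\le e^x$ together with the gradient lower bound $\norm{\nabla\aL(\theta)}_2 \ge \aL(\theta)\gamma_{2,\lambda}$ from Lemma~\ref{lemma:grad_loss}.

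Next I would iterate this bound from $s=0$ (legitimate here precisely because $H_\lambda$ is differentiable at the origin, with $\aL(\theta^0)=1$), obtaining $\aL(\theta^t) \le \exp\rbr{-\eta\gamma_{2,\lambda}\sum_{s=0}^{t-1}\norm{\nabla\aL(\theta^s)}_2 + R_t}$, where the quadratic remainder $R_t = (1+2c\lambda^{-1/2})^2\tfrac{\eta^2}{2}\sum_{s<t}\norm{\nabla\aL(\theta^s)}_2^2$ is uniformly bounded: telescoping the descent recursion gives $\tfrac{\eta}{2}\sum_{s<t}\norm{\nabla\aL(\theta^s)}_2^2 \le \aL(\theta^0)=1$, and since $\eta(1+2c\lambda^{-1/2})^2\le 1$ we conclude $R_t \le 1$. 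Lower-bounding $\aL(\theta^t)$ by a single summand and taking logarithms then yields, for each $i$, $y_i x_i^\top\theta^t - cH_\lambda(\theta^t) \ge \eta\gamma_{2,\lambda}\sum_{s<t}\norm{\nabla\aL(\theta^s)}_2 - 1 - \log n$. Dividing by $\norm{\theta^t}_2$, invoking the gradient-descent norm bound $\norm{\theta^t}_2 = \eta\norm{\sum_{s<t}\nabla\aL(\theta^s)}_2 \le \eta\sum_{s<t}\norm{\nabla\aL(\theta^s)}_2$, and using $\norm{\theta^t}_2\to\infty$ (which holds because $\aL(\theta^t)\to0$ while $\aL$ is strictly positive and continuous, hence bounded below on every bounded set), I would obtain $\liminf_{t\to\infty} g(\theta^t) \ge \gamma_{2,\lambda}$. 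Combined with $g(\theta^t)\le\gamma_{2,\lambda}$, this forces $g(\theta^t)\to\gamma_{2,\lambda}$, i.e. $\theta^t/\norm{\theta^t}_2$ is a maximizing sequence for the problem defining $\gamma_{2,\lambda}$.

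Finally I would close by compactness and uniqueness: the normalized iterates lie on the unit sphere, so any subsequential limit $\overline u$ attains the maximin value $\gamma_{2,\lambda}$; since the associated robust SVM has a strongly convex objective, its optimal direction $u_{2,\lambda}$ is unique, forcing $\overline u = u_{2,\lambda}$ and hence $\theta^t/\norm{\theta^t}_2 \to u_{2,\lambda}$. I expect the main obstacle to be the correct bookkeeping of the robust margin, since $\Delta_i(\lambda)$ depends on the point at which it is evaluated and $H_\lambda$ is not positively homogeneous: one must verify both that the quantity controlled along the trajectory is exactly $g(\theta^t)$ and that its limiting inequality pins down the \emph{unique} maximizing direction of this $\theta$-dependent functional. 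The explicit inner-minimization formula, continuity of $H_\lambda$, and the strong convexity of the robust SVM (equivalently, the reduction $\lim_{\lambda\to0}u_{2,\lambda}=u_{2,\infty}$) are what I would use to make this step rigorous.
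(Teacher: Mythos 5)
Your proposal follows essentially the same route as the paper's own proof: the multiplicative form of the descent recursion combined with the gradient lower bound of Lemma~\ref{lemma:grad_loss}, the telescoped bound on the quadratic remainder, the single-summand lower bound on the robust margin after taking logarithms, division by $\norm{\theta^t}_2$ with the triangle-inequality bound $\norm{\theta^t}_2 \le \eta\sum_{s<t}\norm{\nabla\aL(\theta^s)}_2$, and finally uniqueness of the robust SVM solution. Your closing step is in fact slightly more careful than the paper's (you extract subsequential limits on the sphere before invoking uniqueness, and you explicitly flag the non-homogeneity of $H_\lambda$, which the paper glosses over), but the argument is the same.
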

\begin{proof}
Recall that in Theorem \ref{thrm:smooth_convergence} we showed in (\ref{ineq:recursion}) that
\begin{align*}
\aL(\theta^{t+1}) & \leq \aL(\theta^t) - \eta \norm{ \nabla \aL(\theta^t)}_2^2 + (1+ \frac{2c}{\sqrt{\lambda}})^2 \frac{ \eta^2 \aL(\theta^t)}{2}  \norm{ \nabla \aL(\theta^t)}_2^2  \\
& \leq \exp \rbr{ -\eta \frac{\norm{ \nabla \aL(\theta^t)}_2^2}{\aL(\theta^t)} + (1+ \frac{2c}{\sqrt{\lambda}})^2 \frac{ \eta^2 }{2}  \norm{ \nabla \aL(\theta^t)}_2^2} \\
&\leq \exp \rbr{ -\eta \gamma_{2,\lambda} \norm{ \nabla \aL(\theta^t)}_2 + (1+ \frac{2c}{\sqrt{\lambda}})^2 \frac{ \eta^2 }{2}  \norm{ \nabla \aL(\theta^t)}_2^2} ,
\end{align*}
where the last inequality holds by Lemma \ref{lemma:grad_loss}.
Applying the previous inequality recursively from $s=0$ to $t-1$, we have
\begin{align*}
\aL(\theta^t) \leq \exp \rbr{ -\eta \gamma_{2,\lambda} \sum_{t=0}^{t-1} \norm{ \nabla \aL(\theta^s)}_2 + \sum_{s=0}^{t-1}(1+ \frac{2c}{\sqrt{\lambda}})^2 \frac{ \eta^2 }{2}  \norm{ \nabla \aL(\theta^s)}_2^2}.
\end{align*}
Now by (\ref{ineq:recursion}), we have
\begin{align*}
\sum_{s=0}^{t-1}(1+ \frac{2c}{\sqrt{\lambda}})^2 \frac{ \eta^2 }{2}  \norm{ \nabla \aL(\theta^s)}_2^2 = 
\sum_{s=0}^{t-1} \frac{ \eta }{2}  \norm{ \nabla \aL(\theta^s)}_2^2 = \aL(\theta^0) - \aL(\theta^t) \leq 1,
\end{align*}
which yields
\begin{align*}
\aL(\theta^t) \leq \exp \rbr{ -\eta \gamma_{2,\lambda} \sum_{s=0}^{t-1} \norm{ \nabla \aL(\theta^s)}_2 + 1}.
\end{align*}
Next for all $i \in [n]$, we have
\begin{align*}
\exp \rbr{- \min_{\delta_i \in \Delta_i(\lambda)} y_i (x_i + \delta_i)^\top \theta^t} & = \exp \rbr{-y_i x_i^\top \theta + c H_\lambda (\theta^t)} \\
& \leq n \exp \rbr{ -\eta \gamma_{2,\lambda} \sum_{s=0}^{t-1} \norm{ \nabla \aL(\theta^s)}_2 + 1},
\end{align*}
which implies 
\begin{align*}
\min_{\delta_i \in \Delta_i(\lambda)} y_i (x_i + \delta_i)^\top \theta^t \geq \eta \gamma_{2, \lambda} \sum_{s=0}^{t-1} \norm{ \nabla \aL(\theta^s)}_2 - 1 - \log n.
\end{align*}
Dividing both sides by $\norm{\theta}_2$, and since $\lim_{t \to \infty} \aL(\theta^t) = 0$, we  have $\lim_{t \to \infty}\norm{\theta^t}_2 = \infty$. Hence,
\begin{align*}
\lim_{t \to \infty} \min_{\delta_i \in \Delta_i(\lambda)} y_i (x_i + \delta_i)^\top \frac{\theta^t}{\norm{\theta^t}_2} & \geq 
\lim_{t \to \infty} \eta \gamma_{2, \lambda} \sum_{s=0}^{t-1} \frac{\norm{ \nabla \aL(\theta^s)}_2}{\norm{\theta^t}_2} - \frac{1 + \log n}{\norm{\theta^t}_2} \geq \gamma_{2,\lambda},
\end{align*}
where  the last inequality holds by $\norm{\theta^t}_2 \leq \eta \sum_{s=0}^{t-1} \norm{ \nabla \aL(\theta^s)}_2$.

In summary, we have 
\begin{align*}
 \min_{\delta_i \in \Delta_i(\lambda)} y_i (x_i + \delta_i)^\top  \lim_{t \to \infty} \frac{\theta^t}{\norm{\theta^t}_2}
\geq \gamma_{2,\lambda}.
\end{align*}
Hence $\lim_{\theta^t}{\norm{\theta^t}_2}$ is  a solution to (\ref{for:smoothed_svm}). Note that the solution to (\ref{for:smoothed_svm}) is unique since it is equivalent to
\begin{align*}
\min_{\theta \in \mathbb{R}^d} \frac{1}{2} \norm{\theta}_2^2 \quad \text{s.t.} \quad \min_{\delta_i \in \Delta_i(\lambda)} y_i(x_i + \delta_i)^\top \theta \geq 1, \forall i =1, \ldots, n.
\end{align*}
We thus conclude that $\lim_{t \to \infty} \frac{\theta^t}{\norm{\theta^t}_2} = u_{2, \lambda}$.
\end{proof}
\vspace{-0.2 in}
To summarize, we have shown that for all $\lambda > 0$, $\lim_{t \to \infty} \frac{\theta^t}{\norm{\theta^t}_2} = u_{2,\lambda}$. The $\ell_\infty$-norm perturbation corresponds to the case when $\lambda \to 0$, it is natural to conclude that for $\ell_\infty$ perturbation, we have
$\lim_{t \to \infty} \frac{\theta^t}{\norm{\theta^t}_2} = u_{2,\infty}$. The discussion for $q=1$ follows similar argument, hence we omit the details here.

\section{Additional Experiments on Perturbation Level and Speed-up}\label{section:addexperiment}
We provide additional experiments on the connection of perturbation level $c$ and the speed-up effect of adversarial training for neural networkds. 
We run GDAT with $\ell_\infty$-norm perturbation.
The setup of the experiments is exactly the same as the setup in Section \ref{section:experiment}.
We will vary the perturbation level $c$ used in GDAT algorithm in $\{0.1, 0.15, 0.2\}$. 
\begin{figure}[h!]
  \includegraphics[height=0.37\textheight]{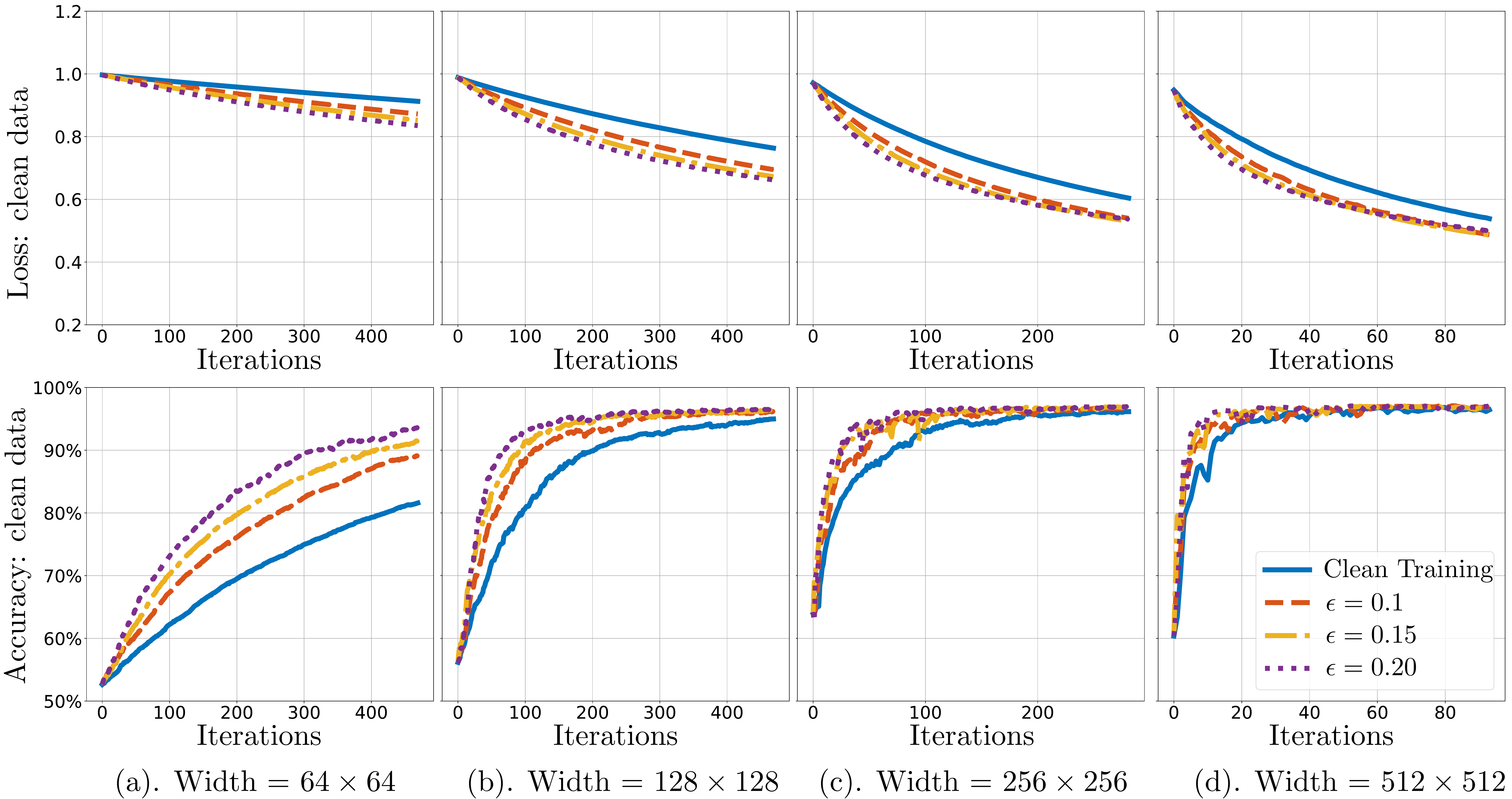}
  \caption{GDAT with Different Perturbation Level: Clean Training Loss}
  \label{fig:nn_largeperturbation}
\end{figure}
From Figure \ref{fig:nn_largeperturbation} we could see that GDAT indeed accelerates convergence of loss and accuracy on clean training samples. Moreoever, the acceleration effect is stronger when we use larger perturbation level, and this relationship is consistent across different width of hidden layer.

Similar speed-up effects on the test loss and test accuracy evaluated on clean test samples are also observed for GDAT. From Figure \ref{fig:nn_largeper_test}, we see that the speed-up effects become stronger when we use larger perturbation level, and this relationship is consistent across different width of hidden layer. 
\begin{figure}[h!]
  \includegraphics[height=0.37\textheight]{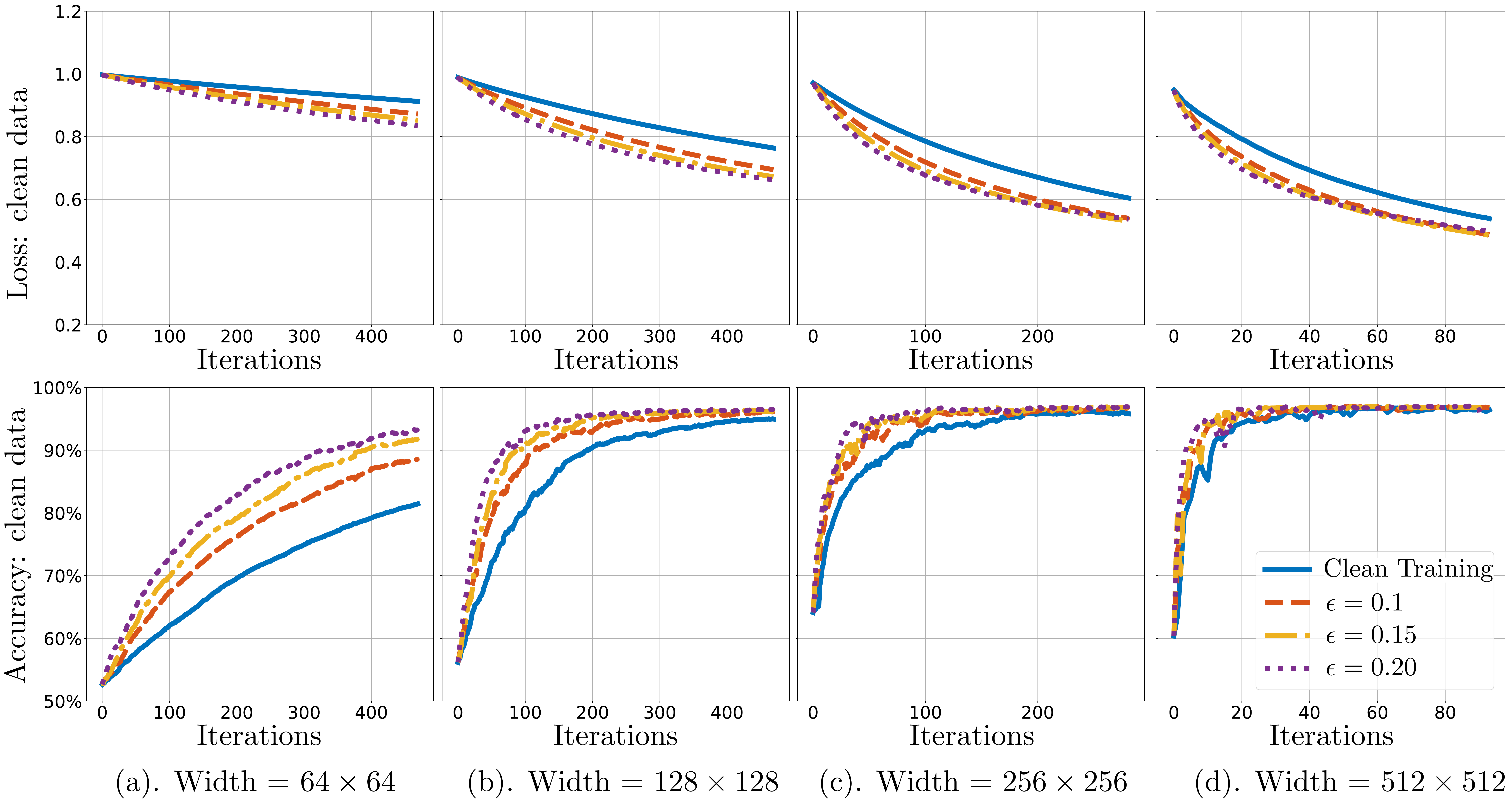}
  \caption{GDAT with Different Perturbation Level: Clean Test Loss}
  \label{fig:nn_largeper_test}
\end{figure}
 Traditionally, the benefit of adversarial training is understood as two fold: 1. it improves the robustness of the learning algorithm, i.e., the solution has better loss toward adversarilly perturbed seample; 2. it has better generalization ability. Our experiments demonstrate a third property of adverserial training that is not known in literature before, i.e., adversarial training accelerates convergence. 

\end{document}